\newtheorem{theorem}{Theorem}
\newtheorem{lemma}[theorem]{Lemma}
\newtheorem{corollary}[theorem]{Corollary}
\theoremstyle{remark}
\newtheorem{remark}{Remark}
\theoremstyle{definition}
\newtheorem{conjecture}{Conjecture}
\definecolor{human}{HTML}{1F77B5}
\definecolor{uniform}{HTML}{FF7F0F}
\definecolor{clean}{HTML}{2CA02D}
\newcommand{\rebuttal}[1]{#1}
\newcommand{\reals}{\mathbb{R}}
\newcommand{\br}[1]{\left(#1\right)}
\tikzset{cross/.style={cross out, draw, 
         minimum size=4,%
         inner sep=0pt, outer sep=0pt}}
     \definecolor{cadmiumgreen}{rgb}{0.0, 0.42, 0.24}
\pgfplotsset{compat=1.11}
\tikzset{
    right angle quadrant/.code={
        \pgfmathsetmacro\quadranta{{1,1,-1,-1}[#1-1]}     %
        \pgfmathsetmacro\quadrantb{{1,-1,-1,1}[#1-1]}},
    right angle quadrant=1, %
    right angle length/.code={\def\rightanglelength{#1}},   %
    right angle length=2ex, %
    right angle symbol/.style n args={3}{
        insert path={
            let \p0 = ($(#1)!(#3)!(#2)$) in     %
                let \p1 = ($(\p0)!\quadranta*\rightanglelength!(#3)$), %
                \p2 = ($(\p0)!\quadrantb*\rightanglelength!(#2)$) in %
                let \p3 = ($(\p1)+(\p2)-(\p0)$) in  %
            (\p1) -- (\p3) -- (\p2)
        }
    }
}
\title{A law of adversarial risk, interpolation, and label noise}
\date{}
\author{
  \hspace{-0.2cm}
    Daniel Paleka \thanks{Equal contribution.} \\
	ETH Zurich \\
	\texttt{daniel.paleka@inf.ethz.ch} \\
	\And
	Amartya Sanyal \footnotemark[1] \\
	ETH AI Center,~ETH Zurich \\
	\texttt{amartya.sanyal@ai.ethz.ch}
}
\begin{document}
\setlength{\parindent}{0pt}
\setlength{\parskip}{0.5pc}

\maketitle
\begin{abstract}
  \noindent
In supervised learning, it has been shown that label noise in the data
can be interpolated without penalties on test accuracy.  We show that
interpolating label noise induces adversarial vulnerability, and prove
the first theorem showing the relationship between label noise and
adversarial risk for any data distribution.  
Our results are almost tight if we do not make any assumptions on the
inductive bias of the learning algorithm. We then investigate how
different components of this problem affect this result including
properties of the distribution.
We also discuss non-uniform label noise distributions; and prove a new theorem
showing uniform label noise induces nearly as large an adversarial risk as the
worst poisoning with the same noise rate. 
Then, we provide theoretical and empirical evidence that uniform label noise is more harmful than
typical real-world label noise.  Finally, we show how
inductive biases amplify the effect of label noise and argue the need
for future work in this direction.\looseness=-1

\end{abstract}

\section{Introduction}
\label{sec:introduction}

Label noise is ubiquitous in data collected from the real world. Such
noise can be a result of both malicious intent as well as human error.
The well-known work of \citet{zhang2016understanding} observes that training
overparameterised neural networks with gradient descent can memorize
large amounts of label noise without increased test error.  Recently,
\citet{bartlett2020benign} investigated this phenomenon and termed
it~\emph{benign overfitting}: perfect interpolation of the noisy
training dataset still leads to satisfactory generalization for
overparameterized models. A long series of
works~\citep{donhauser2022fast,hastie2022surprises,muthukumar2020classification}
focused on providing generalization guarantees for models that
interpolate data under uniform label noise. This suggests that
noisy training data does not hurt the test error of overparameterized
models.

However, when deploying machine learning systems in the real world, it is
not enough to guarantee low test error.
Adversarial vulnerability is a practical security
threat~\citep{kurakin2016adversarial,sharif2016accessorize,evtimov2017robust}
for deploying machine learning algorithms in critical environments.
An adversarially vulnerable classifier, that is accurate on the test
distribution, can be forced to err on carefully perturbed inputs even
when the perturbations are small. This has motivated a large
body of work towards improving the \textit{adversarial robustness} of
neural networks~\citep{goodfellow2014explaining,
papernot2016distillation, tramer2018ensemble, sanyal2018robustness,
parseval}. Despite the empirical advances, the theoretical guarantees
on robust defenses are still poorly understood.

Consider the setting of uniformly random label noise. Under certain
distributional assumptions,~\citet{sanyal2020openreview} claim that
with moderate amount of label noise, when training classifiers to zero
training error, the adversarial risk is always large, even when the
test error is low. \rebuttal{Experimentally, this is supported
by~\citet{zhu2021understanding}, who showed that common methods for
reducing adversarial risk like adversarial training in fact does not
memorise label noise.} However, it is not clear whether
their
distributional assumptions are realistic, or if their result is tight.
To deploy machine learning models responsibly, it is important to
understand the extent to which a common phenomenon like label noise
can negatively impact adversarial robustness.  In this work, we
improve upon previous theoretical results, proving a stronger result
on how label noise {\em guarantees} adversarial risk for large enough
sample size. %

On the other hand, existing experimental results~\citep{sanyal2020openreview} 
seem to suggest that neural networks suffer from large adversarial risk 
even in the small sample size regime. 
Our results show that this phenomenon cannot be explained
without further assumptions on the data distributions,
the learning algorithm, or the machine learning model. 
While specific biases of machine learning models and algorithms
~(referred to as inductive bias) have usually played a ``positive'' role in machine
learning literature~\citep{vaswani2017attention,van2017multiscale,mingard2020neural},
we show how some biases can make the model more vulnerable to
adversarial risks under noisy interpolation. %

Apart from the data distribution and the inductive biases, we also investigate the
role of the label noise model.
Uniform label noise, also known as random classification
noise~\citep{angluin1988learning}, is a natural choice for modeling label noise, 
but it is neither the most realistic nor the most adversarial noise model. 
Yet, our results show that when it comes to guaranteeing
a lower bound on adversarial risk for interpolating models, uniform
label noise model is not much weaker than the optimal poisoning adversary. 
Our experiments indicate that natural label noise \citep{wei2022learning}
is not as bad for adversarial robustness as uniform label noise.
Finally, we also attempt to understand the conditions under which such benign~(natural) label noise arises.

\paragraph{Overview }

First, we introduce notation necessary to understand the rest of the paper.  
Then, we prove a theoretical result~(\Cref{thm:adv-risk-lower-bound})
on adversarial risk caused by label noise, significantly improving
upon previous results (\Cref{thm:inflabelballs} from
\citet{sanyal2020openreview}).  In fact, our
\Cref{thm:adv-risk-lower-bound} gives the first theoretical guarantee
that adversarial risk is large for all compactly supported input
distributions and all interpolating classifiers, in the presence of
label noise.  Our theorem does not rely on the particular function
class or the training method.  Then, in \Cref{sec:sample-size}, we
show~\Cref{thm:adv-risk-lower-bound} is tight without further
assumptions, but does not accurately reflect empirical observations on
standard datasets.  Our hypothesis is that the experimentally shown
effect of label noise depends on properties of the distribution and
the inductive bias of the function class.  
In \Cref{sec:non-uniform-label-noise}, we prove (\Cref{thm:uniform-vs-poisoning-informal})
that uniform label noise is on the same order of
harmfmul as worst case data poisoning, given a slight increase in
dataset size and adversarial radius.  
We also run experiments in \Cref{fig:all-exp-human}, showing that
mistakes done by human labelers are more benign than the same rate of
uniform noise.  Finally, in \Cref{sec:wrong-function-class}, we show
that the inductive bias of the function class makes the impact of
label noise on adversarial vulnerability much stronger and provide an 
example in~\Cref{thm:repre-par-inter}.

\section{Guaranteeing adversarial risk for noisy interpolators}
  \label{sec:main-theoretical-results} 

\paragraph{Our setting }
Choose a norm $\norm{\cdot}$ on $\RR^d$, for example $\norm{\cdot}_2$ or $\norm{\cdot}_\infty$.
For $\bm{x} \in \RR^d$, let $\mathcal \mathcal{B}_{r}(\bm{x})$ denote the $\norm{\cdot}$-ball of radius $r$ around $\bm{x}$.
Let $\mu$ be a distribution on $\RR^d$ and let $f^* : \mathcal C \to \{0, 1\}$ be a measurable ground truth classifier. 
Then we can define the adversarial risk of any classifier \(f\) with respect to \(f^*,\mu\), 
given an adversary with perturbation budget \(\rho>0\) under the norm \(\norm{\cdot}\), as

\begin{align}
 \label{eq:adversarial-risk}
  \mathcal R_{\mathrm{Adv}, \rho}(f, \mu) = \PP_{\bm{x} \sim \mu} \left[ \exists \bm{z} \in \mathcal{B}_{\rho}(\bm{x}), ~ f^*(\bm{x}) \neq f(\bm{z})) \right].
\end{align}
  
Next, consider a training set $\left( (\bm{z}_1, y_1), \ldots,
(\bm{z}_m, y_m) \right)$ in $\RR^d \times \{0, 1\}$, where the
$\bm{z}_i$ are independently sampled from $\mu$, and each $y_i$ equals
$f^*(\bm{z}_i)$ with probability $1 - \eta$, where \(\eta>0\) is the
label noise rate. Let $f$ be any classifier which correctly interpolates the training set. 
We now state the main theoretical
result of \citet{sanyal2020openreview} so that we can compare our
result with it.

\begin{restatable}[~\citet{sanyal2020openreview}]{theorem}{inflabelballs}
  \label{thm:inflabelballs}
  Suppose that there exist $c_1 \geq c_2 > 0$, $\rho>0$,
  and a finite set $\zeta \subset \RR^d$ satisfying
  \begin{equation}
    \label{eq:balls_density}
    \mu \left(  {\bigcup_{\bm{s}\in\zeta}\mathcal \mathcal{B}_{\rho/2}(\bm{s})}\right)
    \ge c_1
    \quad \text{and} \quad
    \forall \bm{s}\in\zeta,
    ~\mu \left( \mathcal \mathcal{B}_{\rho/2}(\bm{s}) \right)
    \ge \frac{c_2}{\abs{\zeta}}
  \end{equation}
  Further, suppose that each of these balls contains points from a single class.
  Then for $\delta > 0$, when the number of samples 
  $m\ge\frac{\abs{\zeta}}{\eta c_2} \log(\frac{\abs{\zeta}}{\delta})$, 
  with probability $1 - \delta$
  \begin{align}
    \mathcal R_{\mathrm{Adv}, \rho}(f, \mu) \ge c_1.
  \end{align}
\end{restatable}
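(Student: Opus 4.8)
The plan is to show that, with high probability over the draw of the noisy training set, \emph{every} ball $\mathcal{B}_{\rho/2}(\bm{s})$ with $\bm{s}\in\zeta$ contains at least one mislabeled training point, and then to argue that each such ball contributes its whole $\mu$-mass to the adversarial risk, so the total is at least $c_1$.

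\textbf{Step 1 (a mislabeled point lands in every ball).} I would fix $\bm{s}\in\zeta$ and bound the probability that a single sample $(\bm{z}_i,y_i)$ is ``mislabeled inside $\mathcal{B}_{\rho/2}(\bm{s})$'', \ie $\bm{z}_i\in\mathcal{B}_{\rho/2}(\bm{s})$ and $y_i\neq f^*(\bm{z}_i)$. Because the label flip is independent of $\bm{z}_i$ and $\mu(\mathcal{B}_{\rho/2}(\bm{s}))\ge c_2/\abs{\zeta}$ by hypothesis, this probability is at least $\eta c_2/\abs{\zeta}$. Hence the chance that none of the $m$ i.i.d.\ samples is mislabeled inside $\mathcal{B}_{\rho/2}(\bm{s})$ is at most $(1-\eta c_2/\abs{\zeta})^m\le\exp(-m\eta c_2/\abs{\zeta})$, and a union bound over the $\abs{\zeta}$ balls makes the failure probability at most $\abs{\zeta}\exp(-m\eta c_2/\abs{\zeta})$. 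Solving $\abs{\zeta}\exp(-m\eta c_2/\abs{\zeta})\le\delta$ for $m$ yields exactly the stated requirement $m\ge\frac{\abs{\zeta}}{\eta c_2}\log\br{\frac{\abs{\zeta}}{\delta}}$.

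\textbf{Step 2 (each ball is adversarially compromised).} Next, conditioning on the good event, I would fix $\bm{s}\in\zeta$, pick a mislabeled sample $\bm{z}_i\in\mathcal{B}_{\rho/2}(\bm{s})$, and note that $f(\bm{z}_i)=y_i\neq f^*(\bm{z}_i)$ since $f$ interpolates the training set. For any $\bm{x}$ in the support of $\mu$ inside $\mathcal{B}_{\rho/2}(\bm{s})$, the triangle inequality gives $\norm{\bm{x}-\bm{z}_i}\le\norm{\bm{x}-\bm{s}}+\norm{\bm{s}-\bm{z}_i}\le\rho/2+\rho/2=\rho$, so $\bm{z}_i\in\mathcal{B}_\rho(\bm{x})$; and since the ball contains points of a single class, $f^*(\bm{x})=f^*(\bm{z}_i)\neq f(\bm{z}_i)$. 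Thus $\bm{z}=\bm{z}_i$ witnesses the existential clause in \eqref{eq:adversarial-risk}, so $\bm{x}$ is adversarially vulnerable. Every point of $\bigcup_{\bm{s}\in\zeta}\mathcal{B}_{\rho/2}(\bm{s})$, up to a $\mu$-null set, is therefore vulnerable, and since $\mu$ assigns this union mass at least $c_1$, we conclude $\mathcal{R}_{\mathrm{Adv},\rho}(f,\mu)\ge c_1$.

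\textbf{Expected obstacle.} The probabilistic part in Step 1 is a routine coupon-collector/union-bound estimate; the real care is in the bookkeeping of Step 2. One must apply the ``single class'' assumption to the support of $\mu$ within each ball rather than to the whole ball, and check that the two radii add up to exactly $\rho$ so the mislabeled training point is a legal $\rho$-perturbation of $\bm{x}$. A minor point worth a sentence is that overlapping balls in $\zeta$ cause no trouble: a sample landing in several balls only helps, and the union bound is insensitive to the dependence between the per-ball events.
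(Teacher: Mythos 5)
Your proof is correct. Note, however, that the paper does not supply a proof of \Cref{thm:inflabelballs} at all --- it is quoted verbatim from \citet{sanyal2020openreview} as a prior result to set up the comparison with \Cref{thm:adv-risk-lower-bound}, so there is no in-paper argument to compare against. What you have written is the natural and, as far as I can tell, canonical proof: the per-ball probability of a sample being mislabeled and landing in $\mathcal{B}_{\rho/2}(\bm{s})$ is at least $\eta c_2/\abs{\zeta}$ by independence of the label flip from the covariate, the union bound over the $\abs{\zeta}$ balls produces exactly the stated sample-size threshold, the triangle inequality $\rho/2 + \rho/2 = \rho$ makes the mislabeled training point a legal perturbation of every point in the ball, and the single-class assumption on each ball ensures the flipped label disagrees with $f^*$ there. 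Your bookkeeping caveats (apply the single-class hypothesis only on $\supp(\mu)$, overlapping balls do not hurt the union bound) are both correct and worth keeping.

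It is worth observing that your argument is precisely the ``easy case'' of the strategy the paper uses to prove its own improvement, \Cref{thm:adv-risk-lower-bound}: there the balls and their density lower bound are not hypothesized but must be \emph{constructed} via the greedy subcover \Cref{lemma:greedy-subcover}, and the single-class assumption on each ball is removed by first restricting to the class-$0$ region $\mathcal{C}_0$ with a Chernoff step; once those two pieces are in place, the paper's \Cref{lemma:appear-balls} is exactly your Step 1 and the final mass computation is your Step 2. So the two proofs share the same core, and the paper's extra machinery is what buys the removal of the two strong hypotheses in \Cref{eq:balls_density}.
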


This is the first guarantee for adversarial risk caused by label noise
in the literature.  However, \Cref{thm:inflabelballs} has two
extremely strong assumptions:
\begin{itemize}[leftmargin=5mm,nosep]
\item
  The input distribution has mass $c_1$ in a union of balls, each of
  which has mass at least $c_2$;
\item Each ball only contains points from a single class. %
\end{itemize}

It is not clear why such balls would exist for real-world datasets, 
or even MNIST or CIFAR-10. 
\rebuttal{
In \Cref{app:distances}, we give some evidence against the second assumption in particular.
}
In~\Cref{thm:adv-risk-lower-bound}, we remove these
assumptions and show that our guarantees hold for all compactly
supported input distributions, with comparable guarantees on
adversarial risk. %

Let $\mathcal{C}$ be a compact subset of $\RR^d$.
An important quantity in our theorem will be the \emph{covering number} 
$N = N(\rho/2; \mathcal C, \norm{\cdot})$ of $\mathcal C$ in the metric
$\norm{\cdot}$.  
The covering number $N$ is the minimum number of
$\norm{\cdot}$-balls of radius $\rho/2$ such that their union contains
$\mathcal C$. 
For any distribution \(\mu\) on \(\RR^d\), 
denote by \(\mu\left(\mathcal{C}\right) = \PP_{x\sim \mu}\left[x\in \mathcal{C}\right] \) the mass of the
distribution \(\mu\) contained in the compact \(\mathcal{C}\).

\begin{restatable}{theorem}{thmimproved} 
    \label{thm:adv-risk-lower-bound}
    Let $\mathcal{C} \subset \RR^d$ satisfy $\mu(\mathcal{C}) > 0$,
    and let $N = N(\rho/2; \mathcal{C}, \norm{\cdot})$ be its covering number.
    For $\delta > 0$, when the number of samples satisfies
\(m \ge \frac{8N}{\mu(\mathcal C) \eta} \log{\frac{2N}{\delta}}\).
  with probability $1 - \delta$ we have that
  \begin{equation}
    \label{eq:adversarial-risk-lower-bound}
    \mathcal R_{\mathrm{Adv}, \rho}(f, \mu) \ge \frac14 \mu(\mathcal C)
  \end{equation}
  \rebuttal{for any classifier $f$ that interpolates the training set.}
\end{restatable}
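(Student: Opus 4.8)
The plan is to combine one elementary geometric observation with a covering-plus-disjointification argument; unlike \Cref{thm:inflabelballs}, we will not assume the covering balls are class-pure. Fix a minimal $\rho/2$-cover of $\mathcal{C}$ by balls $\mathcal{B}_{\rho/2}(\bm{s}_1),\ldots,\mathcal{B}_{\rho/2}(\bm{s}_N)$, and convert it into a measurable partition of $\mathcal{C}$ by setting $A_j = \big(\mathcal{B}_{\rho/2}(\bm{s}_j)\setminus\bigcup_{k<j}\mathcal{B}_{\rho/2}(\bm{s}_k)\big)\cap\mathcal{C}$, so the $A_j$ are pairwise disjoint with $\bigcup_j A_j=\mathcal{C}$ and $\sum_j\mu(A_j)=\mu(\mathcal{C})$. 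For each $j$ with $\mu(A_j)>0$ pick a \emph{majority class} $c_j\in\{0,1\}$ with $\mu(A_j^{c_j})\ge\tfrac12\mu(A_j)$, where $A_j^{c_j}=\{\bm{x}\in A_j: f^*(\bm{x})=c_j\}$. The geometric observation: if the training set contains a \emph{witness} for cell $j$, i.e.\ a point $(\bm{z}_i,y_i)$ with $\bm{z}_i\in A_j^{c_j}$ and $y_i\ne f^*(\bm{z}_i)$, then every $\bm{x}\in A_j^{c_j}$ is adversarially vulnerable. Indeed $\bm{x}$ and $\bm{z}_i$ both lie in $\mathcal{B}_{\rho/2}(\bm{s}_j)$, so $\norm{\bm{x}-\bm{z}_i}\le\rho$ by the triangle inequality, hence $\bm{z}_i\in\mathcal{B}_\rho(\bm{x})$; and \emph{any} classifier $f$ interpolating the training set has $f(\bm{z}_i)=y_i=1-c_j\ne c_j=f^*(\bm{x})$. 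This step is where the ``no assumption on the function class'' feature comes from: the event we condition on below is defined purely in terms of the samples.

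Next, the probabilistic step. Since the label flip of sample $i$ is independent of $\bm{z}_i$ and occurs with probability $\eta$, a single sample is a witness for cell $j$ with probability exactly $\eta\,\mu(A_j^{c_j})\ge\tfrac{\eta}{2}\mu(A_j)$, and by i.i.d.\ the probability that cell $j$ has \emph{no} witness is at most $\big(1-\tfrac{\eta}{2}\mu(A_j)\big)^m\le\exp\!\big(-\tfrac{\eta m}{2}\mu(A_j)\big)$. To neutralise low-mass cells, call $j$ \emph{large} if $\mu(A_j)\ge\mu(\mathcal{C})/(2N)$; there are at most $N$ of them, and they carry mass at least $\mu(\mathcal{C})-N\cdot\mu(\mathcal{C})/(2N)=\tfrac12\mu(\mathcal{C})$. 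For a large cell the no-witness probability is at most $\exp\!\big(-\tfrac{\eta m\,\mu(\mathcal{C})}{4N}\big)$, which by the choice $m\ge\tfrac{8N}{\mu(\mathcal{C})\eta}\log\tfrac{2N}{\delta}$ is at most $\exp(-2\log\tfrac{2N}{\delta})\le\tfrac{\delta}{2N}$. A union bound over the at most $N$ large cells then gives: with probability at least $1-\delta$, every large cell has a witness.

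On that event $A_j^{c_j}$ lies in the adversarially vulnerable set for every large $j$, and since the $A_j$ are disjoint so are the $A_j^{c_j}$, whence
\[
\mathcal{R}_{\mathrm{Adv},\rho}(f,\mu)\ \ge\ \sum_{j\ \text{large}}\mu(A_j^{c_j})\ \ge\ \tfrac12\sum_{j\ \text{large}}\mu(A_j)\ \ge\ \tfrac12\cdot\tfrac12\,\mu(\mathcal{C})\ =\ \tfrac14\,\mu(\mathcal{C})
\]
for every $f$ interpolating the training set, as claimed. I expect the main obstacle to be organising the argument so that impurity of the covering balls and their mutual overlap are dealt with simultaneously: passing to the disjointified cells $A_j$ and their majority sub-cells $A_j^{c_j}$ is exactly what makes the two losses of a factor $2$ (one from ``majority class'', one from discarding low-mass cells) combine cleanly into the stated $1/4$, while the $\rho/2$ covering radius is what makes a witness dominate an entire sub-cell through a diameter-$\rho$ ball. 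The remaining ingredients — the Chernoff-type estimate and pinning down the constant $8$ so the union bound closes — are routine.
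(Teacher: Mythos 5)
Your proof is correct, and it follows the same broad strategy as the paper's (cover the region, discard low-mass pieces, union bound over the rest), but the details differ in ways worth noting.

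The paper fixes a \emph{global} majority class, restricting attention to $\mathcal{C}_0 = \{\bm{x} \in \mathcal{C} : f^*(\bm{x}) = 0\}$ with $\mu(\mathcal{C}_0) \ge \tfrac12 \mu(\mathcal{C})$, applies a Chernoff bound to guarantee that roughly $\tfrac14 \mu(\mathcal{C}) m$ samples land in $\mathcal{C}_0$, covers $\mathcal{C}_0$ by $\norm{\cdot}$-balls, and then invokes a separate greedy-subcover lemma (\Cref{lemma:greedy-subcover}) which sorts the (overlapping) balls by measure and keeps those above a $\tfrac{1}{2N}$ threshold. You instead disjointify the cover into cells $A_j$, choose a \emph{per-cell} majority class $c_j$, and directly threshold $\mu(A_j)$; since the cells are disjoint, the ``discarded mass $< \tfrac12 \mu(\mathcal{C})$'' bookkeeping is a one-line counting argument with no greedy lemma needed. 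Because you never restrict to a class-pure region and compute each cell's witness probability directly as $\eta\,\mu(A_j^{c_j})$, you also sidestep the preliminary Chernoff step (and its $\delta/2$ budget) entirely. The net effect is a cleaner proof with the same constants. What the paper's route buys is that \Cref{lemma:greedy-subcover} is a standalone statement about a ball cover and an arbitrary measure, which the authors reuse verbatim in the proof of \Cref{thm:uniform-vs-poisoning}; your disjointification-plus-threshold step is tailored to this proof and would need to be re-stated there. Both proofs correctly observe that the claim holds for any interpolator because the union-bound event is defined purely in terms of the sampled points and their flipped labels, which you make explicit.

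One small remark: your union bound over the (at most $N$) large cells gives failure probability $N \cdot (\delta/2N)^2 = \delta^2/(4N) \le \delta$, which is more slack than needed; $m \ge \tfrac{4N}{\mu(\mathcal{C})\eta}\log\tfrac{N}{\delta}$ would already close it in your version. This is consistent with the constant $8$ in the theorem not being optimized, and does not affect correctness.
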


The compact $\mathcal{C}$ can be chosen freely, allowing us to make
tradeoffs between the required number of samples $m$ and the lower
bound on the adversarial risk.  As the chosen \(\mathcal{C}\) expands
in volume, the lower bound on the adversarial risk
\(\mu\left(\mathcal{C}\right)\) also increases. However, this also
increases the required number of samples for the theorem to kick in,
which depends on its covering number \(N\).  The tradeoff curve
depends on the distribution $\mu$; we discuss this in
\Cref{sec:sample-size}. 

Note that \Cref{thm:adv-risk-lower-bound} is easier to interpret than \Cref{thm:inflabelballs},
as it holds for any compact \(\mathcal{C}\) 
as opposed to a finite set \(\zeta\) of dense balls.
Our result avoids the unwieldy assumptions, and in fact gives a slightly stronger
guarantee than \Cref{thm:inflabelballs}.  When \Cref{eq:balls_density}
holds, note that we can choose the compact \(\mathcal{C} =
{\bigcup_{\bm{s}\in\zeta}\mathcal \mathcal{B}_{\rho/2}(\bm{s})}\)
from~\Cref{thm:inflabelballs} yielding \(N=\abs{\zeta}\) and
\(\mu\left(\mathcal{C}\right) =c_1\). Thus, under similar settings as
the previous result, our theorem requires the number of samples $m =
\widetilde{\Omega}\left(\frac{\abs{\zeta}}{\eta c_1} \right)$, which is
smaller than $m = \widetilde{\Omega}\left(\frac{\abs{\zeta}}{\eta c_2} \right)$
required in \Cref{thm:inflabelballs}.  

We leave the proof of \Cref{thm:adv-risk-lower-bound} to \Cref{app:proof-of-improved-thm1}, 
but we provide a brief sketch of the ideas. 

\begin{wrapfigure}{r}{0.4\linewidth}
  \centering
  \centering
    \includegraphics[width=0.9\linewidth]{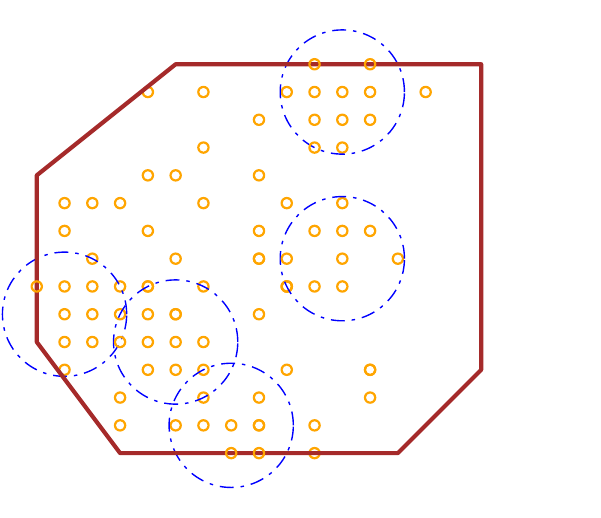} \\ 
  \caption{
    Depending on the covering number of $\mathcal C$, a
    small number of \(\norm{\cdot}\)-balls of sufficient density
    cover a lot of the measure of $\mathcal C$.
    Label noise makes every point drawn from the covered set adversarially vulnerable. 
  }
  \label{fig:illustration-of-subcover}
\end{wrapfigure}
\noindent\textit{Proof sketch}~ We want to prove that a large portion
of points from $\mu$ have a mislabeled point nearby when $m$ is large
enough. The expected number of label noise training points is $\eta
m$; however a priori those could be anywhere in the support of $\mu$.

The key idea is that we can always find a set of $\norm{\cdot}$-balls
covering a lot of measure, with each of the balls having a large
enough density of $\mu$. We prove this in \Cref{lemma:greedy-subcover}
and provide an illustration in~\Cref{fig:illustration-of-subcover}.
The blue dotted circles in~\Cref{fig:illustration-of-subcover} are the
$\norm{\cdot}$-balls; they do not cover the entire space but cover a
significant portion of the entire density. Then, if we take a lot of
$\norm{\cdot}$-balls with large density of a single class, we can
prove that label noise induces an opposite-labeled point in each of
the chosen balls given $m$ large enough.

Concretely, the probability for a single chosen ball to not be adversarially vulnerable is
on the order of $\left(1 - \frac{\eta}{2N} \right)^{\mu(\mathcal{C}) m}$,
and summing this up over the $O(N)$ chosen balls goes to zero when $m$ is large.
By the union bound, each of these balls is then adversarially vulnerable, summing up to a constant adversarial risk.

We considered the binary classification case for
simplicity.  The proof in \Cref{app:proof-of-improved-thm1} lower
bounds the adversarial risk on a single true class.  Thus, by summing
up the risks for each class, we lose only a constant factor on the
guaranteed adversarial risk in the multi-class case. 

For compactly supported $\mu$, 
we can take $\mathcal{C}$ to be the support of $\mu$ to prove a general statement.

\begin{corollary}
  \label{corr:improved-thm1-supp}
    Let $N$ be the covering number of $\supp(\mu)$ with balls of radius $\rho/2$.
    For $\delta > 0$, when the number of samples satisfies
  \(m \ge \frac{8N}{\eta} \log{\frac{2N}{\delta}}\).
  with probability $1 - \delta$ we have that
  \begin{equation}
    \mathcal R_{\mathrm{Adv}, \rho}(f, \mu) \ge \frac14.
  \end{equation}
\end{corollary}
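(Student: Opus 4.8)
\textit{Proof proposal.} The plan is to derive the corollary as an immediate special case of \Cref{thm:adv-risk-lower-bound}, choosing the compact set $\mathcal{C}$ to be $\supp(\mu)$ itself. Since $\mu$ is compactly supported by assumption, $\supp(\mu)$ is a compact subset of $\RR^d$ and hence an admissible choice for $\mathcal{C}$ in the parent theorem, and its covering number at radius $\rho/2$ is precisely the $N$ appearing in the statement.

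The one ingredient I would check carefully is that $\mu(\supp(\mu)) = 1$. This is a standard fact on separable metric spaces such as $\RR^d$: the complement of the support can be written as a countable union of $\mu$-null open balls (take a countable base and discard the balls with positive mass), hence is itself $\mu$-null. In particular $\mu(\mathcal{C}) = 1 > 0$, so the hypothesis $\mu(\mathcal{C}) > 0$ of \Cref{thm:adv-risk-lower-bound} holds. Substituting $\mu(\mathcal{C}) = 1$, the sample-size condition $m \ge \frac{8N}{\mu(\mathcal{C})\eta}\log\frac{2N}{\delta}$ becomes $m \ge \frac{8N}{\eta}\log\frac{2N}{\delta}$, and the guarantee $\mathcal{R}_{\mathrm{Adv},\rho}(f,\mu) \ge \frac14\mu(\mathcal{C})$ becomes $\mathcal{R}_{\mathrm{Adv},\rho}(f,\mu) \ge \frac14$, with probability $1-\delta$ and for any interpolating $f$, exactly as claimed.

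Because the corollary is a one-line specialization, there is no real obstacle beyond bookkeeping; the only subtlety worth flagging is the measure-theoretic point that the support carries full mass — which relies on separability of $\RR^d$ and would fail for arbitrary topological measure spaces — together with making sure the compact-support hypothesis on $\mu$ is explicitly invoked so that $\mathcal{C} = \supp(\mu)$ is a legitimate choice in \Cref{thm:adv-risk-lower-bound}.
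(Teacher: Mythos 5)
Your proof is correct and follows exactly the paper's intended route: substitute $\mathcal{C} = \supp(\mu)$ into \Cref{thm:adv-risk-lower-bound} and use $\mu(\supp(\mu)) = 1$. The extra measure-theoretic remark justifying full mass on the support is a reasonable piece of bookkeeping the paper leaves implicit.
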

This is easier to understand than \Cref{thm:adv-risk-lower-bound}: 
if interpolating a dataset with label noise, 
the number of samples required to guarantee constant adversarial risk
scales with the covering number of the support of the distribution.

\begin{remark}
The proof in \Cref{app:proof-of-improved-thm1} actually proves \Cref{eq:adversarial-risk-lower-bound}
by first proving a stronger fact:
if $\mu |_{\mathcal{C}}$ is the normalized restricton of $\mu$ on $\mathcal{C}$, then
\begin{equation}
  \mathcal R_{\mathrm{Adv}, \rho}(f, \mu |_{\mathcal{C}}) \ge \frac14.
\end{equation}
We can use this to give better guarantees when $f^*$ is not robust.
If a region of $\supp(\mu)$ is already adversarially vulnerable using the true classifier $f^*$, 
we can omit it from $\mathcal{C}$, and just add the guarantee from \Cref{thm:adv-risk-lower-bound} to the original adversarial risk
to get a stronger lower bound on $\mathcal R_{\mathrm{Adv}, \rho}(f, \mu)$.
\end{remark}

\section{Practical implications on sample size} %
\label{sec:sample-size}
In this section, we discuss the limitations of results like
\Cref{thm:adv-risk-lower-bound}.
When we allow arbitrary interpolating classifiers, we show that
\Cref{thm:adv-risk-lower-bound} paints an accurate picture of the
interaction of label noise, interpolation, and adversarial risk.
However, this particular theoretical framework 
cannot explain the strong effect of label noise on  adversarial risk
in practice~(see \Cref{fig:ind-bias-adv-pic}).  
We argue that this requires a better understanding of the inductive biases of the hypothesis class and the
optimization algorithm.\looseness=-1

\paragraph{Required sample size for~\Cref{thm:adv-risk-lower-bound}}
The number of required samples $m$ in \Cref{thm:adv-risk-lower-bound}
can be very large, depending on the density and the covering number of
the chosen compact $\mathcal C$.  Consider $\norm{\cdot}$ to be the
$\norm{\cdot}_\infty$ norm, as is customary in adversarial robustness
research \citep{goodfellow2014explaining}.  Then the balls $B_{\rho}$
are small hypercubes in $\RR^d$.  If we choose $\mathcal C$ to be the
hypercube $[0, 1]^d$, the covering number scales exponentially in
dimension:
\begin{align}
  N = N(\rho; [0, 1]^d, \norm{\cdot}_\infty) \simeq \left( \frac1{\rho} \right)^d.
\end{align}

\begin{figure*}[t]
  \begin{subfigure}[t]{0.32\linewidth}
    \begin{subfigure}[t]{0.49\linewidth}
      \centering \def\svgwidth{0.99\columnwidth}
      \input{./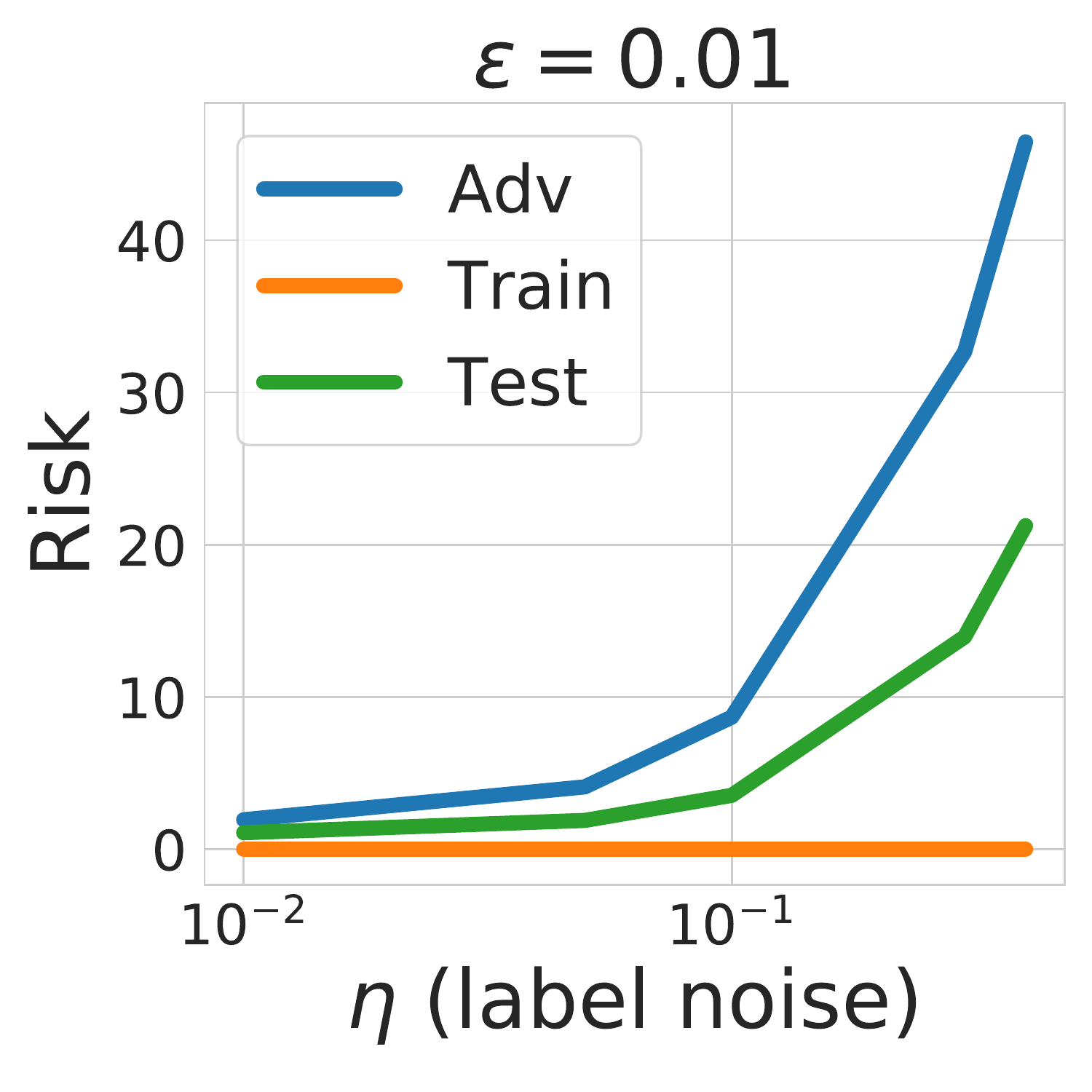_tex}
    \end{subfigure}
    \begin{subfigure}[t]{0.49\linewidth}
     \centering \def\svgwidth{0.99\columnwidth}
     \input{./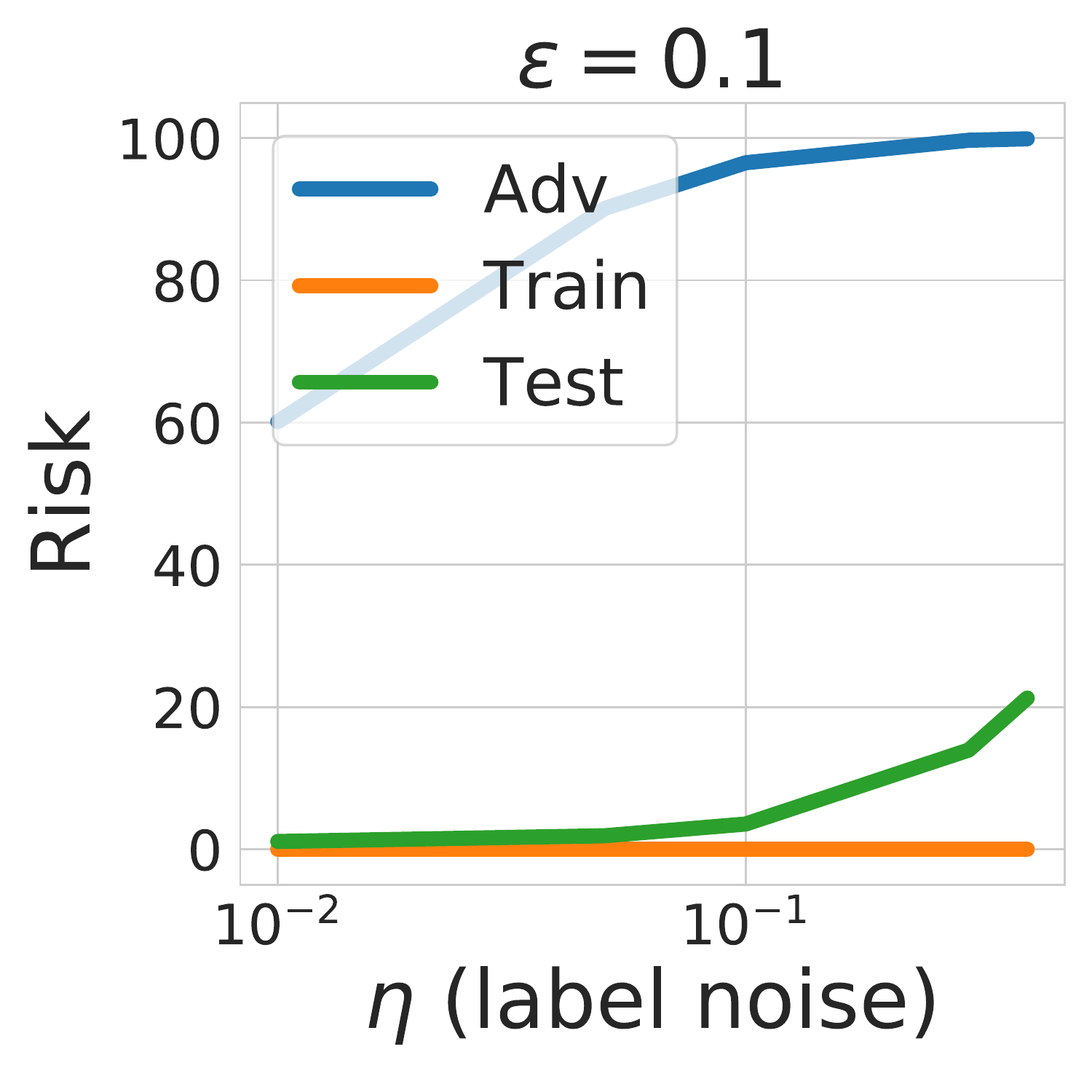_tex}
    \end{subfigure}
    \caption{MNIST}
    \label{fig:mnist_lbl_noise_adv}
  \end{subfigure}\vspace{10pt}
    \begin{subfigure}[t]{0.32\linewidth}
      \centering \def\svgwidth{0.99\columnwidth}
      \input{./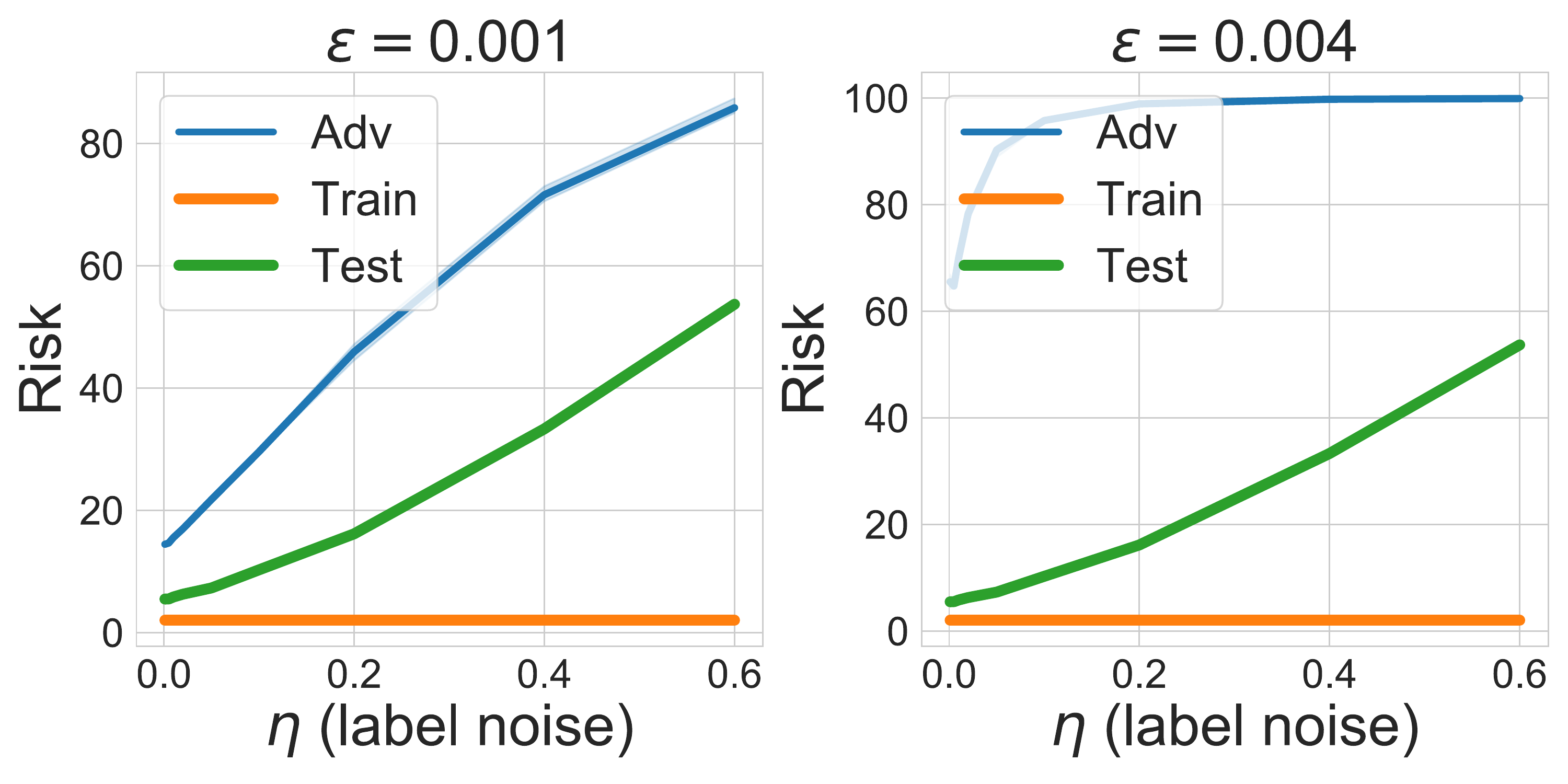_tex}
      \caption{ResNet18~(CIFAR10)}
    \label{fig:risk_vs_noise_cifar10_r18}
    \end{subfigure}
    \begin{subfigure}[t]{0.32\linewidth}
      \centering \def\svgwidth{0.99\columnwidth}
      \input{./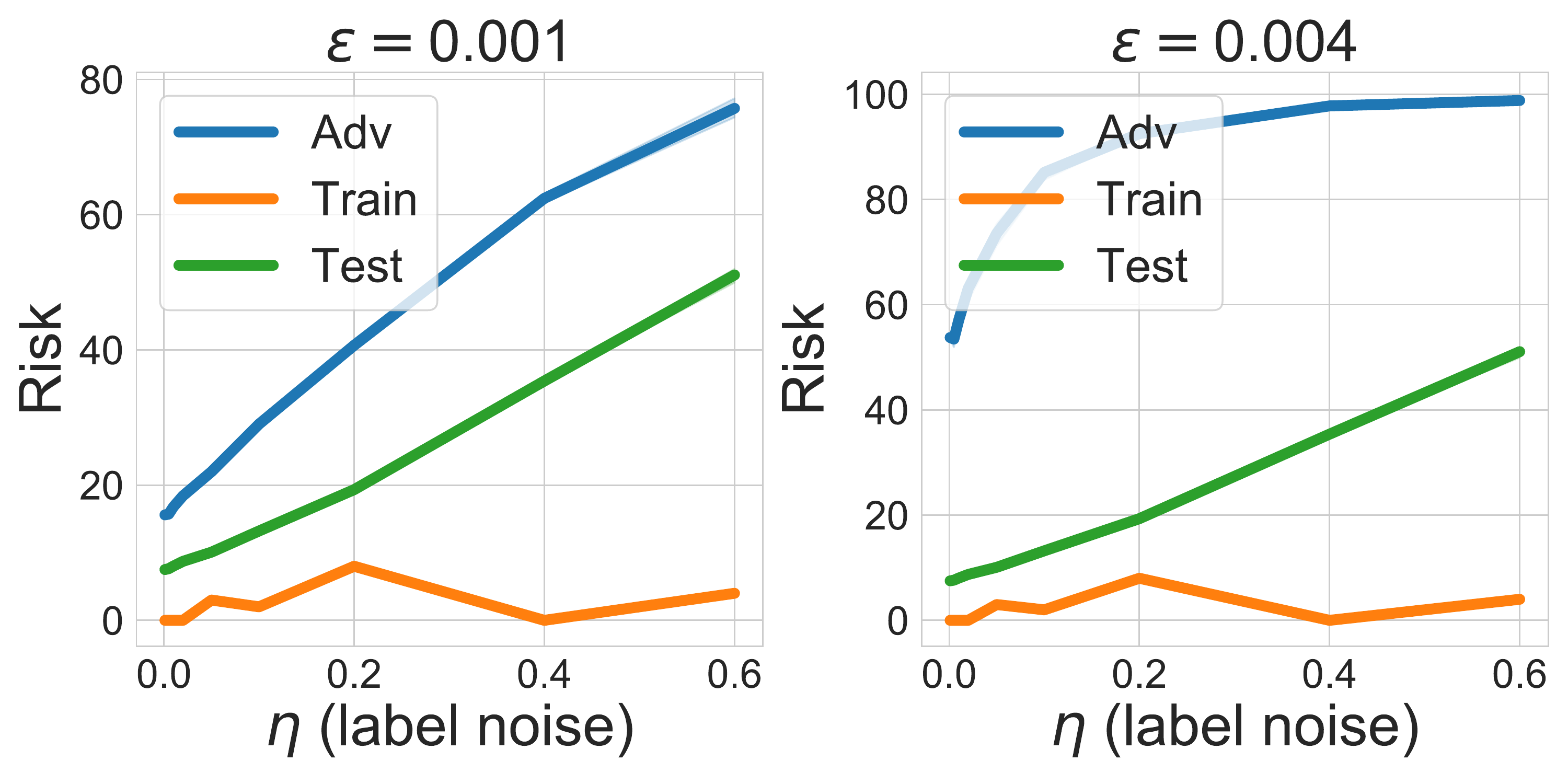_tex}
      \caption{DenseNet121~(CIFAR10)}
    \label{fig:risk_vs_noise_cifar10_d121}
    \end{subfigure}
  \caption{From~\citet{sanyal2020openreview}. Adversarial error
  increases with increasing label noise~$\eta$~(x-axis) at a rate much
  faster than predicted by~\Cref{thm:adv-risk-lower-bound}. %
  Here, \(\epsilon\) is the perturbation magnitude~(\(\rho\) in the
  current paper). The label noise is synthetically injected in the
  training set with probability \(\eta\).}
  \label{fig:ind-bias-adv-pic}
\end{figure*}

A rough back-of-the-envelope calculation indicates that this can scale
badly even for standard datasets such as MNIST ($d = 784$) or CIFAR-10
($d = 3072$), since in \Cref{thm:adv-risk-lower-bound} we need $ m
\gtrsim \frac{N}{\mu(\mathcal C) \eta}.$ This amounts an impossibly
large sample size~($m \gtrsim 10^{784}$) for \(\rho=0.2\) to explain
the effect already observed with in $m = 50000$ MNIST training samples
in~\Cref{fig:mnist_lbl_noise_adv}.

Hence our result often does not guarantee any adversarial risk if the
number of samples $m$ is small.  In general, the covering number of a
dataset is not polynomial in the dimension, except if the data has
special properties in the given metric.  For example, if the data
distribution is supported on a subspace of $\RR^d$ of ambient
dimension $k < d$, we can pick a $\mathcal{C}$ for which the covering
number in $\norm{\cdot}_2$ will depend only on $k$ and not on $d$.
However, this is still not sufficient to explain the behaviour
in~\Cref{fig:risk_vs_noise_cifar10_r18,fig:risk_vs_noise_cifar10_d121}.
If our result indeed kicks in for some ambient dimension \(k\) with
DenseNet121 on CIFAR10, then for a given adversarial risk~(say
\(80\%\)), the power law dependence would imply
\(\left(\frac{\rho_2}{\rho_1}\right)^k \approx
\frac{\eta_1}{\eta_2}\), where label noise rate \(\eta_i\) yields
adversarial error \(80\%\) with perturbation budget \(\rho_i\).  With
another back-of-the-envelope calculation
using~\Cref{fig:risk_vs_noise_cifar10_d121}, we set
\(\eta_1,\eta_2=0.7,0.1\) and \(\rho_1,\rho_2 = 0.001,0.004\). This
yields an ambient dimension \(k<2\), which is unrealistic for
CIFAR-10. %

These calculations suggest the possibility that a tighter bound
than~\Cref{thm:adv-risk-lower-bound} might exist.  However, the large
sample size is not just a limitation of
\Cref{thm:adv-risk-lower-bound}. In fact, we show that if arbitrary
classifiers and distributions are allowed, the adversarial risk cannot
be lower bounded for $m = \operatorname{poly}(d)$.

\paragraph{Our result is tight}
It is a priori possible that the true dependence of
adversarial risk on label noise kicks in for much lower sample size
regimes than in \Cref{thm:adv-risk-lower-bound}. This might suggest
that the lower bound on sample complexity can be improved.  We show this is not the case and in fact our bound is sharp.  
In particular, we design a simple distribution on $\RR^d$ such that
there exist classifiers which correctly and robustly interpolate
datasets with the number of samples $m$ exponential in
$d$.
\begin{restatable}{proposition}{spherelabelnoise} \label{prop:sphere-label-noise}
  Let $\mu$ be the uniform distribution on 
  $\sph^{d-1} = \left\{ 
        x_1, \ldots, x_d \in \RR^d : x_1^2 + \ldots + x_d^2 = 1 
    \right\} $,
  and let the ground truth classifier $f^*$ be a threshold function on $x_1$:
  $f^*(x) = \one_{x_1 > \frac12}$.
  Consider any adversarial radius $\rho < \frac14$ in the Euclidean metric.
  Then, for any label noise $\eta < 1$:
  with high probability,
  there exists a classifier $f$ that interpolates $m = \lfloor 1.01^d \rfloor$
  samples from the label noise distribution, 
  such that $\mathcal R_{\mathrm{Adv}, \rho}(f, \mu) = o_d(1)$.
\end{restatable}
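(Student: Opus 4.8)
The plan is to exhibit the interpolating classifier $f$ explicitly, as the indicator of a union of small balls centred at the \emph{mislabeled} training points, and then control its adversarial risk through concentration of measure on the sphere. Everything rests on one estimate: for $p \in \sph^{d-1}$ and $0 \le s \le \sqrt{2}$, the cap $\{x \in \sph^{d-1} : \|x - p\| \le s\}$ has $\mu$-measure $\PP_{\bm{x}\sim\mu}[x_1 \ge 1 - s^2/2]$, and since the first coordinate of a uniform point on $\sph^{d-1}$ has density $\propto (1-u^2)^{(d-3)/2}$, Laplace's method gives this is $\operatorname{poly}(d)\cdot\bigl(s\sqrt{1-s^2/4}\bigr)^{d-1}$. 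For $\rho < \tfrac14$ the relevant bases $2\rho\sqrt{1-\rho^2}$ (radius $2\rho$) and $\rho\sqrt{1-\rho^2/4}$ (radius $\rho$) lie below $\tfrac12$ and $\tfrac14$ respectively, while the ``positive'' cap $\{x_1 > \tfrac12\}$ on which $f^*=1$ has measure $\operatorname{poly}(d)(\sqrt{3}/2)^{d-1}$; the key point is that $1.01$ times each of these bases is still $<1$, so $m=\lfloor 1.01^d\rfloor$ times any of these cap volumes still tends to $0$. This is exactly where the hypothesis $\rho < \tfrac14$ and the slow growth rate $1.01^d$ get used.

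Next I would isolate two events, each of probability $1 - o_d(1)$ over the noisy training sample. (i) All $\binom{m}{2}$ pairwise distances among the $\bm{z}_i$ exceed $\rho$: for fixed $j$ the probability that $\bm{z}_i$ lands in $\mathcal{B}_\rho(\bm{z}_j)$ is the constant cap measure of radius $\rho$, so a union bound gives failure probability $\le \binom{m}{2}\operatorname{poly}(d)(\rho\sqrt{1-\rho^2/4})^{d-1} = o_d(1)$. (ii) The random set $U := \bigcup_{i:\,y_i=1}\mathcal{B}_{2\rho}(\bm{z}_i)$ satisfies $\mu(U) = o_d(1)$: by rotational invariance $\mu(\mathcal{B}_{2\rho}(\bm{z}_i))$ is a deterministic constant, so by subadditivity $\mathbb{E}[\mu(U)] \le \mathbb{E}\bigl[\#\{i:y_i=1\}\bigr]\cdot\operatorname{poly}(d)(2\rho\sqrt{1-\rho^2})^{d-1} \le m\cdot\operatorname{poly}(d)(2\rho\sqrt{1-\rho^2})^{d-1} = o_d(1)$ (the noise rate $\eta$ enters only through $\#\{i:y_i=1\}\le m$), and Markov's inequality turns this into the high-probability bound $\mu(U) = o_d(1)$. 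It is also true, and convenient for intuition though not needed below, that with probability $1-o_d(1)$ no $\bm{z}_i$ lands in the positive cap, so every true label is $0$.

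On event (i) I would define $f(\bm{z}) = \one[\,\bm{z}\in\bigcup_{i:\,y_i=1}\mathcal{B}_\rho(\bm{z}_i)\,]$ on all of $\RR^d$ and verify interpolation: if $y_j=1$ then $\bm{z}_j\in\mathcal{B}_\rho(\bm{z}_j)$ gives $f(\bm{z}_j)=1$; if $y_j=0$ then any index $i$ with $y_i=1$ is $\neq j$, hence at distance $>\rho$ from $\bm{z}_j$, so $f(\bm{z}_j)=0$. For the risk I would split on $f^*(\bm{x})$: on $\{f^*(\bm{x})=1\}$ the risk is at most the positive-cap measure, while on $\{f^*(\bm{x})=0\}$ the point $\bm{x}$ is attacked iff some $\bm{z}\in\mathcal{B}_\rho(\bm{x})$ has $f(\bm{z})=1$, i.e. iff $\mathcal{B}_\rho(\bm{x})$ meets some $\mathcal{B}_\rho(\bm{z}_i)$ with $y_i=1$, which — since two radius-$\rho$ balls intersect exactly when their centres lie within $2\rho$, the midpoint being a legitimate perturbation in the ambient $\RR^d$ — happens iff $\bm{x}\in U$. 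Hence $\mathcal R_{\mathrm{Adv},\rho}(f,\mu)\le\mu(\{x_1>\tfrac12\})+\mu(U)=o_d(1)$ on the intersection of events (i) and (ii), which has probability $1-o_d(1)$; this is exactly the claim of the proposition.

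The argument is not deep; I expect the only genuine care to be the exponential bookkeeping — checking that $m$ times each spherical-cap volume that appears still tends to $0$, which is what forces $\rho<\tfrac14$ and a sufficiently slow growth rate such as $1.01^d$ — together with the small geometric observation that the adversarial perturbations realizing the attack live at midpoints in $\RR^d$ rather than on $\sph^{d-1}$.
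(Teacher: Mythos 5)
Your proof is correct, but it takes a genuinely different route from the paper's. The paper's interpolator equals $f^*$ everywhere except at the (measure-zero) set of training points, where it memorizes $y_i$; interpolation is then automatic, and the adversarial risk splits into ``$\bm{x}$ is within $\rho$ of some $\bm{z}_i$'' plus ``$\bm{x}$ is within $\rho$ of the decision boundary $x_1=\tfrac12$,'' with both tails controlled by the Laurent--Massart $\chi^2$ bounds applied at the single threshold $t=\tfrac14$. Your interpolator is instead the indicator of $\bigcup_{i:y_i=1}\mathcal{B}_\rho(\bm{z}_i)$; this is geometrically cleaner (the decision region is literally ``balls around the points that look mislabeled'') and makes the $\rho\mapsto 2\rho$ midpoint observation transparent, but it buys you an extra obligation --- the packing event (i) guaranteeing all pairwise distances exceed $\rho$, without which interpolation of a $y_j=0$ point near a $y_j=1$ point could fail. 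You pay for that with a $\binom{m}{2}$ union bound, which still closes because $1.01^2\cdot\rho\sqrt{1-\rho^2/4}<1$ for $\rho<\tfrac14$; the paper avoids this step entirely. Your tail estimates use the beta density of $x_1$ and a Laplace-style bound rather than citing a $\chi^2$ concentration lemma, which is a bit more hands-on but gives the same exponential bases. One small inefficiency: in your event (ii) the Markov step is superfluous, since $\mu(U)\le m\cdot\mu(\mathcal{B}_{2\rho}(\bm{z}_1))$ holds deterministically by subadditivity and rotational invariance (the cap measure is a constant), so $\mu(U)=o_d(1)$ surely, not merely with high probability. Both proofs are valid; the paper's is shorter because its classifier interpolates by construction, while yours gives a more explicit picture of where the decision region lives.
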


\noindent\textit{Proof sketch}~
The main ingredient of the proof is the concentration of measure on $\sph^{d-1}$,
which makes the training samples far apart in the Euclidean metric.
We leave the full proof to \Cref{app:proof-sphere-label-noise}.
Similar statements in the clean data setting have appeared before,
e.g. in \cite{bubeck2021universal}.

Note that \Cref{prop:sphere-label-noise} shows a construction where
\Cref{thm:adv-risk-lower-bound} cannot guarantee an adversarial risk
lower bound with sample size \(m\) sub-exponential in \(d\).
Hence, the covering number of any substantial portion of $\sph^{d-1}$ is
exponential in the dimension $d$. This unintentionally proves the well
known fact that the covering number of the sphere $\sph^{d-1}$ in the
Euclidean metric is exponential.\footnote{See Proposition 4.16
in~\url{https://www.stats.ox.ac.uk/~rebeschi/teaching/AFoL/20/material/lecture04.pdf}}

\paragraph{Optimizing $\mathcal{C}$ can avoid large sample size}
While~\Cref{prop:sphere-label-noise} shows the tightness
of~\Cref{thm:adv-risk-lower-bound} in the worst case, it is
possible a smaller sample size requirement is sufficient under certain
conditions.  
In particular, if we can pick a compact $\mathcal C$ with
small covering number, such that the measure $\mu(\mathcal C)$ is
large, then~\Cref{thm:adv-risk-lower-bound} allows for a small sample
size while guaranteeing a large adversarial risk. %

\noindent\textit{Example}~ Take an adversarial radius $\rho > 0$ in
the $\norm{\cdot}_\infty$ metric and choose \(r\in(0, \frac{1}{2})\),
let $\mu = (1-r) \mu_1 + r \mu_2$ be the average of two
measures, $\mu_1$ and $\mu_2$, with $\mu_1$ the uniform distribution
on $[0, 1]^d$, and $\mu_2$ the uniform distribution on a smaller
hypercube $[0, \rho]^d$.

The first choice $\mathcal{C} = [0, 1]^d$ as in
\Cref{corr:improved-thm1-supp} has covering number on the order of
$\rho^{-d}$.  \Cref{thm:adv-risk-lower-bound} is then vacuous until $m
\gtrsim \rho^{-d}/\eta$, which is very large in high dimensions.  Note
that this is necessary to achieve the lower bound of adversarial risk
of \(\frac{1}{4}\). However, if we only want to guarantee an
adversarial risk of \(\frac{r}{4}\), instead we can use $\mathcal{C} =
[0, \rho]^d$. For this, the covering number is $1$ and we can use
\Cref{thm:adv-risk-lower-bound} for $m = O\br{\frac1{\eta}}$. This
suggests that while the required sample size for the maximal
adversarial risk is possibly very large, it can be much smaller,
depending on the distribution, for guaranteeing a smaller adversarial
risk.

Formally, to get the ``best possible'' $m$ in \Cref{thm:adv-risk-lower-bound} for a certain adversarial risk lower bound $r$,
we should solve the following optimization problem over subsets of $\supp(\mu)$:
\begin{align}
  \label{eq:optimize-subset}
  \min_{\mu(\mathcal{C}) \ge 4r} \frac{N(\rho/2, \mathcal{C}, \norm{\cdot}_\infty) \log N(\rho/2, \mathcal{C}, \norm{\cdot}_\infty)}{\mu(\mathcal{C})}.
\end{align}
The above optimization problem comes from substituting \(r\) into the
adversarial risk placeholder in \Cref{thm:adv-risk-lower-bound}.
\Cref{eq:optimize-subset} provides a complexity measure to get tighter
lower bounds on the adversarial vulnerability induced by uniform label
noise. It is not known whether the optimization is tractable in
general.  However, the concept of having to solve an optimization
problem in order to get a tight lower bound is common in the
literature.  Some examples are the \emph{representation
dimension}~\citep{beimel2019characterizing} and the \emph{SQ
dimension}~\citep{feldman2017general} .

To conclude, this section shows that in real world data, the required
sample size for guaranteeing large adversarial risk from interpolating
label noise is significantly smaller than what an off-the-shelf
application of~\Cref{thm:adv-risk-lower-bound} might suggest.
However, we also proved that it is not possible to obtain tighter
bounds without further assumptions on the data or the model.

\section{Non-uniform label noise}
\label{sec:non-uniform-label-noise}

In previous sections, we discussed guaranteeing a lower bound on
adversarial error for noisy interpolators
in~\Cref{sec:main-theoretical-results}. In~\Cref{sec:sample-size}, we
discussed the tightness of the said bound. However, all of these
results assumed that the label noise is distributed uniformly on the
points in the training set, which corresponds to the popular Random
Classification Noise~\citep{angluin1988learning} model. However, an
uniform noise model is not very
realistic~\citep{hedderich2021analysing,wei2022learning}; and it is
thus sensible to also investigate how our results change under
non-uniform label noise models.

\paragraph{Uniform noise is almost as harmful as poisoning}

The worst-case non-uniform label noise model is \emph{data poisoning}, 
where an adversary can choose the labels of a subset of the training set of a fixed size~(see~\citet{biggio2018wild} for a survey). 
It is well known that flipping the label of a constant number of points in the training set
can significantly increase the error of a logistic regression model
\citep{jagielski2018manipulating} or an SVM classifier
\citep{biggio2011support}.  On the contrary, the test error of a
neural networks has been surprisingly difficult to hurt by data
poisoning attacks which flip a small fraction of the
labels.~\citet{lu2022indiscriminate} show that, on some datasets, the
effect of adversarial data poisoning on test accuracy is, in fact,
comparable to the effect of uniform label noise. 

We phrase the main result of this section informally
in~\Cref{thm:uniform-vs-poisoning-informal}, using standard
game-theoretic metaphors for data poisoning, and defer the formal
version to~\Cref{app:poisoning-theorem}.  Let again $\mu$ be a
distribution on $\RR^d$ and $f^*$ a correct binary classifier, and let
$\eta$ be the label noise rate.  Consider a game in which an adversary
flips the labels of a subset of the training set, and tries to
maximize the minimum adversarial risk among all interpolators of the
noisy~(after flipping labels) training set. We will compare the performances
of two adversaries:
\begin{itemize}
  \item \textbf{Uniform}, who samples \(T\) points uniformly from the
  distribution, and flips the label of each of the $T$ points in the
  sampled training set with probability $\eta$;
  \item \textbf{Poisoner}, who inserts $N = \eta m$ arbitrary points
  from $\supp(\mu)$ with flipped labels into the training set and then
  samples the remaining \(m - N\) points uniformly, with correct labels.
\end{itemize}

Here $T$ and $m$ are the respective training set sizes.  If $T \sim
\frac1\eta N = m$, then the two adversaries flip the same number of
labels in expectation. In that sense, both of these adversaries have
the same budget. However, the Poisoner can choose which points to flip
and thus intuitively, in this regime, the Poisoner will get a higher
adversarial risk than the Uniform. Surprisingly, we can prove the
Uniform is not much worse if $T \sim m \log m$.  

\begin{theorem}[Informal statement of \Cref{thm:uniform-vs-poisoning}]
  \label{thm:uniform-vs-poisoning-informal}
  Denote the adversarial risks of the Uniform and the Poisoner adversaries 
  by $\mathcal{R}^{\textrm{Unif}}$ and $\mathcal{R}^{\textrm{Poison}}$ respectively. 
  For any $\rho > 0$, we
  have that
  \begin{align}
    \mathcal{R}^{\textrm{Unif}}_{2 \rho} \geq \frac12 \mathcal{R}^{\textrm{Poison}}_{\rho}
  \end{align}
  as long as $\mathcal{R}^{\textrm{Poison}}_{\rho}=\Omega(1)$ and $T \gtrsim m \log m $.
\end{theorem}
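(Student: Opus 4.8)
\textit{Proof proposal.} The plan is to reduce both adversarial risks to a common geometric quantity and then show the Uniform adversary captures at least half of it. For a point $\bm{s}$ and radius $r$, write $V_r(\bm{s}) = \mathcal{B}_r(\bm{s}) \cap \{\bm{x} : f^*(\bm{x}) = f^*(\bm{s})\}$. The starting point is the observation underlying \Cref{thm:adv-risk-lower-bound}: if a classifier $f$ interpolates a training set containing a mislabeled point $(\bm{s}, 1-f^*(\bm{s}))$, then every $\bm{x} \in V_\rho(\bm{s})$ is adversarially vulnerable at radius $\rho$, witnessed by $\bm{z} = \bm{s}$ (since $f(\bm{s}) = 1-f^*(\bm{s}) \neq f^*(\bm{x})$). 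Hence for \emph{any} interpolator of a training set whose mislabeled points sit at locations $\{\bm{s}_k\}$, the adversarial risk is at least $\mu\bigl(\bigcup_k V_\rho(\bm{s}_k)\bigr)$, and this is essentially tight: the interpolator equal to $f^*$ off the mislabeled points (which still interpolates the clean samples) achieves it, up to the intrinsic non-robustness $\mathcal{R}_{\mathrm{Adv},\rho}(f^*,\mu)$, which we either assume away or carry along separately as in the Remark after \Cref{corr:improved-thm1-supp}. Consequently $\mathcal{R}^{\textrm{Poison}}_{\rho} = \max_{\bm{s}_1,\dots,\bm{s}_N} \mu\bigl(\bigcup_{k=1}^N V_\rho(\bm{s}_k)\bigr)$ with $N=\eta m$ (the Poisoner's clean samples do not change the minimum-over-interpolators value, assuming $\mu$ atomless so no interpolation conflicts arise a.s.), and the Uniform adversary drawing $\bm{w}_1,\dots,\bm{w}_T$ and flipping labels on the random subset $S \subseteq [T]$ satisfies $\mathcal{R}^{\textrm{Unif}}_{2\rho} \geq \mu\bigl(\bigcup_{i\in S} V_{2\rho}(\bm{w}_i)\bigr)$.

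The key geometric step is a radius-doubling covering fact: if some noisy Uniform sample $\bm{w}_i$ lands inside $V_\rho(\bm{s}_k)$, then $\bm{w}_i \in \mathcal{B}_\rho(\bm{s}_k)$ forces $\mathcal{B}_{2\rho}(\bm{w}_i) \supseteq \mathcal{B}_\rho(\bm{s}_k)$ by the triangle inequality, and $f^*(\bm{w}_i) = f^*(\bm{s}_k)$, so $V_{2\rho}(\bm{w}_i) \supseteq V_\rho(\bm{s}_k)$. Therefore $\mathcal{R}^{\textrm{Unif}}_{2\rho}$ is at least $\mu$ of the union of those Poisoner balls $V_\rho(\bm{s}_k)$ that are ``hit'' by at least one noisy Uniform sample, and it remains to show that with high probability the hit balls carry at least half of $\mu\bigl(\bigcup_k V_\rho(\bm{s}_k)\bigr) = \mathcal{R}^{\textrm{Poison}}_{\rho} =: R$.

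For this, fix the optimal Poisoner points and split by weight: let $K = \{k : \mu(V_\rho(\bm{s}_k)) \geq R/(2N)\}$. The complement balls have total measure $< N \cdot R/(2N) = R/2$, so $\mu\bigl(\bigcup_{k\in K} V_\rho(\bm{s}_k)\bigr) > R/2$. For fixed $k \in K$, a single Uniform sample is noisy and lands in $V_\rho(\bm{s}_k)$ with probability $\eta\,\mu(V_\rho(\bm{s}_k)) \geq \eta R/(2N)$, independently across the $T$ samples, so ball $k$ is missed with probability at most $(1 - \eta R/(2N))^T \leq \exp(-\eta R T/(2N)) = \exp(-RT/(2m))$ using $N = \eta m$ (note $T \sim m\log m$ means Uniform flips $\sim \eta m \log m \gg N$ labels, which is the ``slight increase in dataset size''). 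Taking $T \geq C m \log m$ makes this $m^{-CR/2}$, and a union bound over $|K| \leq N \leq m$ balls gives total failure probability $\leq m^{1 - CR/2} \to 0$ once $C > 2/R_0$, where $R_0 = \Omega(1)$ is the assumed lower bound on $\mathcal{R}^{\textrm{Poison}}_{\rho}$. On the complementary event every $k\in K$ is hit, whence $\mathcal{R}^{\textrm{Unif}}_{2\rho} \geq \mu\bigl(\bigcup_{k\in K} V_\rho(\bm{s}_k)\bigr) > R/2 = \tfrac12 \mathcal{R}^{\textrm{Poison}}_{\rho}$.

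I expect the main obstacle to be the bookkeeping around the ``$\min$ over interpolators'' definitions and the contribution of $f^*$'s own robustness: pinning down $\mathcal{R}^{\textrm{Poison}}_{\rho} = \max_{\{\bm{s}_k\}}\mu\bigl(\bigcup_k V_\rho(\bm{s}_k)\bigr)$ requires both the easy direction (every interpolator is at least this bad, from the interpolation constraint) and checking the bound is not loose beyond an additive $\mathcal{R}_{\mathrm{Adv},\rho}(f^*,\mu)$ term, and then verifying that this same term (now at radius $2\rho \geq \rho$) is also available on the Uniform side so the comparison is not spoiled. The probabilistic part is a routine Chernoff-plus-union-bound, the geometry is a one-line triangle inequality, and essentially all the content is in choosing the comparison quantity $V_r(\cdot)$ correctly.
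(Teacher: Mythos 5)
Your proposal is correct and follows essentially the same route as the paper's proof of \Cref{thm:uniform-vs-poisoning}: select the heavy poisoned balls of mass at least $\mathcal{R}^{\textrm{Poison}}_{\rho}/(2N)$ (the paper's \Cref{lemma:greedy-subcover} with $\alpha=\tfrac12$), show by a Chernoff bound plus union bound that $T \gtrsim m\log m$ uniform samples place a mislabeled point in each such ball, and double the radius via the triangle inequality. The only substantive difference is bookkeeping: the paper states the formal theorem for a ``separable proxy'' of the adversarial risk (the $\mu$-mass covered by $\rho$-balls around flipped points), which is precisely the device that disposes of the min-over-interpolators and $f^*$-boundary issues you flag, in place of your $V_\rho(\cdot)$ sets.
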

Roughly speaking, the above theorem shows that if the Uniform
adversary is given \emph{double the adversarial radius} and \emph{a
log factor increase on the training set size}, then Uniform can
guarantee an adversarial risk of the same magnitude as the Poisoner.
The full statement and the proof of the theorem are given in \Cref{app:poisoning-theorem} but we provide a brief sketch here.

\noindent\textit{Proof sketch}~ 
The Poisoner will choose $N$ points to flip, adversarially poisoning every point in the $N$ corresponding $\rho$-balls. 
As in \Cref{thm:adv-risk-lower-bound}, we can use \Cref{lemma:greedy-subcover} to show that a subset of the balls with density $\Omega(1/N)$
covers half of the adversarially vulnerable region.
Then Uniform samples $T$ points, and we expect to hit each of the balls in the chosen subset. 
Because of the doubled radius, each sampled point makes the the whole $\rho$-ball vulnerable.
The log factor comes from the same reason as in the standard coupon collector (balls and bins) 
problem; if we have $N$ bins with hitting probabilities $\Omega(1/N)$, then we need $\Omega(N \log N)$ tries to hit each bin at least once.

\paragraph{Some label noise models are benign}

Different label noise models with the same expected label noise rate
can have very different effects on the adversarial risk.  In the
previous sections, we showed that uniform label noise is almost as bad
as the worst possible noise model with the same label noise rate. This
raises the question whether all noise models are as harmful as the
uniform label noise model. We answer the question in the negative
especially for data distributions that have a {\em long tailed}
structure: many far-apart low-density subpopulations in the support of the distribution $\mu$.  \looseness=-1

For this, we show a simple data distribution $\mu$
in~\Cref{prop:benign-label-noise}, where:
\begin{itemize}%
  \item Uniform label noise with probability $\eta$ guarantees adversarial risk on the order of $\eta$;
  \item A different label noise model, with expected label noise rate
  \(\eta\), which affects only \emph{the long tail} of the
  distribution $\mu$ can be interpolated with $o(1)$ adversarial risk.
\end{itemize}

We argue that this is neither an unrealistic distributional assumption
nor an impractical noise model. In fact, most standard image datasets,
like SUN~\citep{xiao2010sun}, PASCAL~\citep{everingham2010pascal}, and
CelebA~\citep{liu2015faceattributes} have a long
tail~\citep{zhu2014capturing,sanyal22a}.  Moreover, it is natural to
assume that mistakes in the datasets are more likely to occur in the
long tail, where the data points are atypical.  In
\citet{feldman2020does}, it was argued that noisy labels on the long
tail are one of the reasons for why overparameterized neural networks
remember the training data.  Formally, we prove the following
regarding the benign noise model for a long-tailed distribution.

\begin{restatable}{proposition}{benignlabelnoise} \label{prop:benign-label-noise}
  Let $A < B$ be integers with $A$ much smaller than $B$. 
  Let $\mu$ be a mixture model on $\RR$ supported on a disjoint union of $A + B$ intervals,
  such that half of the mass is on the first $A$ intervals and half of the mass is on the last $B$ intervals:
  \begin{align*}
    \mu &= \frac1{2A} \sum_{i=1}^A \mathrm{Unif}\left(i, i+\frac12 \right) + \frac1{2B} \sum_{j=1}^B \mathrm{Unif}\left(A+j, A+j+\frac12 \right) 
  \end{align*}
  Let the ground truth label be zero \rebuttal{everywhere}.  Sample two datasets
  $\SD_1, \SD_2$ of size $m$ from $\mu$ using two different label
  noise distributions: For $\SD_1$, flip the label of each sample $x
  \in [0, A + B]$ independently with probability $\eta$. For $\SD_2$,
  flip the label of each sample $x \in [A, A + B]$ independently with
  probability $2\eta$, and leave the labels of the other samples
  unchanged.
  Then, for any $\rho,\delta \in \left(0, \frac12 \right)$, for the
  number of samples $m = \tilde{\Theta}_{\rho}(A)$~(ignoring log
  terms), we have that with probability \(1-\delta\):
  \begin{itemize}
    \item For any $f$ which interpolates $\SD_1$, the adversarial risk is large: $\mathcal R_{\mathrm{Adv}, \rho}(f, \mu) = \rebuttal{\Omega_{\rho}} \left( 1 \right)$.
    \item There exists $f$ which interpolates $\SD_2$, such that $\mathcal R_{\mathrm{Adv}, \rho}(f, \mu) = \rebuttal{O_{\rho}} \left( \frac{ A}{B} \right)$.
  \end{itemize}
\end{restatable}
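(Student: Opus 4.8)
I will prove the two bullets by entirely separate arguments; the mechanism is that uniform noise is forced to contaminate the $A$ ``fat'' head intervals (mass $\frac{1}{2A}$ each), while tail-only noise can only ever contaminate the $B$ ``thin'' tail intervals (mass $\frac{1}{2B}$ each). For the first bullet the plan is to apply \Cref{thm:adv-risk-lower-bound} directly, taking $\mathcal{C} = \bigcup_{i=1}^{A}\left[i, i+\tfrac12\right]$, the union of the head intervals. Then $\mu(\mathcal{C}) = A\cdot\frac{1}{2A} = \frac12$, and since $\rho < \frac12$, a ball of radius $\rho/2$ has diameter $\rho < \frac12$, i.e.\ strictly less than the length of every gap between consecutive intervals; hence no ball meets two intervals and $N = N(\rho/2;\mathcal{C},\norm{\cdot}) = A\lceil\frac{1}{2\rho}\rceil = \Theta_{\rho}(A)$. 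Because the noise in $\SD_1$ flips each label independently with probability $\eta$ --- exactly the model of \Cref{thm:adv-risk-lower-bound} --- once $m \ge \frac{8N}{\mu(\mathcal{C})\eta}\log\frac{2N}{\delta} = \Theta_{\rho}\!\left(\frac{A}{\eta}\log\frac{A}{\delta}\right) = \tilde{\Theta}_{\rho}(A)$, the theorem yields with probability $1-\delta$ that every $f$ interpolating $\SD_1$ satisfies $\mathcal R_{\mathrm{Adv}, \rho}(f,\mu) \ge \frac14\mu(\mathcal{C}) = \frac18 = \Omega_{\rho}(1)$.

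\textbf{The $\SD_2$ upper bound.} For the second bullet I would exhibit $f$ explicitly. In $\SD_2$ only tail samples are ever flipped, so all noisy (label-$1$) points lie inside the tail intervals while every head sample keeps label $0$. I would choose $r$ with $0 < r < \min\!\left(\tfrac12 - \rho,\ \operatorname{dist}(\{\text{clean samples}\},\{\text{noisy samples}\})\right)$; the second term is positive almost surely because the $m$ samples are distinct. Let $f(x) = 1$ exactly when $\operatorname{dist}(x, \{\text{noisy samples}\}) \le r$ and $f(x) = 0$ otherwise; this interpolates $\SD_2$. Since $f^* \equiv 0$, a point $x$ is adversarially vulnerable iff $\operatorname{dist}(x, \{\text{noisy samples}\}) \le \rho + r < \frac12$; as $\frac12$ is the gap length, a noisy point in tail interval $j$ can reach only interval $j$, and in particular no head point is ever vulnerable (the head ends at $A+\frac12$, the nearest noisy point is $\ge A+1$). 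Hence $\mathcal R_{\mathrm{Adv}, \rho}(f,\mu) \le \#\{\text{tail intervals containing a noisy sample}\}\cdot\frac1{2B} \le \#\{\text{noisy samples}\}\cdot\frac1{2B}$. A Chernoff bound on the number of tail samples ($\mathrm{Bin}(m,\tfrac12)$) and on the flips among them ($\mathrm{Bin}(\cdot, 2\eta)$) gives $\#\{\text{noisy samples}\} = O(\eta m)$ with probability $1-\delta$, so with $m = \tilde{\Theta}_{\rho}(A)$ we obtain $\mathcal R_{\mathrm{Adv}, \rho}(f,\mu) = O_{\rho}(\eta m / B) = O_{\rho}(A/B)$, up to the logarithmic factor carried by $m$. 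A union bound then yields both bullets simultaneously with probability $1-\delta$.

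\textbf{Main obstacle.} There is no deep step here; the one genuinely delicate point is the joint calibration of $m = \tilde{\Theta}_{\rho}(A)$. It must be large enough for \Cref{thm:adv-risk-lower-bound} to fire on the head --- which forces $m \gtrsim \frac{N}{\eta}\log\frac{N}{\delta}$ with $N = \Theta_{\rho}(A)$ --- and at the same time small enough that the resulting $O(\eta m)$ noisy tail points infect only an $O(\eta m / B) = o(1)$ fraction of the total mass; this is exactly where the hypothesis ``$A$ much smaller than $B$'' is used. The only other thing to be careful about is that each noisy point's influence stays inside a single support interval, which is precisely the role of the assumption $\rho < \frac12$.
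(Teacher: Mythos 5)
Your proposal is correct and follows essentially the same route as the paper: the $\SD_1$ bullet by invoking \Cref{thm:adv-risk-lower-bound} with $\mathcal{C}$ the union of the $A$ head intervals (giving $N=\Theta_\rho(A)$, $\mu(\mathcal{C})=\tfrac12$, risk $\ge\tfrac18$), and the $\SD_2$ bullet by exhibiting an interpolator that deviates from $f^*\equiv 0$ only in tiny neighborhoods of the flipped tail samples and bounding the vulnerable mass by the number of affected tail intervals times $\tfrac1{2B}$. The only (harmless) differences are cosmetic: you use $r$-balls around noisy points instead of pointwise exceptions, and a Chernoff bound on the number of flips ($O(\eta m)$) where the paper uses a cruder Hoeffding bound on the tail-sample count, which if anything gives you a slightly cleaner dependence on $\eta$.
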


A similar distribution was previously used as a representative long
tailed distribution in the context of privacy and fairness
in~\citet{sanyal22a}. Our result can also be extended to more
complicated long-tailed distributions with a similar strategy. 
~\Cref{prop:benign-label-noise} implies that the the
first noise model~(for \(\mathcal{D}_1\)) induces \(\Omega(1)\)
adversarial risk on all interpolators. On the other hand, for the
second noise model i.e. for \(\mathcal{D}_2\), it is possible to
obtain interpolators with adversarial risk on the order of
\(\left(\frac{ A}{B} \right)\).  Thus, for distributions where \(A\ll
B\), this implies the existence of almost robust interpolators despite
having the same label noise rate.%

\paragraph{Real-world noise is more benign than uniform label noise}
To support our argument that real world noise models are, in fact,
more benign than uniform noise models, we consider the noise induced
by human annotators in \citet{wei2022learning}. They propose a new
version of the CIFAR10/100 dataset~\citep{krizhevsky2009learning}
where each image is labelled by three human annotators. Known as
CIFAR-10/100-n, each example's label is decided by a majority vote on
the three annotated labels. We train ResNet34 models till
interpolation on these two datasets. The label noise rate, after the
majority vote, is \(\approx 9\%\) in CIFAR10 and \(\approx 40\%\) in
CIFAR100. We repeat the same experiment for uniform label noise with
the same noise rates, and also without any label noise.  Each of these
models' adversarial error is evaluated with an \(\ell_\infty\) PGD
adversary  plotted
in~\Cref{fig:human-uniform-clean-10,fig:human-uniform-clean-100}.

\Cref{fig:human-uniform-clean-10,fig:human-uniform-clean-100} show
that, for both CIFAR10 and CIFAR100, uniform label noise is indeed
worse for adversarial risk than human-generated label noise. For
CIFAR-10, the model that interpolates human-generated label noise is
almost as robust as the model trained on clean data.  This supports
our argument that real-world label noise is more benign, for
adversarial risk, than uniform label noise.\looseness=-1

An important direction for future research is understanding what types of
label noise models are useful mathematical proxies for realistic label
noise. We shed some light on this question using the idea of {\em
memorisation score}~\citep{feldman2020neural}.  Informally,
memorisation score quantifies the atypicality of a sample; it measures
the increase in the loss on a data point when the learning algorithm
does not observe it during training compared to when it does. A high
memorisation score indicates that the point is unique in the training
data and thus, likely, lies in the long tail of the distribution.
In~\Cref{fig:mem-score}, we plot the average memorisation score of
each class of CIFAR10 in brown, and the average for images that were
mislabeled by the human annotator in blue.  It is clearly evident
that the mislabeled images have a higher memorisation score. This
supports our hypothesis~(also in~\citet{feldman2020does}) that, in the
real world, examples in the long tail, are more likely to be
mislabeled.

\begin{figure}[t]
  \begin{subfigure}{0.26\linewidth}
    \includegraphics[width=0.99\linewidth]{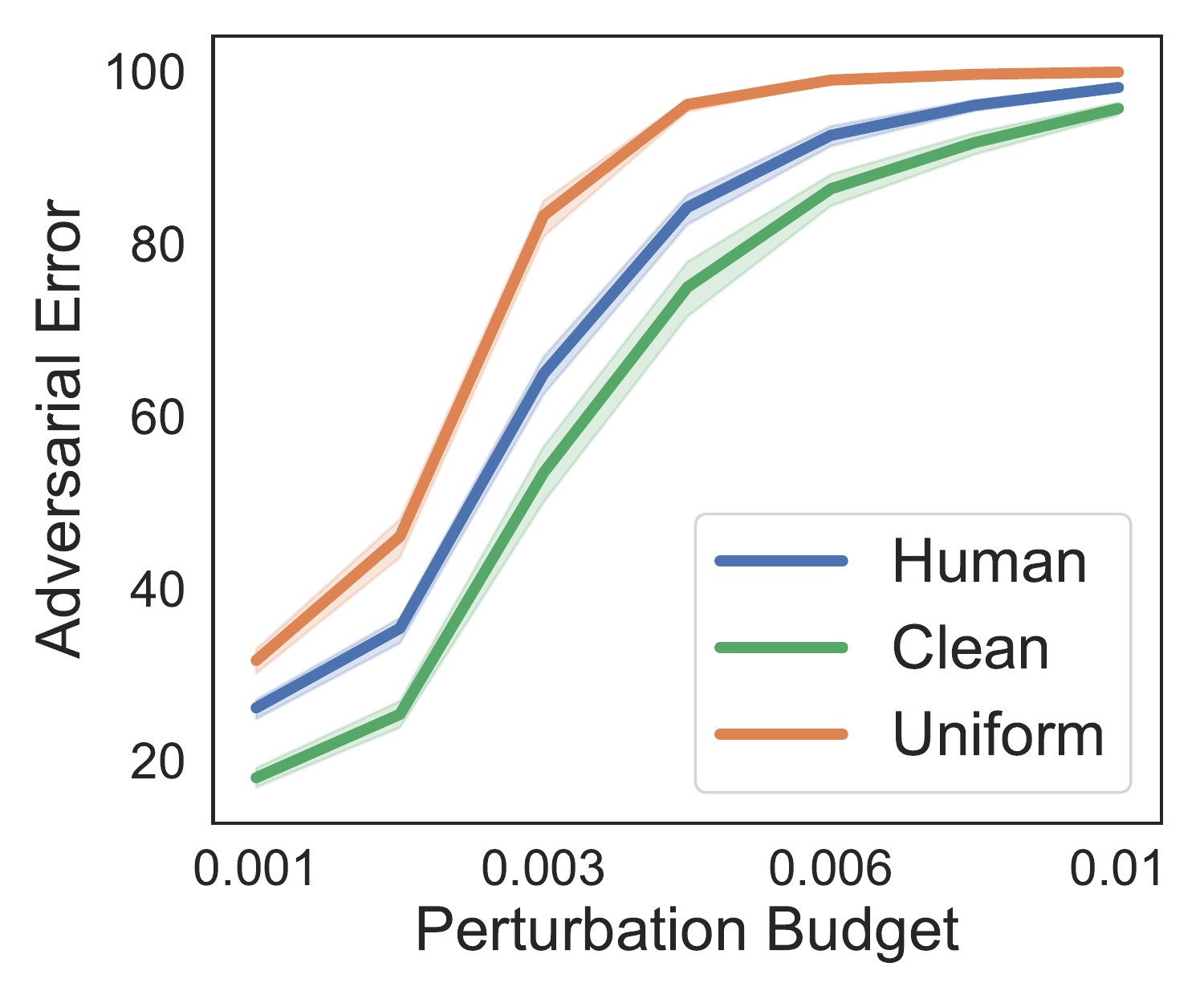}
    \subcaption{CIFAR10}
    \label{fig:human-uniform-clean-10}
  \end{subfigure}
  \begin{subfigure}{0.26\linewidth}
    \includegraphics[width=0.99\linewidth]{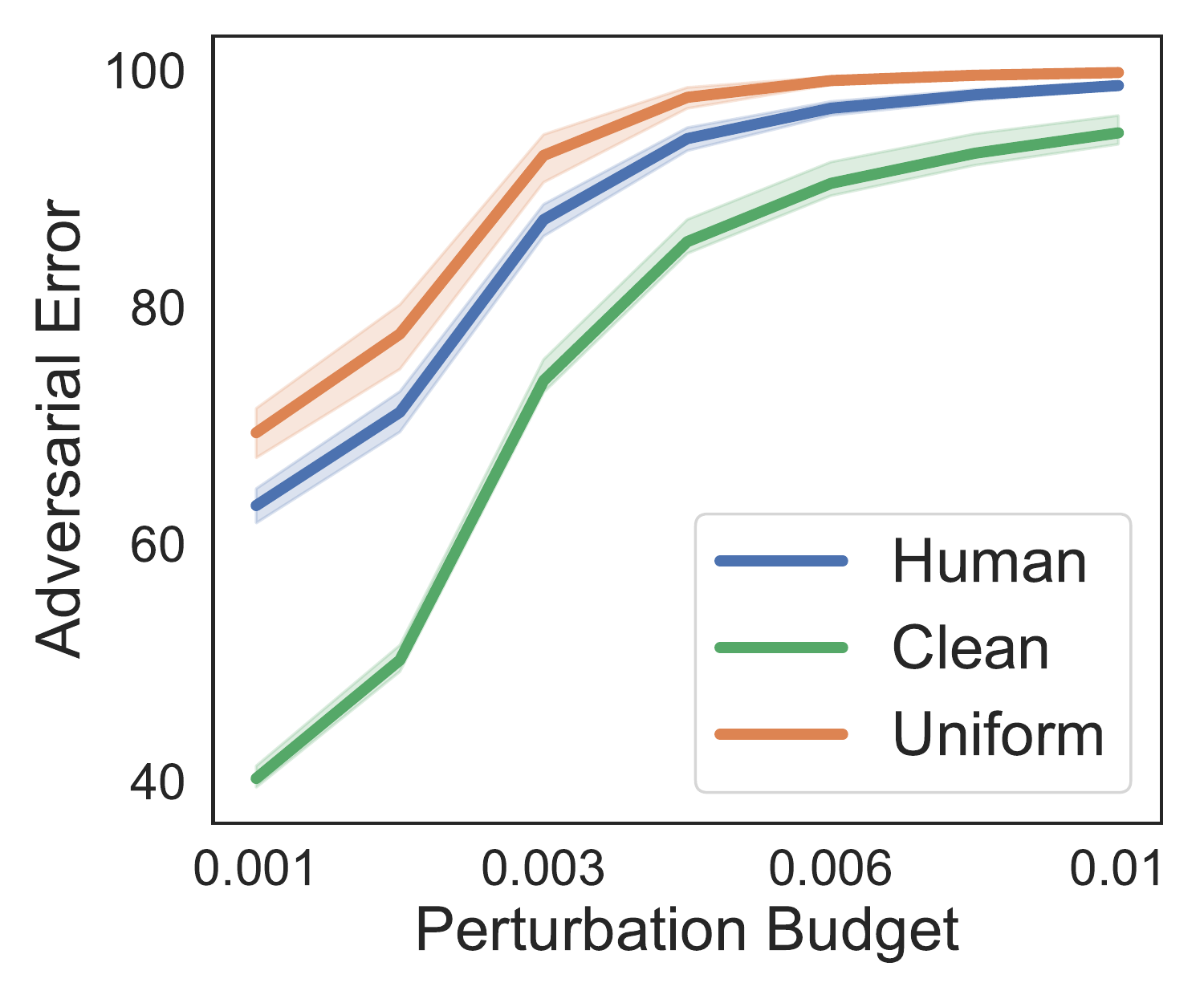}
    \subcaption{CIFAR100}
    \label{fig:human-uniform-clean-100}
  \end{subfigure}
  \begin{subfigure}{0.47\linewidth}
    \includegraphics[width=0.99\linewidth]{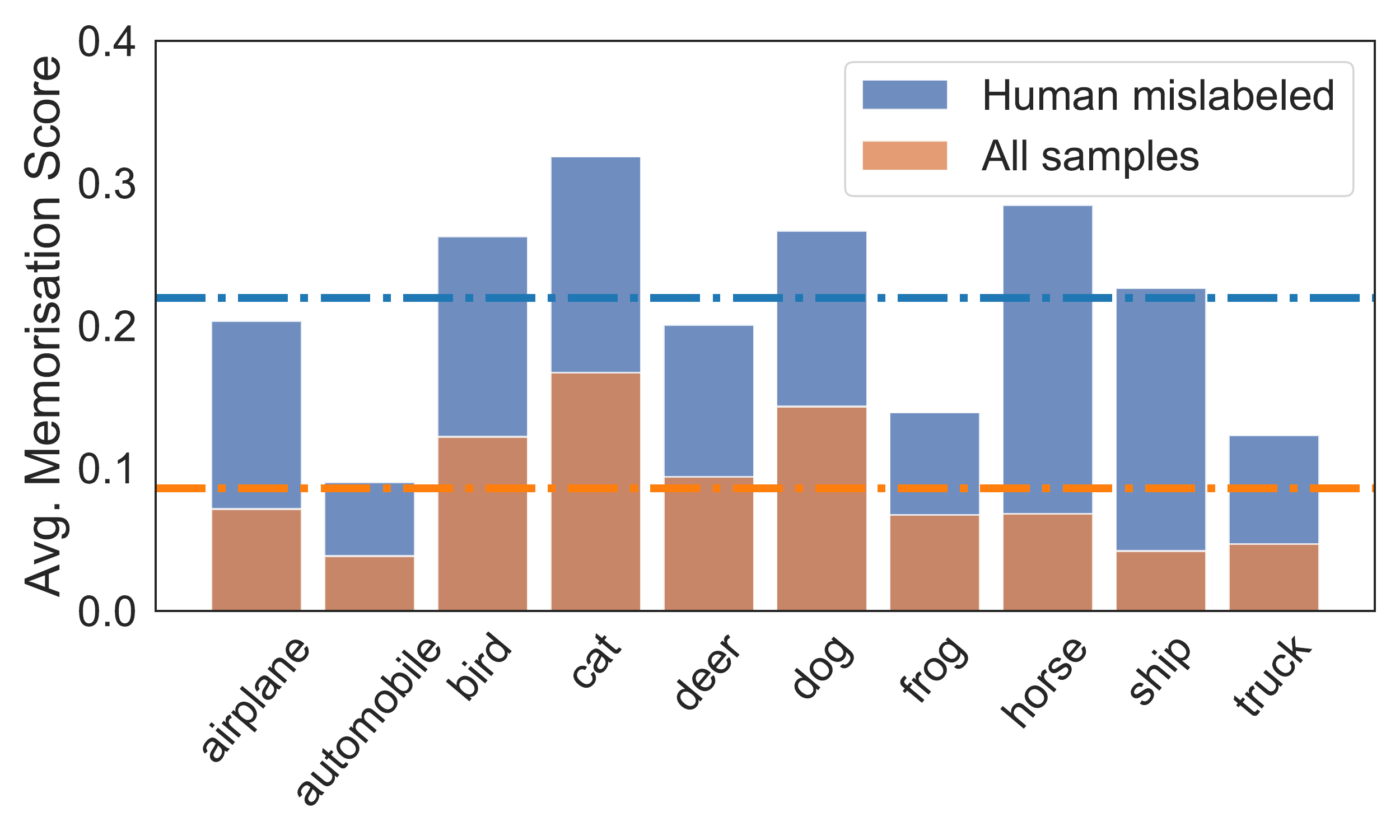} \vspace{-3em}
    \subcaption{CIFAR10}
    \label{fig:mem-score}
  \end{subfigure}
  \caption{\Cref{fig:human-uniform-clean-10,fig:human-uniform-clean-100}
  plots adversarial risk against perturbation budget on CIFAR10 and
  CIFAR100 datasets respectively, with three label noise models.
  {\color{clean} Clean} denotes the original CIFAR-10/100 labels.
  {\color{human} Human} denotes the human-generated labels from
  \citep{wei2022learning}.  {\color{uniform} Uniform} refers to
  uniformly random label noise with the same rate as {\color{human}
  Human}.  \Cref{fig:mem-score} plots the average memorisation score
  per class for: {\color{human} Human mislabeled} examples i.e.
  examples with {\color{human} Human} noisy labels and
  {\color{uniform} All} samples.  The horizontal line is the average
  across all classes.  The higher score of human mislabeled examples
  indicates that those examples belong to the long tail of the
  distribution.\looseness=-1}
  \label{fig:all-exp-human}
\end{figure}

\section{The role of inductive bias}
\label{sec:wrong-function-class}
We have seen, in~\Cref{sec:sample-size}, that without further
assumptions the theoretical guarantees in
\Cref{thm:adv-risk-lower-bound} only hold for very large training
sets. In this section, we discuss how the inductive bias of the
hypothesis class or the learning algorithm can lower the sample size
requirement and point to recent work that indicates the presence of
such inductive biases in neural networks.

\paragraph{Inductive bias can hurt robustness even further}
 There is ample empirical
 evidence~\citep{ortiz2020neural,kalimeris2019sgd,shamir2021dimpled}
 that neural networks exhibit an inductive bias that
 is different from what is required for robustness. %
 ~\citet{shah2020pitfalls} also provides empirical evidence that
 neural networks exhibit a certain inductive bias, that they call
 simplicity bias, that hurts adversarial
 robustness.~\citet{ortiz2022catastrophic} show that this is also
 responsible for a phenomenon known as catastrophic overfitting. 

 Here, we show a simple example to illustrate the role of inductive bias.
 Consider a binary classification problem on a data distribution
 \(\mu\) and a dataset \(S_{m,\eta}\) of $m$ points, sampled i.i.d. from
 \(\mu\) such that the label of each example is flipped with
 probability \(\eta\). 

\begin{restatable}{theorem}{thmrepreparinter}
  \label{thm:repre-par-inter} 
For any \(\rho>0\), there
exists a distribution \(\mu\) on $\RR^2$ and two hypothesis classes~\(\SH\) and
\(\SF\), 	such that for any label noise rate
	\(\eta\in\br{0,\nicefrac{1}{2}}\) and dataset size
	\(m=\Theta\br{\frac{1}{\eta}}\), in expectation we have that:
for all \(h\in\mathcal{H}\) that interpolate \(S_{m,\eta}\), 
\begin{align}
\mathcal{R}_{\mathrm{Adv},\rho}\br{h, \mu} \geq\Omega\br{1} ;
\end{align}
whereas there exists an \(f \in \mathcal{F}\) that interpolates
\(S_{m,\eta}\) and
\(\mathcal{R}_{\mathrm{Adv},\rho}\br{f; \mu} =\mathcal{O}\br{\rho}\).
\end{restatable}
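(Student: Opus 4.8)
\noindent\textit{Proof sketch}~
I would prove the statement by an explicit construction. Work in $\RR^2$ with, say, the $\norm{\cdot}_\infty$ norm, and fix $\rho>0$. Let $P := [0,1]\times[0,\rho/2]$ and $N := [10,11]\times[10,10+\rho/2]$ be two rectangles that are \emph{wide} (length $1$) in the first coordinate but only $\rho/2$-\emph{thin} in the second; put $\mu := \tfrac12\,\mathrm{Unif}(P)+\tfrac12\,\mathrm{Unif}(N)$ and $f^*(x):=\one[x^{(1)}<5]$. Then $f^*$ is constant on each rectangle with margin at least $4\gg\rho$, so $\mathcal R_{\mathrm{Adv},\rho}(f^*,\mu)=0$ (for $\rho\ge 4$ rescale the separations; the distribution may depend on $\rho$). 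Take $\mathcal{H}:=\{x\mapsto\psi(x^{(2)}):\psi\text{ measurable}\}$ and $\mathcal{F}:=\{x\mapsto\phi(x^{(1)}):\phi\text{ measurable}\}$, i.e.\ classifiers looking only at the thin, resp.\ the wide, coordinate. Since $\mu$ is continuous, almost surely the sample has pairwise distinct coordinates, so each class contains an interpolator of $S_{m,\eta}$; moreover $f^*\in\mathcal{F}$ and $\one[x^{(2)}<5]\in\mathcal{H}$ show that both classes can interpolate \emph{clean} data robustly. With $m=\Theta(1/\eta)$ the expected number of flipped labels is $m\eta=\Theta(1)$, which is the crucial balance: large enough that a flip occurs with probability $\Omega(1)$, small enough that the flips cost $\mathcal{F}$ only $O(\rho)$.

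\emph{Lower bound for $\mathcal{H}$.} With probability $1-(1-\eta)^m=\Omega(1)$ there is a flipped sample point $q$, lying in $P$ or $N$; say $q\in P$, so $q$ carries label $0$ while $f^*(q)=1$ (the other case is symmetric). Any interpolating $h=\psi(x^{(2)})\in\mathcal{H}$ has $\psi(q^{(2)})=0$. But for \emph{every} $x\in P$ both $x^{(2)}$ and $q^{(2)}$ lie in $[0,\rho/2]$, so $z:=(x^{(1)},q^{(2)})$ satisfies $\norm{z-x}_\infty\le\rho/2<\rho$ and $h(z)=\psi(q^{(2)})=0\neq 1=f^*(x)$; hence every $x\in P$ is adversarially vulnerable and $\mathcal R_{\mathrm{Adv},\rho}(h,\mu)\ge\mu(P)=\tfrac12$. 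Taking expectation over the sample, $\mathbb{E}\big[\min_{h\in\mathcal{H}\text{ interp.}}\mathcal R_{\mathrm{Adv},\rho}(h,\mu)\big]\ge\tfrac12\,(1-(1-\eta)^m)=\Omega(1)$.

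\emph{Upper bound for $\mathcal{F}$.} Let $Q$ be the set of flipped points, $\mathbb{E}|Q|=m\eta=\Theta(1)$. Define $\phi(u):=\one[u<5]$ except that $\phi(q^{(1)})$ is set to the flipped label of $q$ for each $q\in Q$ (well defined, since labels depend only on $x^{(1)}$ and the $q^{(1)}$ are a.s.\ distinct); then $f:=\phi(x^{(1)})\in\mathcal{F}$ interpolates $S_{m,\eta}$ and equals $f^*$ off the null set $\bigcup_{q\in Q}\{x^{(1)}=q^{(1)}\}$. Since $f^*$ is $\rho$-robust and $\supp\mu$ keeps $x^{(1)}$ at distance $>\rho$ from the threshold $5$, a point $x$ is vulnerable for $f$ only if $|x^{(1)}-q^{(1)}|\le\rho$ for some $q\in Q$; as the first-coordinate marginal of $\mu$ is a mixture of two uniforms on unit intervals, a union bound gives $\mathcal R_{\mathrm{Adv},\rho}(f,\mu)\le\sum_{q\in Q}2\rho=2\rho\,|Q|$, so $\mathbb{E}\big[\mathcal R_{\mathrm{Adv},\rho}(f,\mu)\big]\le 2\rho\,m\eta=O(\rho)$, which upper-bounds $\mathbb{E}\big[\min_{f\in\mathcal{F}\text{ interp.}}\mathcal R_{\mathrm{Adv},\rho}(f,\mu)\big]$.

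\emph{Main obstacle.} The delicate point is co-designing $\mu$ and $\mathcal{H}$: a single flipped point already forces \emph{any} interpolator to disagree with $f^*$ on a $\rho$-ball around it, so driving $\mathcal{H}$ all the way to $\Omega(1)$ risk requires $\mu$ to be concentrated on a set that is $\le\rho$-thin in the direction $\mathcal{H}$ is sensitive to, while $\mathcal{F}$'s cheap repair needs that same set to be ``thick'' (every $\rho$-slab has measure $O(\rho)$) in the direction $\mathcal{H}$ ignores; coordinate-projection classes make this dichotomy sharp while staying expressive enough for interpolators to exist. Extending the example to $\mathcal{H}$ resembling neural networks, or to long-tailed $\mu$ as in \Cref{prop:benign-label-noise}, would be natural next steps but are not needed here.
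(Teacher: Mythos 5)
Your proof is correct and uses a genuinely different construction than the paper's. The paper supports $\mu$ on the segment $[0,W]\times\{0\}$, takes $\SF$ to be first-coordinate thresholds augmented by memorizing the noisy training points (much like your $\SF$), but takes $\SH$ to be unions of T-shaped regions whose narrow stems lie on the data manifold and whose wide heads of width $\gamma>\rho$ sit just above it: an interpolating $h\in\SH$ must place a head over each mislabeled $1$, and the stretch of manifold beneath that head becomes adversarially vulnerable via a small upward perturbation, so the risk scales as $\Omega(\gamma m\eta/W)$ and $\gamma$ must be chosen $\Theta(W)$ to reach $\Omega(1)$. Your construction replaces this with two $\rho/2$-thin rectangles and takes $\SH$, $\SF$ to be the two coordinate-projection classes, so that a single flipped point in a rectangle contaminates the \emph{entire} rectangle for every interpolating $h\in\SH$, giving $\Omega(1)$ directly with no free width parameter; the $O(\rho)$ upper bound for $\SF$ then proceeds essentially as in the paper. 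What the T-shape construction buys the paper is the explicit link to the dimpled-manifold narrative in \Cref{sec:wrong-function-class} and the decision-boundary experiments in \Cref{app:exp-ind-bias}; what yours buys is a shorter and more symmetric argument. You also correctly flag that the rectangles must be spaced proportionally to $\rho$ when $\rho$ is large, a detail the paper handles uniformly by stipulating $W\gg\rho$. Both routes deliver the stated bounds.
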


\rebuttal{ The classes \(\SH\) and \(\SF\) are precisely defined in
\Cref{app:ind-bias} where a formal proof is given as well}, but we
provide a proof sketch here.  The data distribution \(\mu\)
in~\Cref{thm:repre-par-inter} is uniform on the set~\([0,
W]\times\{0\}\), that is, the data is just supported on the first
coordinate where \(W\gg \rho>0\).  
The ground truth is a threshold function on the first coordinate.
\rebuttal{The constructed \(f\in\SF\) simply labels everything
according to the ground truth classifier~(which is a threshold
function on the first coordinate) except the mislabeled data points;
where it constructs infinitesimally small intervals around the point
on the first coordinate}. Note that this construction is similar to
the one in~\Cref{prop:sphere-label-noise}. By design, it interpolates
the training set and its expected adversarial risk is upper bounded by
\(2m\eta\rho\). \looseness=-1

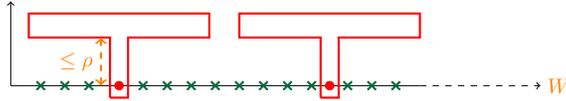
\begin{figure}[t]\centering
      \scalebox{0.8}{\begin{tikzpicture}
        \draw[] (1.2,0) -- (8,0);\draw[dashed, ->] (8,0) -- (10,0)
        node[right, orange]{\(W\)}; \filldraw [red] (3,0) circle
        (2pt);
        \filldraw [red] (6.5,0) circle (2pt);
        \draw[->] (1.2,0) -- (1.2,1+0.4);

        \draw[-, red, line width=1pt] (2.85,-0.2) --
        (3.15,-0.2) -- (3.15, 0.4+0.4) -- (4.5, 0.4+0.4) -- (4.5, 0.8+0.4) -- (1.5,0.8+0.4) -- (1.5, 0.4+0.4) -- (2.85, 0.4+0.4) -- (2.85, -0.2); 
        \draw[dashed, <->, orange, line width=1pt] (2.7, 0) -- node[midway,left] {\(\leq\rho\)}  (2.7, 0.8);
        \draw[-, red, line width=1pt] (2.85+3.5,-0.2) --
        (3.15+3.5,-0.2) -- (3.15+3.5, 0.4+0.4) -- (4.5+3.5, 0.4+0.4) -- (4.5+3.5, 0.8+0.4) --
        (1.5+3.5,0.8+0.4) -- (1.5+3.5, 0.4+0.4) -- (2.85+3.5, 0.4+0.4) -- (2.85+3.5, -0.2); 
        \draw (1.7,0 )
      node[cross,cadmiumgreen,line width=1pt] {}; \draw (2.1,0 )
      node[cross,cadmiumgreen,line width=1pt] {}; \draw (2.5,0 )
      node[cross,cadmiumgreen,line width=1pt] {}; \draw (3.4,0 )
      node[cross,cadmiumgreen,line width=1pt] {}; \draw (3.8,0 )
      node[cross,cadmiumgreen,line width=1pt] {}; \draw (4.2,0 )
      node[cross,cadmiumgreen,line width=1pt] {}; \draw (2.2+2.4,0 )
      node[cross,cadmiumgreen,line width=1pt] {}; \draw (2.6+2.4,0 )
      node[cross,cadmiumgreen,line width=1pt] {}; \draw (3.+2.4,0 )
      node[cross,cadmiumgreen,line width=1pt] {}; \draw (3.4+2.4,0 )
      node[cross,cadmiumgreen,line width=1pt] {}; \draw (3.4+2.4+0.4,0 )
      node[cross,cadmiumgreen,line width=1pt] {}; \draw (3.4+2.4+1,0 )
      node[cross,cadmiumgreen,line width=1pt] {}; \draw (3.4+2.4+1.4,0 )
      node[cross,cadmiumgreen,line width=1pt] {}; \draw (3.4+2.4+1.8,0 )
      node[cross,cadmiumgreen,line width=1pt] {};

    \end{tikzpicture}}%
          \caption{Visualization of a portion of the distribution
          \(\mu\) and the hypothesis class \(\SH\) used in~\Cref{thm:repre-par-inter}.~The crosses are the
          mislabeled examples and the circles are correctly labelled
          examples. All the circles are adversarially
          vulnerable to upwards perturbations of magnitude less than \(\rho\).} 
          \label{fig:t-shape-region}%
        \end{figure}

\rebuttal{Each hypothesis in the hypothesis class \(\SH\) can be
thought of as a union of T-shaped decision regions.  The region inside
the T-shaped regions are classified as \(1\) and the rest as \(0\).}
Note that the ``head'' of a T-shaped region make the region on the
data manifold~(first coordinate) directly below them adversarially
vulnerable. \looseness=-1
The width of the T can be interpreted as the inductive bias of the learning algorithm. 
The decision boundaries of neural networks usually lie on the data manifold 
~\citep{somepalli2022can}; and the network behaves more smoothly off the data manifold. 
A natural consequence of this is that the head of the Ts are large. 
This is not the exact explanation of inductive bias in neural networks,
but rather an illustrative example for what might be happening in practice.
\rebuttal{In~\Cref{app:exp-ind-bias}, we provide
experimental evidence to show this type of behaviour for neural networks.}\looseness=-1

There are two important properties of this simple example relevant for
understanding adversarial vulnerability of neural networks.  First,
the adversarial examples constructed here are off-manifold: they do
not lie on the manifold of the data. This has been observed in prior
works~\citep{hendrycks2016baseline,sanyal2018robustness,
khoury2018geometry}.  Second, our examples implicitly exhibit the {\em
dimpled manifold} phenomenon recently described
in~\citet{shamir2021dimpled}. \looseness=-1

\paragraph{Is \Cref{thm:adv-risk-lower-bound} about the wrong function
class?} When fitting deep neural networks to real datasets, the
results of \Cref{thm:adv-risk-lower-bound} still hold even when the
number of samples $m$ is much smaller than required, as can be seen in
\Cref{fig:ind-bias-adv-pic}. 
We think that proving guarantees on adversarial risk in the presence of label noise is within reach for simple neural network settings. 
Towards this goal, we propose a conjecture in a similar vein to \cite{bubeck2020law}:

\begin{conjecture}
  Let $f : \RR^d \to \RR$ be a neural network with a single hidden layer with $k$ neurons.
  Under the same conditions as in \Cref{thm:adv-risk-lower-bound},
  for the number of samples $m = \widetilde{\Omega}(\frac1\eta \textrm{poly}(k, d))$,
  \begin{align}
    \mathcal R_{\mathrm{Adv}, \rho}(f, \mu) \ge \text{const.}
  \end{align}
  for a distribution $\mu$ supported on $[0, 1]^d$.
\end{conjecture}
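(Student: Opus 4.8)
\noindent\textit{Proof proposal.}
The plan is to avoid the explicit cover of $\supp(\mu)$ by $\rho$-balls that makes the sample complexity of \Cref{thm:adv-risk-lower-bound} exponential in $d$, and instead exploit that a $k$-neuron network has bounded capacity, reducing the conjecture to a ``capacity versus label noise'' statement. Write $\hat f = \one_{f > 1/2}$ for the classifier read off the network and $\mathcal{W}_f = \{x : \hat f(x) \neq f^*(x)\}$ for the region where it disagrees with the ground truth. Two elementary facts are the whole point. First, taking the adversarial perturbation $z = x$ in \Cref{eq:adversarial-risk} already gives $\mathcal R_{\mathrm{Adv}, \rho}(f, \mu) \ge \mu(\mathcal{W}_f)$, so it suffices to lower bound $\mu(\mathcal{W}_f)$. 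Second, every label-noise training point $z_i$, namely the ones whose observed label equals $1 - f^*(z_i)$, is forced into $\mathcal{W}_f$ by interpolation; and since the coin flips are independent of the sample locations, conditioned on their number the $\Theta(\eta m)$ flipped points are fresh i.i.d.\ draws from $\mu$. So the conjecture reduces to showing that if the disagreement region of a one-hidden-layer $k$-neuron network contains $\Theta(\eta m)$ i.i.d.\ samples of $\mu$, then $\mu(\mathcal{W}_f) = \Omega(1)$ whenever $m = \widetilde\Omega(\operatorname{poly}(k,d)/\eta)$.

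The plan for the reduced statement is a uniform-convergence argument whose crux is to bound the complexity of the family $\{\mathcal{W}_g : g \text{ a one-hidden-layer } k\text{-neuron network}\}$. The sets $\{g > 1/2\}$ are sublevel sets of a piecewise-linear function with $O(kd)$ parameters, so $\{\{g > 1/2\}\}$ has VC dimension $V = \widetilde O(kd)$; since taking symmetric difference with the fixed set $\{f^* = 1\}$ preserves VC dimension, $\{\mathcal{W}_g\}$ also has VC dimension $\widetilde O(kd)$. VC uniform convergence over this class shows that once $\eta m$ exceeds $kd$ by a suitable logarithmic factor, every $k$-neuron network $g$ has $|\widehat{\mu}_{\mathrm{flip}}(\mathcal{W}_g) - \mu(\mathcal{W}_g)| \le \tfrac12$ with high probability, where $\widehat{\mu}_{\mathrm{flip}}$ is the empirical measure on the flipped points. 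Applied to the actual interpolator $f$, for which $\widehat{\mu}_{\mathrm{flip}}(\mathcal{W}_f) = 1$ by the second fact above, this gives $\mu(\mathcal{W}_f) \ge \tfrac12$ and hence $\mathcal R_{\mathrm{Adv}, \rho}(f, \mu) \ge \tfrac12$; it needs only $m = \widetilde\Omega(kd/\eta)$, which is of the conjectured form, and in fact yields adversarial risk $1 - o(1)$.

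The delicate point, and presumably the reason this is stated as a conjecture, is that the argument lives in a narrow window: a $k$-neuron network cannot interpolate an $\eta$-noisy labeling of $m$ points once $m$ substantially exceeds $\operatorname{poly}(k,d)/\eta$, since by Sauer's lemma the class realizes at most $(em/V)^{V}$ labelings, far below the $\approx 2^{m H(\eta)}$ typical noisy labelings. So ``$f$ interpolates'' and ``VC uniform convergence has kicked in'' coexist only up to the polylog factors and absolute constants hidden in the two thresholds, which would have to be pinned down carefully, along with the usual nuisances of exact versus approximate interpolation and the choice of threshold defining $\hat f$.

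A more robust route, and the one the comparison with \citet{bubeck2020law} suggests, replaces the threshold-insensitive VC dimension by the norm-sensitive capacity behind the law of robustness: for $\mu = \mathrm{Unif}([0,1]^d)$, which has the requisite isoperimetric concentration~\citep{bubeck2021universal}, any $k$-neuron interpolator of the $\eta$-noisy sample must have Lipschitz constant $\widetilde\Omega\!\left(\sqrt{\eta m / \operatorname{poly}(k,d)}\right)$, which is large in the conjectured regime. The obstacle is then the reverse one: a large \emph{global} Lipschitz constant only certifies one adversarial example, whereas adversarial risk needs $\Omega(1)$ mass, so one would want a \emph{localized} law of robustness, obtained by splitting $[0,1]^d$ into $\Theta(\operatorname{poly}(k,d))$ subcubes each still carrying $\widetilde\Omega(\operatorname{poly}(k,d))$ flipped points, forcing a threshold crossing of $f$ inside each, and arguing that the resulting $\rho$-balls lie near genuine flipped points and contribute disjoint $\Omega(1/\operatorname{poly}(k,d))$ pieces of $\mathcal{W}_f$; reconciling these per-subcube and total sample budgets, and handling a non-robust $f^*$ via the restriction trick from the remark after \Cref{corr:improved-thm1-supp}, is the residual work.
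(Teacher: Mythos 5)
The statement you are attempting to prove is stated as an open \emph{conjecture} in the paper; there is no paper proof to compare against, and the authors explicitly frame it as a direction for future work in the vein of \citet{bubeck2020law}. So your proposal must be judged purely on its own merits.

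Your elementary observations are all correct: $\mathcal R_{\mathrm{Adv}, \rho}(f, \mu) \ge \mu(\mathcal{W}_f)$; every flipped point lands in $\mathcal{W}_f$ by interpolation; and, conditioned on the flip indicators, the flipped points form a fresh i.i.d.\ sample from $\mu$, so uniform convergence over the fixed class $\{\mathcal{W}_g\}$ may legitimately be applied to the data-dependent $f$. The VC bookkeeping (sublevel sets of a $k$-neuron network have VC dimension $\widetilde{O}(kd)$, symmetric difference with the fixed set $\{f^* = 1\}$ preserves it) is also standard and essentially right. However, there are two substantial issues with the VC route as the main argument. First, note that your bound never uses the perturbation radius $\rho$ at all: you lower bound the adversarial risk by the \emph{test error}, and the VC argument would force $\mu(\mathcal{W}_f) \ge \tfrac12$. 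That conclusion, if obtained, does not capture the phenomenon the paper is pointing at. The cited law of robustness, the benign-overfitting literature the paper builds on, and the empirical observations in \Cref{fig:ind-bias-adv-pic} are all about a regime where interpolators of noisy data retain good test accuracy yet become adversarially brittle; a proof that concludes ``test error is at least one half'' is either vacuous (no such interpolator exists) or sits in a crude underparameterized regime far from the intended one, and in either case does not engage with $\rho$ or with adversarial vulnerability specifically. Second, you correctly flag the ``window'' problem, and it is sharper than you let on: the threshold $\eta m \gtrsim V$ for uniform convergence at accuracy $\tfrac12$ and the Sauer threshold beyond which an $\eta$-noisy labeling is no longer realizable by the class are separated only by factors logarithmic in $1/\eta$, so the regime where the argument says anything non-vacuous and interpolation is still possible is razor-thin and constant-dependent. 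This cannot be brushed aside as a technicality; it is precisely the gap that makes the VC route unlikely to yield a meaningful statement.

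Your second, Lipschitz-based route is the one in the spirit the paper intends, and you identify the right obstacle: a global Lipschitz lower bound on $f$ certifies a single steep direction, not $\Omega(1)$ adversarially vulnerable measure, so a localized or averaged law of robustness is needed. That localization, together with the accounting of per-subcube flipped-point budgets and the separation between disagreement mass coming from $\rho$-balls around flips versus from the base decision boundary of $f^*$, is exactly the residual difficulty that makes this a conjecture rather than a theorem. In short: your first route is internally consistent but proves too blunt a statement in too narrow a regime; your second route is aimed correctly but is, as you concede, a plan rather than a proof.
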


In short, we conjecture that neural networks exhibit inductive biases
which hurt robustness when interpolating label noise.
\citet{dohmatob2021tradeoffs} show a similar result for a large class of neural networks. 
However, the Bayes optimal error in their setting is a positive constant (as opposed to zero in our setting), and we assume uniform label noise in the training set. 
Understanding these properties is important for training on real-world data, 
where label noise is not a possibility but rather a norm. \looseness=-1

\newpage
\subsubsection*{Acknowledgments}
Amartya Sanyal acknowledges the ETH AI Center for the postdoctoral fellowship.
We thank Mislav Balunović for discussing and checking the proof of \Cref{thm:adv-risk-lower-bound},
Domagoj Bradač for discussing \Cref{lemma:greedy-subcover}, and Jacob Clarysse and Florian Tram{\`e}r for general feedback.

\bibliographystyle{plainnat}
\bibliography{references}

\appendix
\crefalias{section}{appendix}
\onecolumn

\section{Proof of \Cref{thm:adv-risk-lower-bound}}
\label{app:proof-of-improved-thm1}

Here we prove the following statement:

\thmimproved*

For notational convenience, we replace $\rho$ by $2 \rho$ in all places for the proof below.

\begin{proof}
  Without loss of generality, let
  $ \mathcal C_0 = \set{ \bm{x} \in \SC : f^*(\bm{x}) = 0} $ have probability
  $\mu(\mathcal C_0) \ge \frac12 \mu(\mathcal C)$. Let $\mu_0 = \mu |_{\mathcal C_0}$, normalized so that
  $\mu_0(\mathcal C_0) = 1$.

  By Chernoff, with probability $1 - \exp(-\frac{\mu(\mathcal C) m}{16}) \ge 1 - \frac{\delta}2$, at least
  $m_0 = \lfloor \frac{\mu(\mathcal C) m}4 \rfloor$ of the samples $\bm{z}_i$ are in $\mathcal C_0$.
  Without loss of generality, let $\bm{z}_1, \ldots, \bm{z}_{m_0}$ be those samples.
  Then
  \begin{align}
    \mathcal R_{\mathrm{Adv}, 2\rho} \left(f, \mu \right)
    &\ge \frac12 \mu(\mathcal C)~ \PP_{\bm{x} \sim \mu, \bm{x} \in \mathcal C_0} \left[ \exists \bm{z} \in \mathcal{B}_{2\rho}(\bm{x}), ~ f^*(\bm{x}) \neq f(\bm{z})) \right]
    \\ &= \frac12 \mu(\mathcal C)~ \PP_{\bm{x} \sim \mu_0} \left[ \exists \bm{z} \in \mathcal{B}_{2\rho}(\bm{x}), ~ f(\bm{z}) \neq 0 \right]
    \\ &\ge \frac12 \mu(\mathcal C)~ \PP_{\bm{x} \sim \mu_0} \left[ \exists~ i \le m_0 ~:~ \bm{z}_i \in \mathcal{B}_{2\rho}(\bm{x}) \cap \mathcal C_0,~ f(\bm{z}_i) \neq 0 \right]
    \\ &= \frac12 \mu(\mathcal C)~ \PP_{\bm{x} \sim \mu_0} \left[ \exists~ i \le m_0 ~:~ \bm{x} \in \mathcal{B}_{2\rho}(\bm{z_i}),~ \bm{z}_i \in \mathcal C_0,~ f(\bm{z}_i) \neq 0 \right].
    \\ &= \frac12 \mu(\mathcal C)~ \mu_0 \left( \bigcup_{i \le m_0,~ f(\bm{z}_i) \neq 0} \mathcal{B}_{2\rho}(\bm{z_i}) \right).
  \end{align}

  Let $\bm{s}_1, \ldots, \bm{s}_N$ be the centers of a minimum $\rho$-covering of $\mathcal C_0$. 

  The plan is the following: we will lower bound
  $\bigcup_{i \le m_0,~ f(\bm{z}_i) \neq 0} \mathcal{B}_{2 \rho}(\bm{z}_i)$
  by the union of some $\mathcal{B}_{\rho}(\bm{s}_k)$,
  which will have large $\mu_0$-measure in total.
  Moreover, each of the chosen $\mathcal{B}_{\rho}(\bm{s}_k)$ will have large
  enough $\mu_0$-measure. For this, we use the following general lemma:

  \begin{lemma}
    \label{lemma:greedy-subcover}
    Let $\bm{s}_1, \ldots, \bm{s}_N$ be the centers of some balls $\mathcal{B}_{r}(\bm{s}_i)$ in $\RR^d$,
    and take any measure $\nu$. Then, for any constant $0 < \alpha < 1$,
    there exists a subset $S \subseteq \set{1, \ldots, N}$ of the balls such that:
    \begin{itemize}
      \item $\nu \left( \bigcup_{i \in S} \mathcal{B}_{r}(\bm{s}_i) \right) \ge (1 - \alpha)~ \nu \left( \bigcup_{i = 1}^{N} \mathcal{B}_{r}(\bm{s}_i) \right)$.
      \item $\nu \left( \mathcal{B}_{r}(\bm{s}_i) \right) \ge \frac{\alpha}{N}~ \nu \left( \bigcup_{i = 1}^{N} \mathcal{B}_{r}(\bm{s}_i) \right)$ for all $i \in S$.
    \end{itemize}
  \end{lemma}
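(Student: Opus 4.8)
The plan is to avoid any iterative construction and simply take $S$ to be the set of \emph{heavy} balls. Write $M := \nu\!\left( \bigcup_{i=1}^N \mathcal{B}_r(\bm{s}_i) \right)$ and define
\[
  S \;:=\; \left\{\, i \in \{1, \ldots, N\} \;:\; \nu\!\left( \mathcal{B}_r(\bm{s}_i) \right) \;\ge\; \tfrac{\alpha}{N}\, M \,\right\}.
\]
With this choice the second bullet of the lemma holds by construction, so the entire content reduces to the first bullet: showing that the discarded (\emph{light}) balls together carry at most an $\alpha$-fraction of the total mass $M$. The degenerate case $M = 0$ is trivial (take $S = \emptyset$), so assume $M > 0$.

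For the first bullet I would argue by subadditivity twice. Every index $i \notin S$ satisfies $\nu\!\left( \mathcal{B}_r(\bm{s}_i) \right) < \tfrac{\alpha}{N} M$, and there are at most $N$ such indices, so
\[
  \nu\!\left( \bigcup_{i \notin S} \mathcal{B}_r(\bm{s}_i) \right)
  \;\le\; \sum_{i \notin S} \nu\!\left( \mathcal{B}_r(\bm{s}_i) \right)
  \;\le\; N \cdot \tfrac{\alpha}{N}\, M
  \;=\; \alpha\, M .
\]
Then, applying subadditivity to the decomposition $\bigcup_{i=1}^N \mathcal{B}_r(\bm{s}_i) = \left( \bigcup_{i \in S} \mathcal{B}_r(\bm{s}_i) \right) \cup \left( \bigcup_{i \notin S} \mathcal{B}_r(\bm{s}_i) \right)$,
\[
  \nu\!\left( \bigcup_{i \in S} \mathcal{B}_r(\bm{s}_i) \right)
  \;\ge\; M - \nu\!\left( \bigcup_{i \notin S} \mathcal{B}_r(\bm{s}_i) \right)
  \;\ge\; (1 - \alpha)\, M ,
\]
which is exactly the first bullet.

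I do not expect a genuine obstacle here: the statement is essentially a thresholding/pigeonhole argument. The two points that need a little care are (i) using \emph{sub}additivity rather than additivity of $\nu$, since the balls may overlap heavily — this is precisely why we bound $\nu\!\left(\bigcup_{i \notin S}\,\cdot\right)$ by the sum of individual measures — and (ii) observing that the number of light balls is at most $N$, which is what converts the per-ball threshold $\tfrac{\alpha}{N} M$ into the global bound $\alpha M$. An equivalent ``greedy'' phrasing deletes balls one at a time, always removing the lightest remaining one whenever it lies below the threshold; it yields the same set $S$ and the same constants, but the one-shot description above is the cleanest. Note also that the argument never uses that the $\mathcal{B}_r(\bm{s}_i)$ are balls, only that they are $\nu$-measurable.
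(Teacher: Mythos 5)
Your proof is correct and takes essentially the same approach as the paper: the paper sorts the balls by measure and keeps the prefix of heavy ones, which yields exactly your set $S$ of balls with mass at least $\tfrac{\alpha}{N}M$, then bounds the complement by subadditivity. Your one-shot description is arguably cleaner and also fixes a small slip in the paper's write-up (which states the threshold as $\alpha \cdot M$ rather than $\tfrac{\alpha}{N}M$, and writes an intermediate equality where a subadditivity inequality is meant), but the underlying argument is identical.
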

  Informally, the first condition says that the union of the chosen subset has a constant fraction of the measure of the union.
  The second condition says that each of the chosen balls has $\Omega(1/N)$ of the measure of the union of all balls.
  \begin{subproof}
    Without loss of generality, let the balls be ordered by measure:
    \begin{align}
      \nu \left( \mathcal{B}_{r}(\bm{s}_1) \right) \ge \nu \left( \mathcal{B}_{r}(\bm{s}_2) \right) \ge \cdots \ge \nu \left( \mathcal{B}_{r}(\bm{s}_N) \right).
    \end{align}
    We take the greedy subset $S = \set{1, \ldots, K}$, where $1 \le K \le N$ is the largest index such that 
    $\nu \left( \mathcal{B}_{r}(\bm{s}_K) \right) \ge \alpha~ \nu \left( \bigcup_{i = 1}^{N} \mathcal{B}_{r}(\bm{s}_i) \right)$. 
    \begin{align}
      \nu \left( \bigcup_{i = 1}^{K} \mathcal{B}_{r}(\bm{s}_i) \right)
      &= \nu \left( \bigcup_{i = 1}^{N} \mathcal{B}_{r}(\bm{s}_i) \right) - \nu \left( \bigcup_{i = K + 1}^{N} \mathcal{B}_{r}(\bm{s}_i) \right)
     \\ &\ge \nu \left( \bigcup_{i = 1}^{N} \mathcal{B}_{r}(\bm{s}_i) \right) - \left( N - K \right) \frac{\alpha}{N}~ \nu \left( \bigcup_{i = 1}^{N} \mathcal{B}_{r}(\bm{s}_i) \right)
      \\ &\ge \left(1 - \alpha \right)~ \nu \left( \bigcup_{i = 1}^{N} \mathcal{B}_{r}(\bm{s}_i) \right).
    \end{align}
    The first inequality follows because for all $i \ge K+1$ it holds $\nu \left( \mathcal{B}_{r}(\bm{s}_K) \right) < \alpha~ \nu \left( \bigcup_{i = 1}^{N} \mathcal{B}_{r}(\bm{s}_i) \right)$,
    and the second is because $\frac{N-K}{N} \le 1$.
  \end{subproof}

  We can apply the above to the situation in the proof of Theorem~\ref{thm:adv-risk-lower-bound} with $\alpha = \frac12$.
  Without loss of generality, order the covering $\bm{s}_1, \ldots, \bm{s}_N$ by the $\mu_0$-measure of the corresponding balls:
  \begin{align}
    \mu_0 \left( \mathcal{B}_{\rho}(\bm{s}_1) \right) \ge \mu_0 \left( \mathcal{B}_{\rho}(\bm{s}_2) \right) \ge \cdots \ge \mu_0 \left( \mathcal{B}_{\rho}(\bm{s}_N) \right).
  \end{align}

  \begin{corollary}
    \label{corr:greedy-subcover-covering}
    If $1 \le K \le N$ is the largest index such that $\mu_0(\mathcal{B}_\rho(\bm{s}_K) \ge \frac1{2N}$,
    then 
    \begin{align}
      \mu_0\left( \bigcup_{k=1}^K \mathcal{B}_{\rho}(\bm{s}_i) \right) > \frac12.
    \end{align}
  \end{corollary}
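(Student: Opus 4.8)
\emph{Proof proposal.}
The plan is to read this off from Lemma~\ref{lemma:greedy-subcover} (with $\nu = \mu_0$, $r = \rho$, $\alpha = \frac12$), but redone as a direct greedy estimate so that the inequality comes out strict. The single fact that makes everything work is that the covering balls carry all the $\mu_0$-mass: since $\bm{s}_1, \ldots, \bm{s}_N$ are the centres of a $\rho$-covering of $\mathcal{C}_0$ we have $\mathcal{C}_0 \subseteq \bigcup_{i=1}^N \mathcal{B}_{\rho}(\bm{s}_i)$, and because $\mu_0$ is a probability measure with $\mu_0(\mathcal{C}_0) = 1$ this forces $\mu_0\big( \bigcup_{i=1}^N \mathcal{B}_{\rho}(\bm{s}_i) \big) = 1$. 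Averaging over the $N$ balls then gives $\mu_0(\mathcal{B}_{\rho}(\bm{s}_1)) = \max_i \mu_0(\mathcal{B}_{\rho}(\bm{s}_i)) \ge \frac1N \ge \frac1{2N}$, so the index $K$ in the statement is well defined and satisfies $K \ge 1$.

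Next I would split on whether $K = N$. If $K = N$, then $\bigcup_{k=1}^K \mathcal{B}_{\rho}(\bm{s}_k)$ is the union of all covering balls and has $\mu_0$-measure $1 > \frac12$, so there is nothing to prove. Otherwise $K < N$, and by the chosen ordering of the balls by decreasing $\mu_0$-measure, every $i \ge K+1$ has $\mu_0(\mathcal{B}_{\rho}(\bm{s}_i)) \le \mu_0(\mathcal{B}_{\rho}(\bm{s}_{K+1})) < \frac1{2N}$. A union bound over the discarded balls then yields
\begin{align}
  \mu_0\Big( \bigcup_{i=K+1}^{N} \mathcal{B}_{\rho}(\bm{s}_i) \Big)
  \le \sum_{i=K+1}^{N} \mu_0(\mathcal{B}_{\rho}(\bm{s}_i))
  < (N-K)\,\frac1{2N}
  \le \frac{N-1}{2N}
  < \frac12 ,
\end{align}
where the second-to-last inequality uses $K \ge 1$. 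Since $\bigcup_{i=1}^N \mathcal{B}_{\rho}(\bm{s}_i)$ equals the union of $\bigcup_{k=1}^{K} \mathcal{B}_{\rho}(\bm{s}_k)$ with $\bigcup_{i=K+1}^{N} \mathcal{B}_{\rho}(\bm{s}_i)$, subtracting gives $\mu_0\big( \bigcup_{k=1}^{K} \mathcal{B}_{\rho}(\bm{s}_k) \big) \ge 1 - \mu_0\big( \bigcup_{i=K+1}^{N} \mathcal{B}_{\rho}(\bm{s}_i) \big) > \frac12$, which is the claim.

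I do not expect a genuine obstacle here; this is just the greedy/pigeonhole argument behind Lemma~\ref{lemma:greedy-subcover} specialised to a covering (so the lemma's relative bound becomes an absolute constant). The only two points worth stating carefully are (i) that the union of the covering balls has full $\mu_0$-measure, which is exactly what upgrades the $(1-\alpha)$-fraction guarantee to the constant $\frac12$, and (ii) the strictness, which is secured by $K \ge 1$: keeping at least one ball makes the discarded count $N-K$ at most $N-1$, so the discarded mass is strictly below $\frac12$. If one only wants the non-strict bound $\ge \frac12$, the corollary is literally the first bullet of Lemma~\ref{lemma:greedy-subcover}.
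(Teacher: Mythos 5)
Your proposal is correct and follows essentially the same route as the paper: the paper obtains this corollary by applying Lemma~\ref{lemma:greedy-subcover} with $\nu=\mu_0$, $r=\rho$, $\alpha=\tfrac12$, together with the fact that the covering balls contain $\mathcal{C}_0$ and hence carry full $\mu_0$-mass, which is exactly the greedy/pigeonhole computation you redo inline. Your explicit handling of the strict inequality (via $K\ge 1$ and the strict bound $\mu_0(\mathcal{B}_\rho(\bm{s}_i))<\tfrac1{2N}$ for $i>K$) is a small refinement the paper leaves implicit, since the lemma as stated only yields $\ge\tfrac12$.
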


  \begin{figure}
    \centering
    \vspace{-3mm}
    \centering
    \begin{tabular}{cc}
      
      \includegraphics[width=0.4 \linewidth]{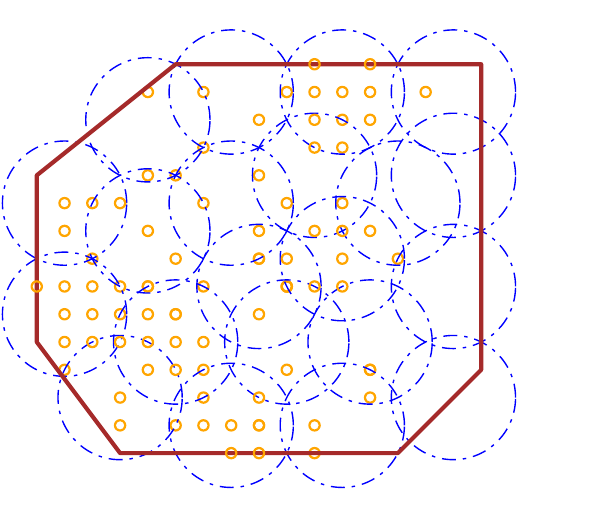} &
                                                                                   \includegraphics[width=0.4\linewidth]{figures/second_iter_proof_13.pdf} \\ 
      a) The original cover of $\mathcal C$.
                                                                                 & b) The dense greedy subcover. \\
    \end{tabular}
    \caption{
      Illustration of \Cref{lemma:greedy-subcover} and \Cref{corr:greedy-subcover-covering}.
      Given a cover of $N$ balls, we can pick a subcover of balls covering at least half of the measure, 
      with each ball having measure at least $\frac1{2N}$.
    }

    \label{fig:illustration-lemma-1}
  \end{figure}

  We now show that the chosen balls are dense enough to get samples
  in the training set with high probability.

  \begin{lemma}
    \label{lemma:appear-balls}
    With probability $1 - \delta/2$,
    each $\mathcal{B}_\rho(\bm{s}_k)$ for $k \le K$ contains at least one $\bm{z}_i \in \mathcal C_0$ such that $f(\bm{z}_i) \neq 0$.
  \end{lemma}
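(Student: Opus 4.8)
The plan is a balls-into-bins estimate. I work conditionally on the Chernoff event already established at the start of the proof, so that $\bm z_1,\dots,\bm z_{m_0}$ with $m_0=\lfloor \mu(\mathcal C)m/4\rfloor$ are i.i.d.\ draws from $\mu_0$ (membership in $\mathcal C_0$ depends only on the position, not on the label, so after conditioning the labels of these $m_0$ points are still flipped independently with probability $\eta$). Since each such $\bm z_i$ lies in $\mathcal C_0$ we have $f^*(\bm z_i)=0$, hence a flipped label means $y_i=1$, and because $f$ interpolates the training set this forces $f(\bm z_i)=1\neq 0$.

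First I would record that every covering ball with index $k\le K$ is dense: by the decreasing ordering of the balls and the definition of $K$ in \Cref{corr:greedy-subcover-covering}, $\mu_0(\mathcal{B}_\rho(\bm s_k))\ge \frac1{2N}$ for all $k\le K$. Fix such a $k$. For a single sample, the events ``$\bm z_i\in\mathcal{B}_\rho(\bm s_k)$'' and ``$y_i=1$'' are independent with probabilities at least $\frac1{2N}$ and exactly $\eta$, so $\bm z_i$ is a mislabeled point inside $\mathcal{B}_\rho(\bm s_k)$ with probability at least $\frac{\eta}{2N}$. By independence of the $m_0$ samples, the probability that $\mathcal{B}_\rho(\bm s_k)$ contains \emph{no} such point is at most $\br{1-\frac{\eta}{2N}}^{m_0}\le \exp\!\br{-\frac{\eta m_0}{2N}}$. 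A union bound over the at most $N$ indices $k\le K$ bounds the overall failure probability by $N\exp\!\br{-\frac{\eta m_0}{2N}}$, and substituting $m_0=\lfloor\mu(\mathcal C)m/4\rfloor$ together with the hypothesis $m\ge\frac{8N}{\mu(\mathcal C)\eta}\log\frac{2N}{\delta}$ makes $\frac{\eta m_0}{2N}\ge\log\frac{2N}{\delta}$, collapsing the bound to $N\cdot\frac{\delta}{2N}=\frac\delta2$.

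I do not expect a real obstacle: this is exactly the coupon-collector phenomenon flagged in the proof sketch, and the constant $8$ in the sample-complexity hypothesis is calibrated so that $N\exp(-\eta m_0/2N)\le\delta/2$. The points needing care are (i) applying the density bound $\mu_0(\mathcal{B}_\rho(\bm s_k))\ge\frac1{2N}$ only for $k\le K$, where \Cref{corr:greedy-subcover-covering} guarantees it; (ii) keeping the positional and label-flip events independent, which is legitimate because the noise is injected independently of $\bm x$; and (iii) bookkeeping the failure probabilities so that this $\delta/2$ and the $\delta/2$ from the Chernoff step combine (rather than double-count) to the $1-\delta$ used to finish \Cref{thm:adv-risk-lower-bound} via the inclusion $\mathcal{B}_\rho(\bm s_k)\subseteq\mathcal{B}_{2\rho}(\bm z_i)$ whenever $\bm z_i\in\mathcal{B}_\rho(\bm s_k)$.
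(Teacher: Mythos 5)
Your proposal is correct and follows essentially the same route as the paper's proof: lower-bound the per-sample probability of a mislabeled point landing in a fixed dense ball by $\frac{\eta}{2N}$ (using the density guarantee for $k \le K$ and independence of the label flips), bound the per-ball failure probability by $\left(1-\frac{\eta}{2N}\right)^{m_0} \le \exp\!\left(-\frac{\eta m_0}{2N}\right) \le \frac{\delta}{2N}$ via the sample-size hypothesis, and union bound over the $K \le N$ balls. Your write-up is in fact a bit more careful than the paper's about the conditioning on the Chernoff event and about why a flipped label forces $f(\bm{z}_i)\neq 0$, but the argument is the same.
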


  \begin{subproof}
    We have
    \begin{align}
      \PP \left[ \bm{z}_i \in \mathcal{B}_\rho(\bm{s}_k) \mid \bm{z}_i \in \mathcal C_0 \right] \ge \frac1{2N},
    \end{align}
    and because the label corruption is independent from everything, we also have
    \begin{align}
      &\PP \left[ f(\bm{z}_i) \neq 0 \mid \bm{z}_i \in \mathcal C_0 \right] = \eta \\
      \implies 
      &\PP \left[ f(\bm{z}_i) \neq 0 ~\text{and}~ \bm{z}_i \in \mathcal{B}_\rho(\bm{s}_k) \mid \bm{z}_i \in \mathcal C_0 \right] \ge \frac{\eta}{2N} 
    \end{align}

    Therefore,
    \begin{align}
      &\PP \left[ \mathcal{B}_\rho(\bm{s}_k) \cap \set{\bm{z}_i : i \le m_0,~ f(\bm{z}_i) \neq 0}  = \emptyset \right]
   \\ &= \prod_{i=1}^{m_0} \PP \left[ z_i \notin \mathcal{B}_\rho(\bm{s}_k) ~\text{or}~ z_i \notin \mathcal C_0 ~\text{or}~ f(\bm{z}_i) \neq 0 \right]
      \\ &\le \left( 1 - \frac{\eta}{2N} \right)^{m_0} 
      \\ &\le \exp(-\frac{m_0 \eta}{2N}) \ge \frac{\delta}{2N},
    \end{align}
    In the last line we used $1 + x \le e^x$, the fact that $m_0 = \frac14 \mu(C) m$, and the expression for $m$ from \Cref{thm:adv-risk-lower-bound}.
    Now we can use the union bound to prove the lemma:
    \begin{align}
      \PP \left[ ~\text{some}~ \mathcal{B}_\rho(\bm{s}_k) ~\text{for}~ k \le K ~\text{contains no}~ \bm{z}_i : i \le m_0,~ f(\bm{z}_i) \neq 0~ \right] \le K \frac{\delta}{N} \le \delta.
    \end{align}
  \end{subproof}

  Finally, using both \Cref{lemma:greedy-subcover} and \Cref{lemma:appear-balls}, we can finish:
  \begin{align}
    \mathcal R_{\mathrm{Adv}, 2\rho}(f, \mu)
    &\ge \frac12 \mu(\mathcal C)~ \mu_0 \left( \bigcup_{i \le m_0,~ f(\bm{z}_i) \neq 0} \mathcal{B}_{2\rho}(\bm{z_i}) \right).
    \\ &\ge \frac12 \mu(\mathcal C)~ \mu_0 \left( \bigcup_{k=1}^K \mathcal{B}_{\rho}(\bm{s}_k) \right) 
         \ge \frac14 \mu(\mathcal C) .
  \end{align}
\end{proof}

\section{Proof of \Cref{prop:sphere-label-noise}}
\label{app:proof-sphere-label-noise}

\spherelabelnoise*

\begin{proof}
  Let the $m = 1.01^d \le \exp(d/80)$ samples be $\bm{z}_1, \ldots, \bm{z}_m$ with labels 
  $y_1, \ldots, y_m \in \{0, 1\}$. Almost surely the $\bm{z}_i$ are distinct.
  Define the interpolating classifier $f : \RR^d \to \{0, 1\}$ as
  \begin{align}
    \label{eq:interpolating-classifier}
    f(\bm{x}) =  
    \begin{cases}
      y_i & \text{if } \bm{x} \in \{ \bm{z}_1, \ldots, \bm{z}_m \}; \\
      \one_{x_1 > \frac12 } & \text{otherwise.}
    \end{cases}
  \end{align}

  We want to show $f$ is robust. Draw $\bm{x} = (x_1, \ldots, x_d)$ uniformly on $\sph^{d-1}$.
  There are only two ways $\bm{x}$ can contribute to the adversarial risk $\mathcal R_{\mathrm{Adv}, \rho}(f, \mu)$:
  \begin{itemize}
    \item $\bm{x}$ is close to a training sample $\bm{z_i}$ with label noise;
    \item $\bm{x}$ is close to the ``decision boundary'' $x_1=\frac12$ of $\sph^{d-1}$.
  \end{itemize}
  Hence, remembering \Cref{eq:adversarial-risk},
  \begin{align}
    \mathcal R_{\mathrm{Adv}, \rho}(f, \mu) &\le 
    \PP \left[ \bm{x} \text{ is in a } \rho \text{-ball around at least one of the } \bm{z}_i \right]
    +
    \PP \left[ \frac12 -\rho \le x_1 \le \frac12 + \rho \right].
    \\ &\le 
    \PP \left[ \bm{x} \text{ is in a } \rho \text{-ball around at least one of the } \bm{z}_i \right]
    +
    \PP \left[ x_1 \ge \frac12 -\rho \right].
  \end{align}
  By the union bound,
  \begin{align}
    &\PP \left[ \bm{x} \text{ is in a } \rho \text{-ball around at least one of the } \bm{z}_i \right]
    \\ &\le m ~ \PP \left[ \norm{\bm{x} - \bm{z}_1}_2 \le \rho \right]
    \\ &\le m ~ \PP \left[ \norm{\bm{x}}^2 + \norm{\bm{z}_1}^2 - 2 \langle \bm{x}, \bm{z}_1 \rangle \le \rho^2 \right]
    \\ &= m ~ \PP \left[ \langle \bm{x}, \bm{z}_1 \rangle \ge 1 - \rho^2 / 2 \right].
  \end{align}

  As $\mu$ is rotationally invariant, $\langle \bm{x}, \bm{z}_1 \rangle$ is distributed the same as $x_1$. We have proved
  \begin{align}
    \label{eq:adv-risk-bound-x1}
    \mathcal R_{\mathrm{Adv}, \rho}(f, \mu) 
    &\le 
    m ~ \PP \left[ x_1  \ge 1 - \frac{\rho^2}2 \right]
    +
    \PP \left[ x_1 \ge \frac12 -\rho \right].
  \end{align}

  We can bound $\PP[x_1 \ge t]$ for $t > 0$ as follows: let $g_1, \ldots, g_d$ be i.i.d. standard $N(0, 1)$ random variables.
  \begin{subequations}
    \begin{align}
      \PP \left[x_1 \ge t  \right]
         &= \PP \left[ \frac{g_1}{\sqrt{g_1^2 + \ldots + g_d^2}} \ge t \right]
      \\ &= \PP \left[ g_1^2 \ge t^2 (g_1^2 + \ldots + g_d^2) \right]
      \\ &= \PP \left[ \frac{1 - t^2}{t^2} g_1^2 \ge g_2^2 + \ldots + g_d^2 \right]
      \\ &\le \PP \left[ \frac{1 - t^2}{t^2} g_1^2 \ge \frac{d-1}2 \right] + \PP \left[ g_2^2 + \ldots + g_d^2 \le \frac{d-1}2 \right],
      \label{eq:reduce-to-laurent-massart}
    \end{align}
  \end{subequations}
  where the last inequality is because $a \ge b$ implies $a \ge c$ or $b \le c$.

  As $0 < \rho < \frac14$, we can take $t = \frac14$ in both probabilities in \Cref{eq:adv-risk-bound-x1}.
  We now use the often-cited chi-square bounds from Lemma 1 in \citet{laurent2000massart}.
  \begin{subequations}
    \begin{align}
      \label{eq:laurent-massart-upper}
      \PP \left[ g_2^2 + \ldots + g_d^2 \le (d-1) - 2 \sqrt{(d-1)s} \right] \le \exp \left( -s \right) \\
      \label{eq:laurent-massart-lower}
      \PP \left[ g_1^2 \ge 1 + 2 \sqrt{s} + 2s \right] \le \exp \left( -s \right)
    \end{align}
  \end{subequations}

  Then for $s = \frac{d}{40}$, it's easy to see that both probabilities in \Cref{eq:reduce-to-laurent-massart}
  are less than the corresponding probabilities in \Cref{eq:laurent-massart-lower} and \Cref{eq:laurent-massart-upper}.

  Finally, as $d$ goes to infinity,
  \begin{align}
    \label{eq:adv-risk-final}
    \mathcal R_{\mathrm{Adv}, \rho}(f, \mu) 
    &\le 
    m \exp(-d/40) + \exp(-d/40) \\
    &\le
    \exp(-d/80) + \exp(-d/40) \to 0.
  \end{align}
\end{proof}

\section{Poisoning theorem}
\label{app:poisoning-theorem}

Recall the definition of the adversarial risk $\mathcal{R}_{\mathrm{Adv}, \rho}$ from \Cref{sec:main-theoretical-results}:
\begin{align}
 \label{eq:adversarial-risk-temp}
  \mathcal R_{\mathrm{Adv}, \rho}(f, \mu) = \PP_{\bm{x} \sim \mu} \left[ \exists \bm{z} \in \mathcal{B}_{\rho}(\bm{x}), ~ f^*(\bm{x}) \neq f(\bm{z})) \right].
\end{align}

For interpolating classifiers with minimal test error, there are two sources of adversarial risk: decision boundaries and label noise.
In this paper, we are specifically interested in the latter.
The theorem %
is easiest to formalize in the case where the decision boundary contribution to the adversarial risk is negligible. 
For example, this is the case when the classes are separable with a large margin, or in real-world datasets when there is not much data which humans would label ambiguously.

Therefore, instead of working with the adversarial risk, we introduce the \emph{separable proxy} for the adversarial risk.
Let the label noised points be $\mathcal{S} = \{\bm{s}_1, \ldots, \bm{s}_N \}$, and let $f$ interpolate the training set, and otherwise
minimize the test error.
\begin{align}
 \label{eq:proxy-adversarial-risk}
  \Rproxy{\rho}(\mathcal{S}, \mu) \defeq \PP_{\bm{x} \sim \mu} \left[ \exists 1 \le k \le N,~  \bm{s}_k \in \mathcal{B}_{\rho}(\bm{x}) \right] 
  \le 
  \mathcal R_{\mathrm{Adv}, \rho}(f, \mu).
\end{align}

The proxy adversarial risk $\Rpoison$ for the poisoned case is easy to define:
\begin{align}
  \label{eq:proxy-adversarial-risk-poison}
  \Rpoison_{\rho, N} 
  &= \sup_{\bm{s}_1, \ldots, \bm{s}_N \in \supp(\mu)} \Rproxy{\rho}(\{\bm{s}_1, \ldots, \bm{s}_N\}, \mu)
  \\ &= \sup_{\bm{s}_1, \ldots, \bm{s}_N \in \supp(\mu)} \mu \left( \bigcup_{k=1}^N \mathcal{B}_{\rho}(\bm{s}_k) \right).
\end{align}

In fact, by a compactness argument, the $\sup$ above can be replaced by $\max$, but this is not important for the theorem we want to prove.

The uniform version $\Runif$ is a random variable. Its value depends on the random training set and which points get their labels flipped.
We define it as follows:
\\
Sample a training set of $T$ random points from $\mu$ independently, and let $\mathcal{S}$ be a random subset of the training set,
with each point taken with probability $\eta$.
Then 
\begin{align}
  \Runif_{\rho, T, \eta} := \Rproxy{\rho}(\mathcal{S}, \mu).
\end{align}

We are now ready to state our formal theorem.
\begin{theorem}
  \label{thm:uniform-vs-poisoning}
  Let $\eta > 0$ be the label noise rate, 
  and let $T, m$ be positive integers representing the training set sizes for Uniform and Poisoner.
  Fix $N = \lfloor \eta m \rfloor$ to be be the number of labels the poisoner can flip.
  For $\delta > 0$, with probability $1 - \delta$, for any $\rho > 0$,
  \begin{align}
    \Runif_{2\rho, T, \eta} \geq \frac12 \Rpoison_{\rho , N}
  \end{align}
  for $T = \Omega \left(\frac{m (\log m + \log \frac1\delta) }{\Rpoison_{\rho, N}} \right)$.
\end{theorem}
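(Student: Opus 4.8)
The plan is to show that the Uniform adversary, with its larger radius $2\rho$ and larger sample size $T$, can reproduce (up to a factor $\tfrac12$) the coverage achieved by the optimal Poisoner. First I would fix an $\varepsilon$-optimal (or exactly optimal, by compactness) choice of poison points $\bm{s}_1, \ldots, \bm{s}_N \in \supp(\mu)$, so that $\mu\left(\bigcup_{k=1}^N \mathcal{B}_\rho(\bm{s}_k)\right) \ge \Rpoison_{\rho, N} - \varepsilon$, and think of these as $N$ fixed balls. The goal is to show that with high probability the Uniform training set, after label-flipping, contains at least one flipped point in ``most'' of these balls, weighted by $\mu$-measure.

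The key reduction is \Cref{lemma:greedy-subcover}, applied with $\nu = \mu$, radius $r = \rho$, and $\alpha = \tfrac12$: it gives a subset $S \subseteq \{1, \ldots, N\}$ of the poison balls such that $\mu\left(\bigcup_{k \in S} \mathcal{B}_\rho(\bm{s}_k)\right) \ge \tfrac12 \mu\left(\bigcup_{k=1}^N \mathcal{B}_\rho(\bm{s}_k)\right) \ge \tfrac12 \Rpoison_{\rho,N}$ (modulo $\varepsilon$), while simultaneously every ball in $S$ has $\mu\left(\mathcal{B}_\rho(\bm{s}_k)\right) \ge \tfrac1{2N} \mu\left(\bigcup_{k=1}^N \mathcal{B}_\rho(\bm{s}_k)\right) \ge \tfrac{1}{2N}\Rpoison_{\rho,N}$. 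Now I would run the coupon-collector / balls-and-bins argument: for each $k \in S$, a single Uniform sample lands in $\mathcal{B}_\rho(\bm{s}_k)$ and gets its label flipped with probability at least $\eta \cdot \tfrac{1}{2N}\Rpoison_{\rho,N}$. Over $T$ independent samples, the probability that some fixed ball in $S$ receives no flipped sample is at most $\left(1 - \tfrac{\eta \Rpoison_{\rho,N}}{2N}\right)^T \le \exp\left(-\tfrac{\eta T \Rpoison_{\rho,N}}{2N}\right)$. Since $N = \lfloor \eta m \rfloor \le \eta m$, choosing $T = \Omega\!\left(\tfrac{m(\log m + \log\tfrac1\delta)}{\Rpoison_{\rho,N}}\right)$ makes this at most $\tfrac{\delta}{N}$, and a union bound over the at most $N$ balls in $S$ gives that with probability $1 - \delta$, every ball in $S$ contains a flipped Uniform sample $\bm{s}'_k \in \mathcal{B}_\rho(\bm{s}_k)$.

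To finish, I would use the doubled radius: if $\bm{x} \in \mathcal{B}_\rho(\bm{s}_k)$ and $\bm{s}'_k \in \mathcal{B}_\rho(\bm{s}_k)$, then by the triangle inequality $\bm{s}'_k \in \mathcal{B}_{2\rho}(\bm{x})$, so $\bm{x}$ is adversarially vulnerable for the Uniform noised set under radius $2\rho$. Hence $\bigcup_{k \in S} \mathcal{B}_\rho(\bm{s}_k)$ is entirely contained in the proxy-vulnerable region for Uniform at radius $2\rho$, giving
\begin{align}
  \Runif_{2\rho, T, \eta} = \Rproxy{2\rho}(\mathcal{S}, \mu) \ge \mu\!\left( \bigcup_{k \in S} \mathcal{B}_\rho(\bm{s}_k) \right) \ge \frac12 \Rpoison_{\rho, N},
\end{align}
after letting $\varepsilon \to 0$ (or using the max). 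The main obstacle — really the only delicate point — is bookkeeping the two places where $1/N$ versus $1/(\eta m)$ enter: the per-ball measure lower bound from \Cref{lemma:greedy-subcover} scales as $\Rpoison_{\rho,N}/N$, and the hitting probability per sample picks up the extra factor $\eta$; it is exactly the cancellation $N \approx \eta m$ that turns the required $T$ into $\tilde\Omega(m / \Rpoison_{\rho,N})$ rather than something with an extra $1/\eta$. I would also need to state carefully that the $T$ Uniform samples are i.i.d. and that the label-flip events are independent of the sample locations, so the per-sample success probability genuinely factorizes as claimed; and I should handle the edge case $\Rpoison_{\rho,N} = 0$ trivially, which is why the hypothesis $\Rpoison_{\rho,N} = \Omega(1)$ appears in the informal version.
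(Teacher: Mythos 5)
Your proposal is correct and matches the paper's proof in every essential step: the greedy subcover lemma with $\alpha = \tfrac12$, the per-ball hit-and-flip probability of at least $\tfrac{\eta}{2N}\Rpoison_{\rho,N}$, the $\exp(-T\eta\Rpoison_{\rho,N}/(2N)) \le \delta/N$ bound followed by a union bound over the at most $N$ chosen balls, and the triangle-inequality conversion of the doubled radius $2\rho$ into coverage of the selected region. The only cosmetic difference is your use of $\varepsilon$-optimal poison points and a limit, where the paper argues for arbitrary fixed $\bm{s}_1,\ldots,\bm{s}_N$ and takes a supremum at the end; these are equivalent.
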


To be precise, the Uniform adversary takes at least the following number of samples:
\begin{align}
  \label{eq:exact-T}
  T 
  = \frac{2N}{\eta \Rpoison_{\rho, N}} \left( \log N + \log \frac1\delta \right)
  = \frac{2m}{\Rpoison_{\rho, N}} \left( \log m + \log \eta + \log \frac1\delta \right).
\end{align}
The proof is quite similar in spirit to the proof of \Cref{thm:adv-risk-lower-bound}.
\begin{proof}
  Let the Poisoner pick points $\bm{s}_1, \ldots, \bm{s}_N \in \RR^d$.
  We want to prove that with high probability,
  \begin{align*}
    \Runif_{2\rho, T, \eta} \geq \frac12~ \mu \left( \bigcup_{k=1}^N \mathcal{B}_{\rho}(\bm{s}_k) \right).
  \end{align*}

  We show that, with high probability, a sample of $T$ points from $\mu$, 
  with uniform label noise with probability $\eta$, will result in many of the balls $\mathcal{B}_{\rho}(\bm{s}_k)$ having a mislabeled point in them.

  Order the points $\bm{s}_1, \ldots, \bm{s}_N$ such that
  \begin{align}
    \mu \left( \mathcal{B}_{\rho}(\bm{s}_1) \right) \ge \mu \left( \mathcal{B}_{\rho}(\bm{s}_2) \right) \ge \cdots \ge \mu \left( \mathcal{B}_{\rho}(\bm{s}_N) \right).
  \end{align}

  By \Cref{lemma:greedy-subcover} with $\alpha = \frac12$, we know that there exists $K$ such that
  \begin{align}
    \label{eq:poisoning-proof-upper-bound-rpoison}
    \mu \left( \bigcup_{k=1}^K \mathcal{B}_{\rho}(\bm{s}_k) \right) \ge \frac12~\mu \left( \bigcup_{k=1}^N \mathcal{B}_{\rho}(\bm{s}_k) \right) 
    = \frac12 \Rpoison_{\rho, N}
  \end{align}
  and $\mu \left( \mathcal{B}_{\rho}(\bm{s}_k) \right) \ge \frac1{2N} \Rpoison_{\rho, N}$ for all $1 \le k \le K$.

  As in \Cref{app:proof-of-improved-thm1}, we proceed to show that with high probability,
  a random sample of $T$ points from $\mu$ will hit each of the balls $\mathcal{B}(\bm{s}_k, \rho)$ for $k \le K$,
  because the chosen balls are dense enough to get hit when $T \gtrsim m \log m$.
  This is enough to prove
  \begin{align}
    \label{eq:poisoning-proof-lower-bound-runif}
    \Runif_{2\rho, T, \eta} \ge \mu \left( \bigcup_{k=1}^K \mathcal{B}_{\rho}(\bm{s}_k) \right), 
  \end{align}
  since any two points in the same $\mathcal{B}_\rho(\bm{s}_k)$ are within distance $2\rho$ of each other.

  Let $\bm{z}_1, \ldots, \bm{z}_T$ be a random dataset of $T$ points from $\mu$.
  We have a lemma similar to \Cref{lemma:appear-balls}:
  \begin{lemma}
    \label{lemma:appear-balls-poisoning}
    With probability $1 - \delta$,
    each $\mathcal{B}_\rho(\bm{s}_k)$ for $k \le K$ contains a mislabeled point $\bm{z}_i$.
  \end{lemma}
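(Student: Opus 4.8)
The plan is to mirror the argument in \Cref{lemma:appear-balls}, treating the $K$ balls $\mathcal{B}_\rho(\bm{s}_1),\dots,\mathcal{B}_\rho(\bm{s}_K)$ selected by \Cref{lemma:greedy-subcover} as the ``bins'' in a coupon-collector--style argument: each bin is dense enough (measure $\ge \tfrac1{2N}\Rpoison_{\rho,N}$, by the second conclusion of \Cref{lemma:greedy-subcover} with $\alpha=\tfrac12$) that the Uniform adversary's $T \gtrsim m\log m$ independent samples, after the independent label flips, hit every bin with a mislabeled point with probability at least $1-\delta$. Crucially, the label flip of each $\bm{z}_i$ is performed independently of the draw of $\bm{z}_i$, so the probability that a single sample both lands in $\mathcal{B}_\rho(\bm{s}_k)$ and is mislabeled factors as $\eta\,\mu(\mathcal{B}_\rho(\bm{s}_k)) \ge \frac{\eta}{2N}\Rpoison_{\rho,N}$.

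Concretely, fix $k \le K$. Since the $T$ samples (with their label flips) are i.i.d., the probability that $\mathcal{B}_\rho(\bm{s}_k)$ contains no mislabeled $\bm{z}_i$ is at most
\begin{align}
  \left( 1 - \frac{\eta\,\Rpoison_{\rho,N}}{2N} \right)^{T} \le \exp\!\left( -\frac{\eta\,\Rpoison_{\rho,N}}{2N}\, T \right),
\end{align}
using $1+x \le e^x$. Plugging in the value $T = \frac{2N}{\eta\,\Rpoison_{\rho,N}}\left(\log N + \log\frac1\delta\right)$ from \Cref{eq:exact-T}, the exponent equals $\log N + \log\frac1\delta$, so this probability is at most $\frac{\delta}{N}$. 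A union bound over the $K \le N$ chosen indices then gives that, with probability at least $1-\delta$, every $\mathcal{B}_\rho(\bm{s}_k)$ with $k \le K$ contains a mislabeled point, proving the lemma.

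To close out \Cref{thm:uniform-vs-poisoning}, I would combine this with the two displayed inequalities already set up in the proof: on the $(1-\delta)$-event just established, since any $\bm{x} \in \mathcal{B}_\rho(\bm{s}_k)$ is within distance $2\rho$ of the mislabeled sample in that ball, we get $\Runif_{2\rho,T,\eta} \ge \mu\!\left(\bigcup_{k\le K}\mathcal{B}_\rho(\bm{s}_k)\right)$ (\Cref{eq:poisoning-proof-lower-bound-runif}), which is at least $\tfrac12\Rpoison_{\rho,N}$ by \Cref{eq:poisoning-proof-upper-bound-rpoison}. The only mildly delicate point is the constant bookkeeping: one must choose $T$ exactly as in \Cref{eq:exact-T} so that the per-ball failure probability is $\delta/N$ rather than merely $o(1)$, so that the union bound over the $\le N$ balls still yields $\delta$; but this is routine and there is no real obstacle.
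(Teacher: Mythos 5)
Your proof is correct and follows essentially the same route as the paper's: lower-bound the per-sample probability of landing in a chosen ball \emph{and} being mislabeled by $\frac{\eta}{2N}\Rpoison_{\rho,N}$ using the density guarantee from \Cref{lemma:greedy-subcover} and independence of the label flips, bound the per-ball failure probability by $\exp\left(-\frac{T\eta}{2N}\Rpoison_{\rho,N}\right)\le \frac{\delta}{N}$ via the choice of $T$ in \Cref{eq:exact-T}, and union bound over the $K\le N$ balls. The concluding remarks about combining with \Cref{eq:poisoning-proof-upper-bound-rpoison} and \Cref{eq:poisoning-proof-lower-bound-runif} also match the paper's argument.
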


  \begin{subproof}
    We have
    \begin{align}
      \PP \left[ \bm{z}_i \in \mathcal{B}_\rho(\bm{s}_k) \right] \ge \frac{1}{2N} \Rpoison_{\rho, N} 
    \end{align}
    and because the label corruption is independent from everything, we also have
    \begin{align}
    \PP \left[ \bm{z}_i \in \mathcal{B}_\rho(\bm{s}_k) ~\text{and}~ \bm{z}_i ~\text{mislabeled} \right] \ge \frac{\eta}{2N} \Rpoison_{\rho, N}.
    \end{align}

    Therefore,
    \begin{align}
      &\PP \left[ \mathcal{B}_\rho(\bm{s}_k) \cap \set{ \text{mislabeled}~\bm{z}_i }  = \emptyset \right]
   \\ &= \prod_{i=1}^{T} \PP \left[ z_i \notin \mathcal{B}_\rho(\bm{s}_k) ~\text{or}~ \bm{z}_i ~\text{mislabeled} \right]
   \\ &\le \left( 1 - \frac{\eta}{2N} \Rpoison_{\rho, N} \right)^T
   \\ &\le \exp(-\frac{T \eta}{2N} \Rpoison_{\rho, N}) \le \frac{\delta}{N}.
    \end{align}

    Here in the last line we used $1 + x \le e^x$ and \Cref{eq:exact-T}.

    Now we can use the union bound to prove the lemma:
    \begin{align}
      \PP \left[ ~\text{some}~ \mathcal{B}_\rho(\bm{s}_k) ~\text{for}~ k \le K ~\text{contains no mislabeled}~ \bm{z}_i~ \right] \le K \frac{\delta}{N} \le \delta.
    \end{align}
  \end{subproof}

  Combining \Cref{eq:poisoning-proof-upper-bound-rpoison} and \Cref{eq:poisoning-proof-lower-bound-runif}, we have the desired
  \begin{align}
    \Runif_{2\rho, T, \eta} \geq \frac12 \Rpoison_{\rho , N}.
  \end{align}

  As this holds for any $\bm{s}_1, \ldots, \bm{s}_N$, and the left-hand-side does not depend on the chosen points $\bm{s}_k$,
  we can take the supremum over all possible choices of $\bm{s}_1, \ldots, \bm{s}_N$ to prove the theorem.
\end{proof}

\section{Proof of~\Cref{prop:benign-label-noise}}
\benignlabelnoise*
\begin{proof}
    To prove the adversarial risk for \(\mathcal{D}_1\), we simply
    invoke~\Cref{thm:adv-risk-lower-bound}. Consider the compact
    \(\mathcal{C}=\bigcup_{i=1}^A \left(i, i+\frac12 \right)\). The
    covering number for \(\mathcal{C}\) is \(N=\frac{A}{2\rho}\) and
    its probability mass \(\mu\left(\mathcal{C}\right) =
    \frac{1}{2}\).  By~\Cref{thm:adv-risk-lower-bound}, for \(m\geq
    \frac{16A}{\rho \eta}\log\left(\frac{A}{\rho\delta}\right)\), with
    probability greater than \(1-\delta\) we have that $\mathcal
    R_{\mathrm{Adv}, \rho}(f, \mu) \geq \frac{1}{8}$. This proves
    the first part.

    For the second part, using Hoeffding's inequality, as long as
    \(m\geq 16\log\left(\frac{2}{\delta}\right)\), we have that with
    probability at least \(1-\delta\), the number of  samples in
    \(\left[A, A+B\right]\) is less than \(\frac{3 m}{4}\). Therefore,
    the number of mislabeled samples in that interval is also less
    than \(\frac{3m}{4}\)with the same probability. Now consider the
    interpolator that is zero everywhere except at the mislabeled
    points. The maximum adversarial risk of this interpolator is the
    probability mass of the union of the intervals \([A+j,
    A+j+\frac{1}{2}]\) each of the mislabeled points lie in. This
    probability mass is at most
    \(\frac{3m\rho}{8B}\). Setting, \(m=\frac{16A}{\rho \eta}\log\left(\frac{A}{\rho\delta}\right)\), we obtain \(\mathcal
    R_{\mathrm{Adv}, \rho}(f, \mu)=\widetilde{O}_\rho\left(\frac{A}{ B}\right)\).
\end{proof}

\section{Proof of inductive bias}
\label{app:ind-bias}

\thmrepreparinter*

  \begin{proof}
    For any \(\rho\geq 0,~W\gg\rho\), construct a distribution \(\mu\)
    on \([0,W]\times\{0\}\) as follows. Distribute the covariates
    uniformly randomly in
    \([0,~\frac{W}{2}-2\rho]\bigcup~[\frac{W}{2}+2\rho,~W]\) and then
    label then with the ground truth labelling
    function~\(f^*\br{x}=\mathbbm{1}\{x_1\geq \frac{W}{2}\}\) where
    \(x=[x_1, x_2]\) is the two-dimensional covariate. Next, we
    construct an \(m\) dimensional dataset and flip each label
    independently with probability \(1-\eta\). We denote this set
    with~\(S_{m,\eta}\).

    The hypothesis class \(\SF\) is the class of one-dimensional
    thresholds on the first coordinate of the input space~(ignores the
    second coordinate entirely).  Define the following interpolating
    classifier \(f\in\SF:\reals^2\rightarrow \{0,1\}\) as follows
    \[f(x) = \begin{cases}y_1&~\text{if~\(x\)~is in}~S_{m,\eta}\\
    \mathbbm{1}\{x_1\geq
    \frac{W}{2}\}&\text{otherwise}\end{cases}.\]
As the sampling of the covariates and the label noise are independent events,
    \[\EE_{S_{m,\eta}}\left[\text{\# of mislabeled points
    in}~S_{m,\eta}\right] =  m\eta.\] Then the expected measure of the
    set of points adversarially vulnerable by an adversary of
    perturbation magnitude \(\rho\) on the classifier \(h\), as
    defined above, is upper bounded by \(2\rho m\eta\). Using the fact
    that the total measure of the domain is \(W\) and that \(m=\Theta\br{\frac{1}{\eta}}\), we get
    that\[\EE_{S_{m,\eta}}\left[\mathcal{R}_{\mathrm{Adv},\rho}\br{f; \mu}\right] \leq \frac{2\rho
    m\eta}{W} = \mathcal{O}(\rho).\]

  Next, consider the hypothesis class \(\SH\) defined as follows.
  Given a set of points \(\SZ=\{z_1,\ldots,z_k\}\in[0,W]^k\) and
  \(\gamma>\rho\), define the hypothesis
  \[h_{\SZ,\gamma}\br{x}=\begin{cases}1&\exists
  z\in\SZ~\vert~\mathbbm{1}\{x_2<\rho\}\wedge \mathbbm{1}\{x_1=z\}\\
  1&\exists
  z\in\SZ~\vert~\mathbbm{1}\{x_2<\rho\}\wedge
  \mathbbm{1}\{\abs{x_1-z}\leq \gamma\}\\
  0&\text{otherwise.} \end{cases}\] If \(\widetilde{S}\) is the set of mislabeled
\(1\)s in \(S_{m,\eta}\), then for any interpolating classifier
\(h_{\SZ,\gamma}\), it holds that \(\widetilde{S}\subseteq\SZ\). Next,
by construction, for every point \(z\in\SZ\), it holds that all points
\(x\in [z-\gamma,z+\gamma]\) can adversarially perturbed in the
\(x_2\) component to obtain the label \(1\). Thus the total measure of
the adversarially vulnerable set of points is greater than the number
of mislabeled points, whose original label is zero, multipled with
\(2\gamma\), which is \(2m\eta\gamma\).

Thus, we have that for any \(h\in\SH\) that interpolates
\(S_{m,\eta}\),\[\EE_{S_{m,\eta}}\left[\mathcal{R}_{\mathrm{Adv},\rho}\br{f;
\mu}\right] \geq \min\br{\frac{2\gamma m\eta}{W},~\frac{1}{2}} =
\Omega(\gamma).\]

  \end{proof}

We proved the adversarial risk bounds only in expectation over the training set;
but note that both of the bounds in~\Cref{thm:repre-par-inter} can be transformed into high
probability bounds using concentration inequalities. 

For simplicity, we did not treat the second bound in the theorem as a learning problem. 
However, it is possible to show that there exists a learning algorithm that uses
a similar number of samples to output \(f\in\SF\) such that
the adversarial risk is \(\mathcal{O}\br{\rho}\).

\begin{figure}[t]
    \centering
    \begin{tabular}{cc}
    \includegraphics[width=0.43\textwidth]{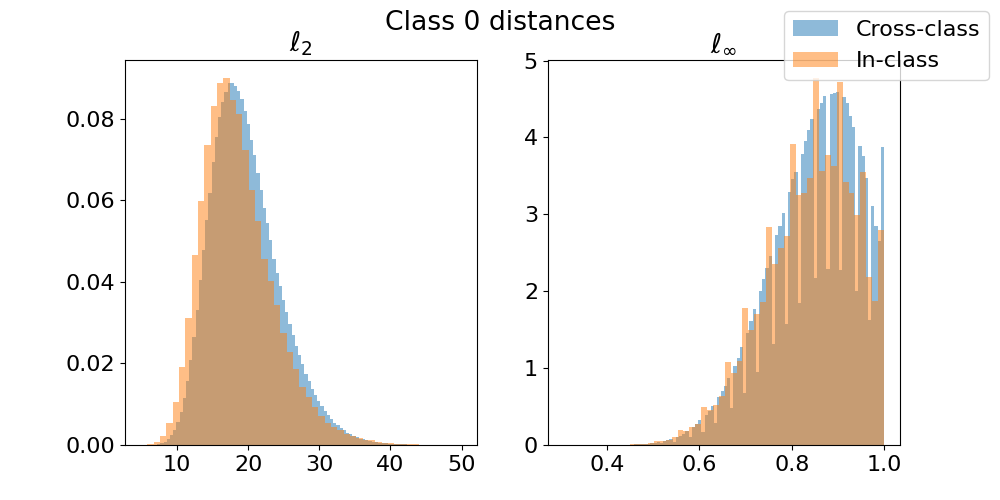} &
    \includegraphics[width=0.43\textwidth]{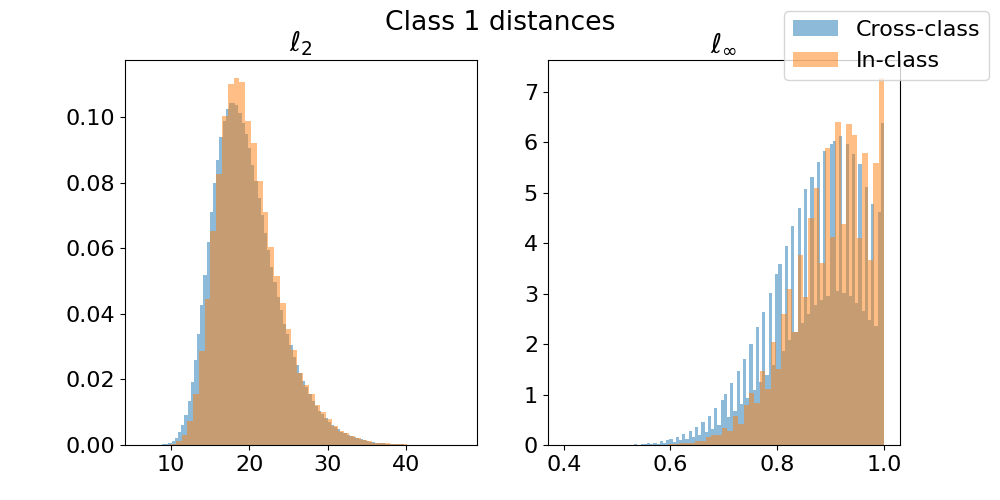} \\
    \includegraphics[width=0.43\textwidth]{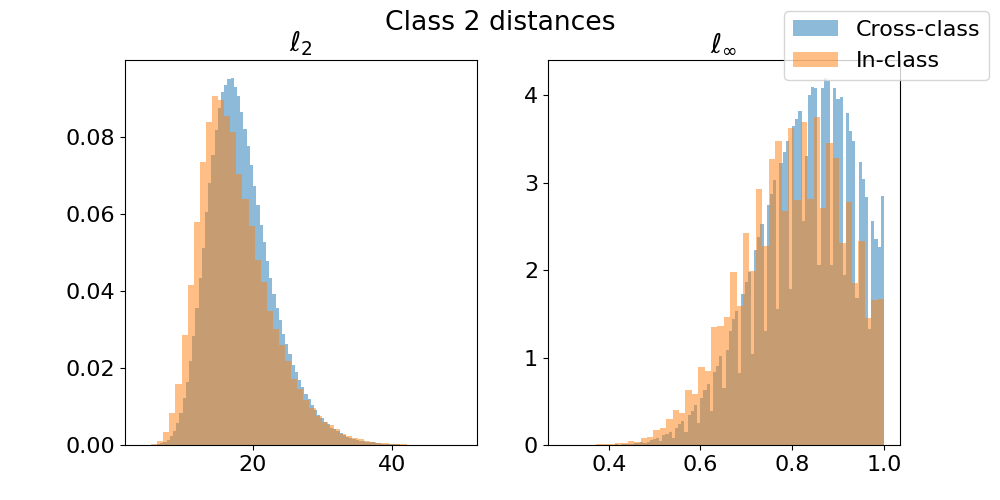} &
    \includegraphics[width=0.43\textwidth]{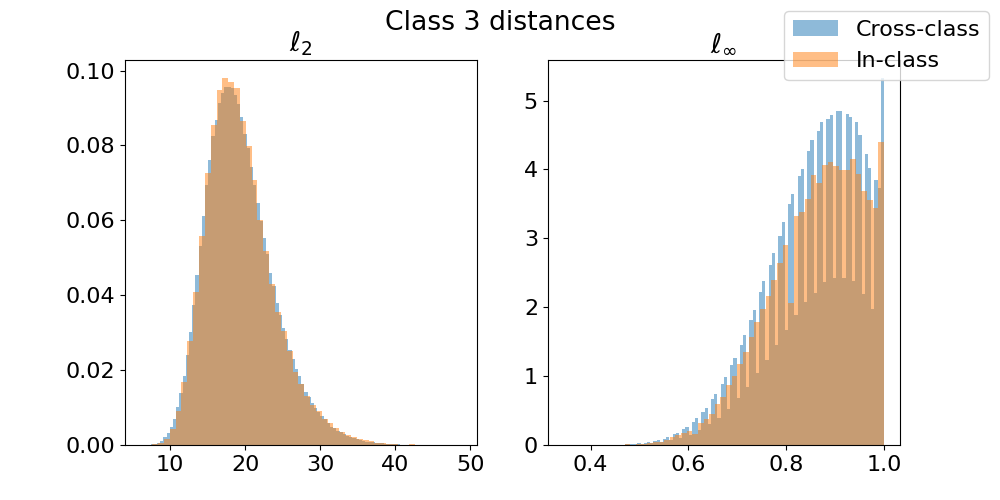} \\
    \includegraphics[width=0.43\textwidth]{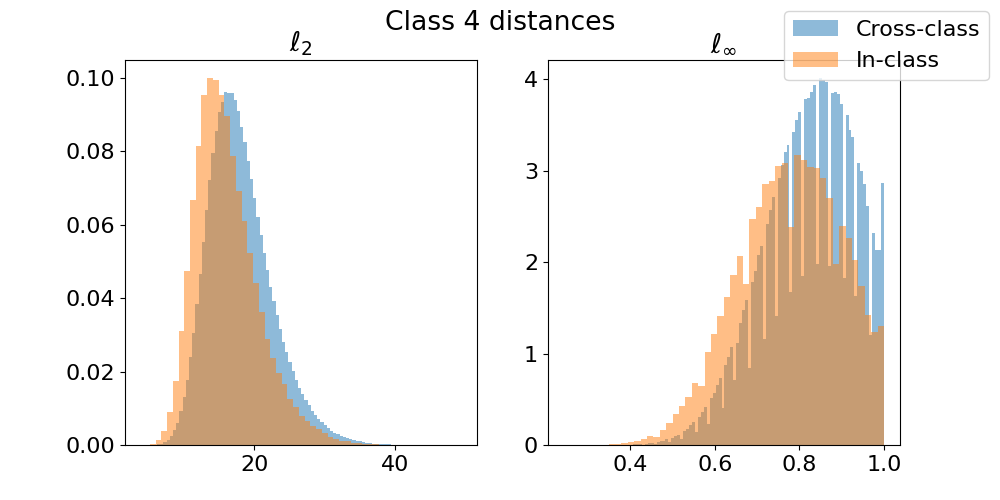} &
    \includegraphics[width=0.43\textwidth]{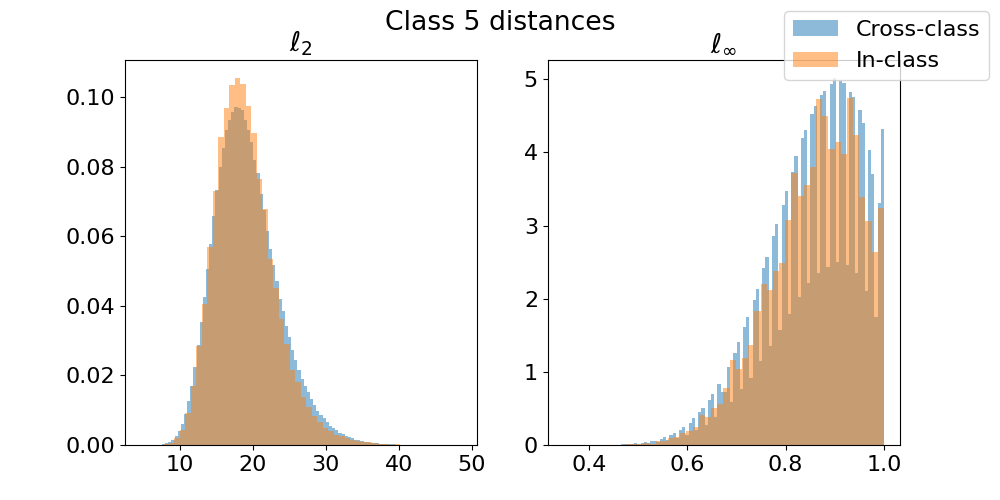} \\
    \includegraphics[width=0.43\textwidth]{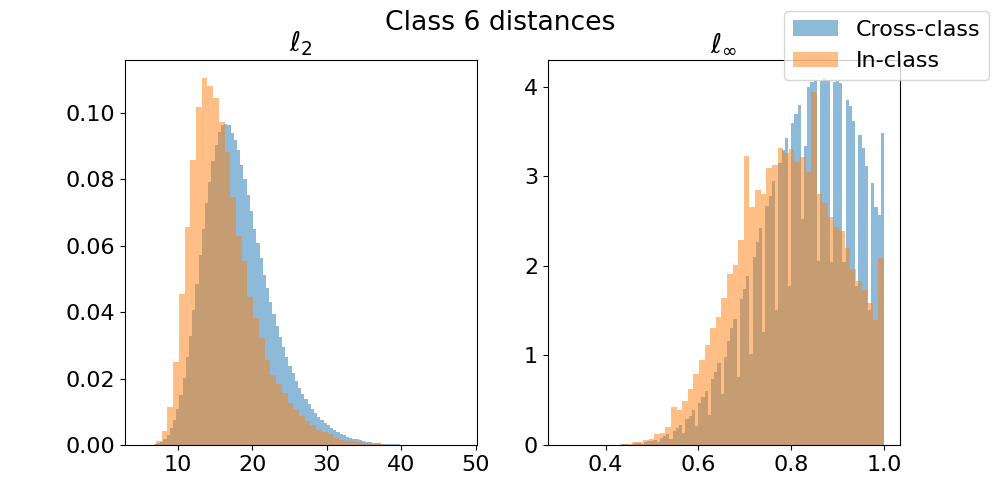} &
    \includegraphics[width=0.43\textwidth]{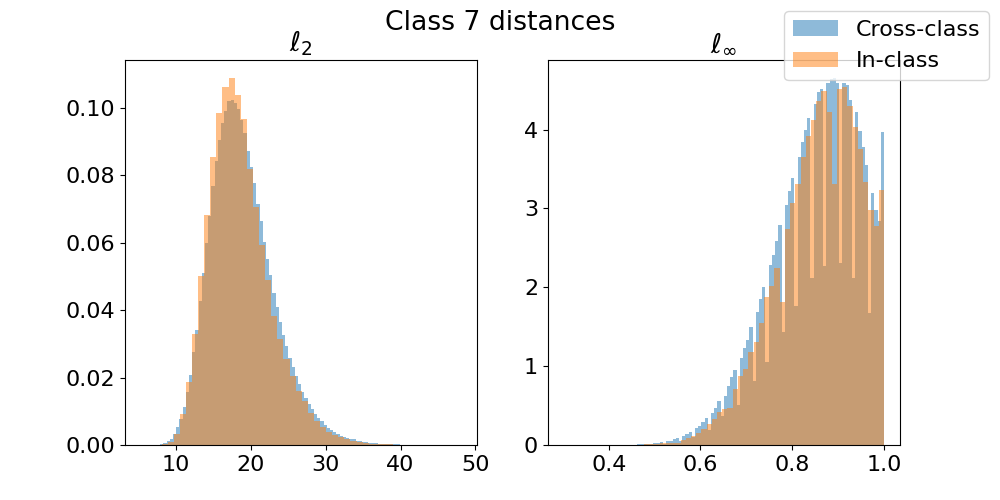} \\
    \includegraphics[width=0.43\textwidth]{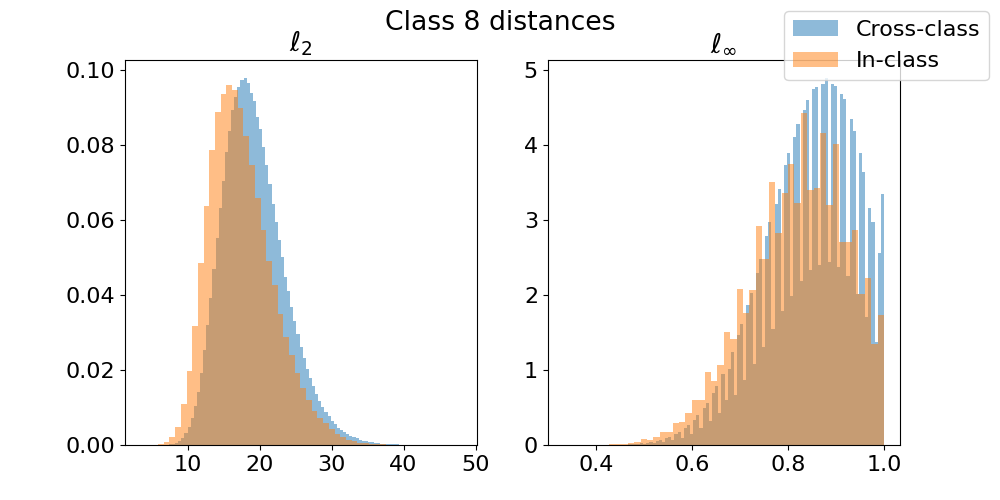} &
    \includegraphics[width=0.43\textwidth]{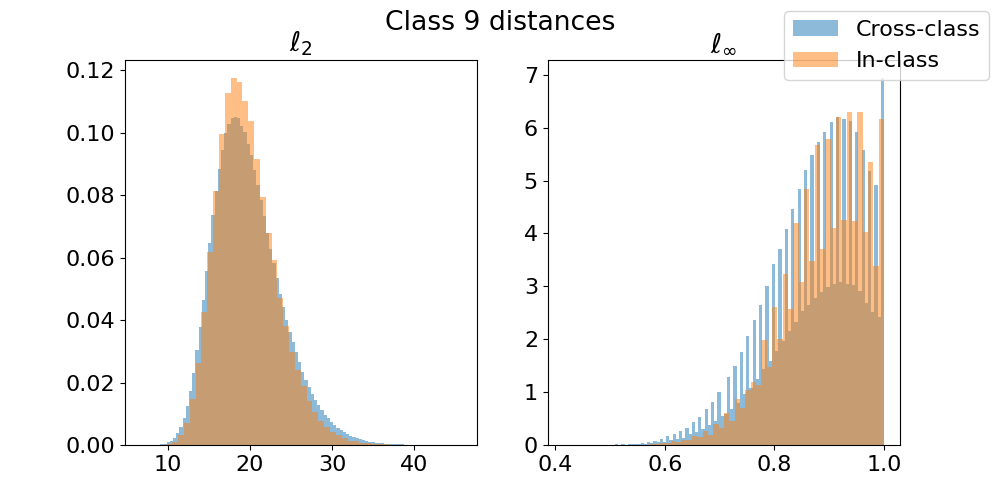} \\
    \end{tabular}
    
    \caption{Separation and average distances of different classes in CIFAR-10.}
    \label{fig:distances_cifar10_0.1_0}
\end{figure}

\section{Separation and average distances of different classes}
\label{app:distances}

For a classification dataset,
it is not easy to check if a given class can be covered by balls
of some radius in a given metric, such that each ball contains
only points from a single class.

However, we can estimate the distance distribution between points
inside a class and between points of different classes, 
If those two distributions are similar, and especially if the minimum
distances are roughly the same, then it seems likely that it's
not possible for the balls to contain only points from a single class.

The plots in \Cref{fig:distances_cifar10_0.1_0} strongly suggest that 
the points of different classes are not extremely ``far off'' in the $\norm{\cdot}_2$ 
or $\norm{\cdot}_\infty$ metrics, compared to the distances between points
inside a class.

\section{Experiments regarding inductive bias}
\label{app:exp-ind-bias}
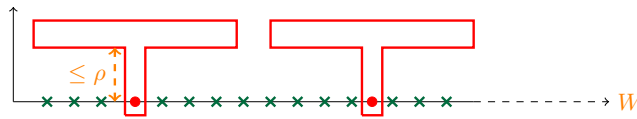
\begin{figure}[t]\centering
        \scalebox{0.9}{\begin{tikzpicture}
          \draw[] (1.2,0) -- (8,0);\draw[dashed, ->] (8,0) -- (10,0)
          node[right, orange]{\(W\)}; \filldraw [red] (3,0) circle
          (2pt);
          \filldraw [red] (6.5,0) circle (2pt);
          \draw[->] (1.2,0) -- (1.2,1+0.4);

          \draw[-, red, line width=1pt] (2.85,-0.2) --
          (3.15,-0.2) -- (3.15, 0.4+0.4) -- (4.5, 0.4+0.4) -- (4.5, 0.8+0.4) -- (1.5,0.8+0.4) -- (1.5, 0.4+0.4) -- (2.85, 0.4+0.4) -- (2.85, -0.2); 
          \draw[dashed, <->, orange, line width=1pt] (2.7, 0) -- node[midway,left] {\(\leq\rho\)}  (2.7, 0.8);
          \draw[-, red, line width=1pt] (2.85+3.5,-0.2) --
          (3.15+3.5,-0.2) -- (3.15+3.5, 0.4+0.4) -- (4.5+3.5, 0.4+0.4) -- (4.5+3.5, 0.8+0.4) --
          (1.5+3.5,0.8+0.4) -- (1.5+3.5, 0.4+0.4) -- (2.85+3.5, 0.4+0.4) -- (2.85+3.5, -0.2); 
          \draw (1.7,0 )
        node[cross,cadmiumgreen,line width=1pt] {}; \draw (2.1,0 )
        node[cross,cadmiumgreen,line width=1pt] {}; \draw (2.5,0 )
        node[cross,cadmiumgreen,line width=1pt] {}; \draw (3.4,0 )
        node[cross,cadmiumgreen,line width=1pt] {}; \draw (3.8,0 )
        node[cross,cadmiumgreen,line width=1pt] {}; \draw (4.2,0 )
        node[cross,cadmiumgreen,line width=1pt] {}; \draw (2.2+2.4,0 )
        node[cross,cadmiumgreen,line width=1pt] {}; \draw (2.6+2.4,0 )
        node[cross,cadmiumgreen,line width=1pt] {}; \draw (3.+2.4,0 )
        node[cross,cadmiumgreen,line width=1pt] {}; \draw (3.4+2.4,0 )
        node[cross,cadmiumgreen,line width=1pt] {}; \draw (3.4+2.4+0.4,0 )
        node[cross,cadmiumgreen,line width=1pt] {}; \draw (3.4+2.4+1,0 )
        node[cross,cadmiumgreen,line width=1pt] {}; \draw (3.4+2.4+1.4,0 )
        node[cross,cadmiumgreen,line width=1pt] {}; \draw (3.4+2.4+1.8,0 )
        node[cross,cadmiumgreen,line width=1pt] {};

      \end{tikzpicture}}%
            \caption{Visualization of a portion of the distribution
            \(\mu\) and the hypothesis class \(\SH\) used in~\Cref{thm:repre-par-inter}.~The crosses are the
            mislabeled examples and the circles are correctly labelled
            examples. All the circles are adversarially
            vulnerable to upwards perturbations of magnitude less than \(\rho\).} 
            \label{fig:t-shape-region-app}\vspace{-10pt}
          \end{figure}

We show that the the structure of the T-shaped classifier used
in~\Cref{thm:repre-par-inter} is visible when fitting neural networks to label noise on a tiny dataset.
We sample a three dimensional dataset of points $(X, Y, Z)$ with labels in $\{0, 1\}$ as follows:
\begin{itemize}
  \item
  $X$ is sampled uniformly from the segment $[0, 1]$;
  
  \item
  $Y$ is sampled from a normal distribution $\SN(0, 0.1)$;

  \item
  $Z$ is sampled from a normal distribution $\SN(0, 0.001)$;

  \item
  The label is 1 if $X > 0.5$, and 0 otherwise.

\end{itemize}

We sample $50$ points from this distribution to create the clean dataset. 
To create the noisy dataset, we randomly flip $10\%$ of the labels to generate the noisy
dataset. 

Then we train a one-hidden layer MLP with $1000$ hidden units
using the ADAM optimizer with a learning rate of $0.01$.  
The decision boundary after running this for $350$ epochs with a batch size of $20$
is plotted in~\Cref{fig:ind-bias-exp}. 
All our models interpolate the dataset~(both clean and noisy).

We plot the decision region in the $XY$ plane for three different values
of the Z dimension in~\Cref{fig:ind-bias-exp} for models trained on
the noisy dataset as well as the clean dataset. The first row in a box
corresponds to the model trained on the noisy dataset and the second row corresponds to the model trained on the clean dataset. 
The maroon circles inside the plots are balls of radius $0.04$, 
drawn around the points with label noise, indicating the region of adversarial
vulnerability induced by the points in that plane. 

As visualizations like these are often susceptible to variance due to random seeds, 
we report for three different seeds, denoted as Run 1, Run 2, and Run 3.

To interpret the T-like structure from~\Cref{fig:t-shape-region-app}
in these experiments, note that $\rho=0.04$, so the ``head'' of the
`T' is in the $XY$ plane for \(Z=0.04\). Further, the \(Z=-0.04\) is
essentially the head of the `T' for the other class. In the first rows
in all of the boxes~(i.e. the noisy dataset), note that ``heads of the
Ts'' are almost entirely within the decision region of one of the
classes. 
This indicates that all points on the $XY$ plane at~\(Z=0\) can
be perturbed along the $Z$-axis with a perturbation less than \(0.04\)
to change its predicted label, yielding an adversarial risk of 100\%. 
However, in the \(Z=0\) plane, the region of vulnerability is the union of the 
maroon balls, which is significantly smaller than what is what induced by perturbation
along the Z-dimension. 
This suggests that our intuition with the T-shaped classifiers 
may be relevant in practice for neural networks.

\begin{figure}[t]
\begin{framed}\centering
    \begin{subfigure}[t]{0.8\linewidth}
    \begin{subfigure}[t]{0.99\linewidth}
    \includegraphics[width=0.99\linewidth]{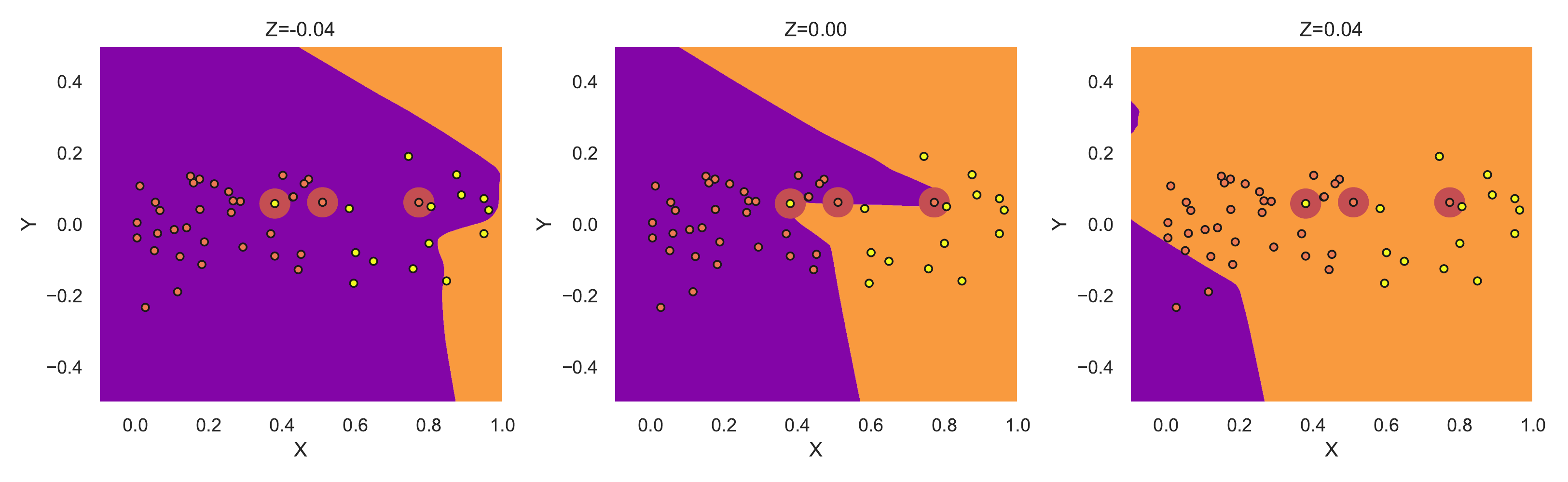}%
    \end{subfigure}\vspace{-2ex}
    \begin{subfigure}[t]{0.99\linewidth}
        \includegraphics[width=0.99\linewidth]{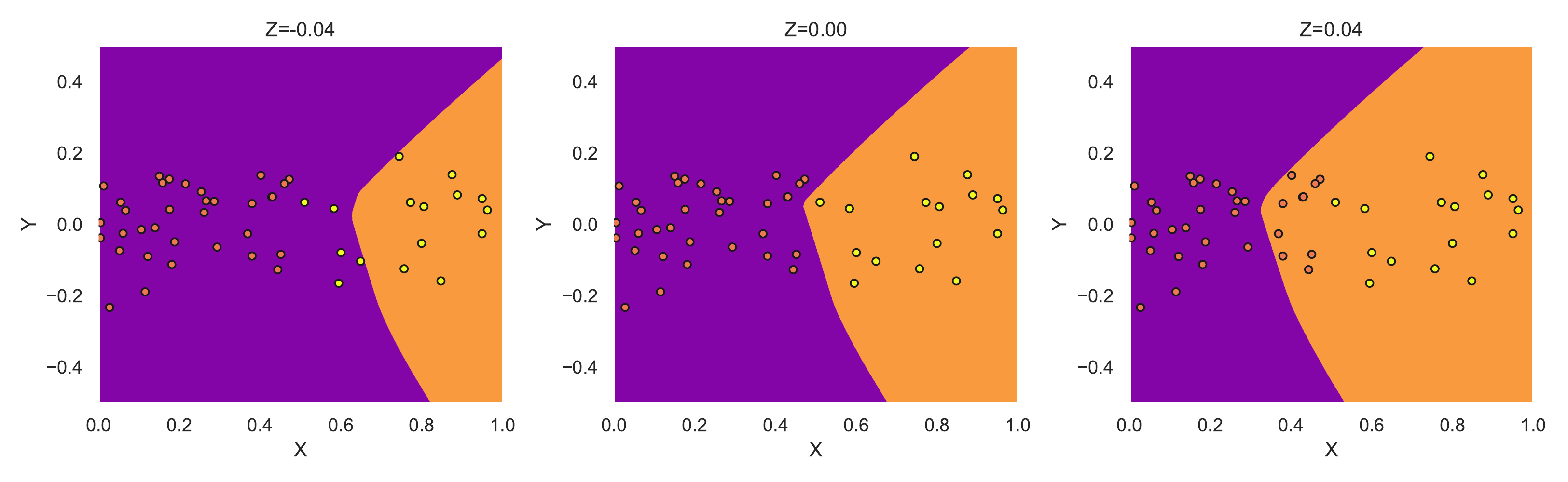}%
        \end{subfigure}\vspace{-1.5ex}
    \end{subfigure}
\end{framed}
\begin{framed}\centering
    \begin{subfigure}[t]{0.8\linewidth}
        \begin{subfigure}[t]{0.99\linewidth}
        \includegraphics[width=0.99\linewidth]{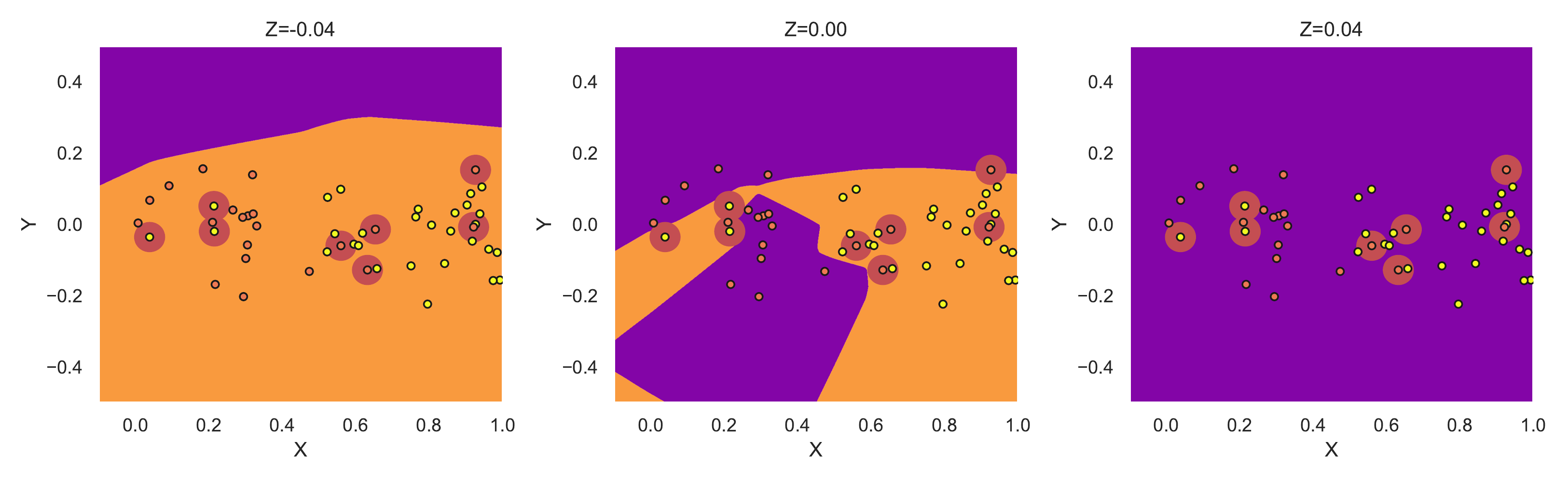}%
        \end{subfigure}\vspace{-2ex}
        \begin{subfigure}[t]{0.99\linewidth}
            \includegraphics[width=0.99\linewidth]{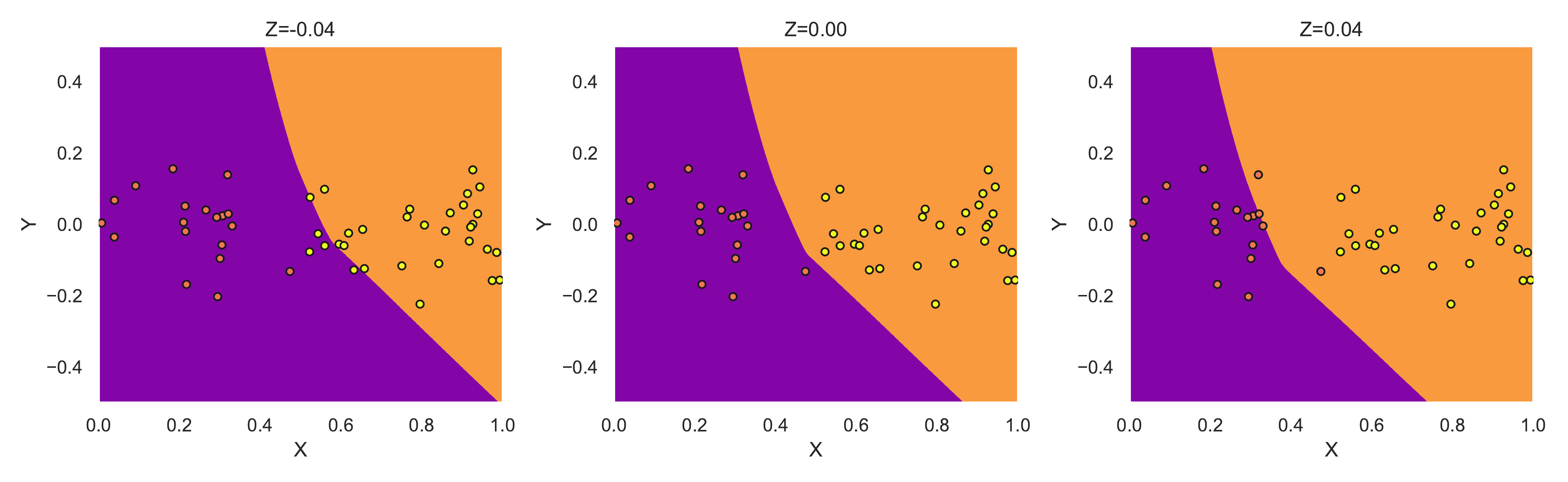}%
            \end{subfigure}\vspace{-1.5ex}
        \end{subfigure}
    \end{framed}
        \begin{framed}\centering
        \begin{subfigure}[t]{0.8\linewidth}
            \begin{subfigure}[t]{0.99\linewidth}
            \includegraphics[width=0.99\linewidth]{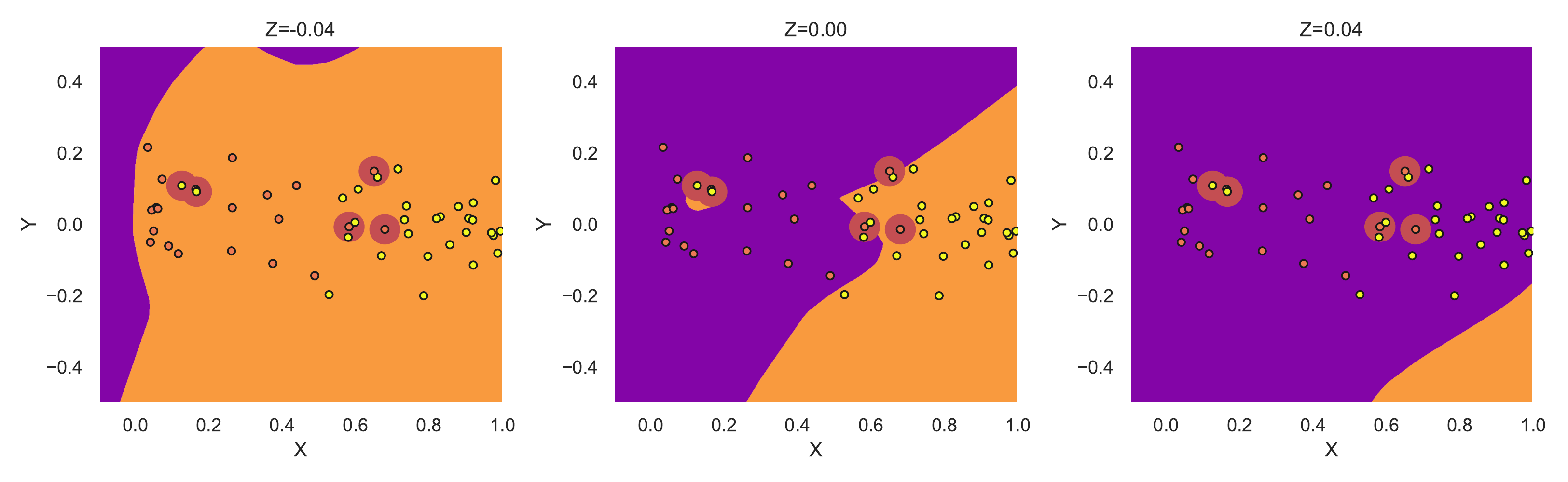}%
            \end{subfigure}\vspace{-2ex}
            \begin{subfigure}[t]{0.99\linewidth}
                \includegraphics[width=0.99\linewidth]{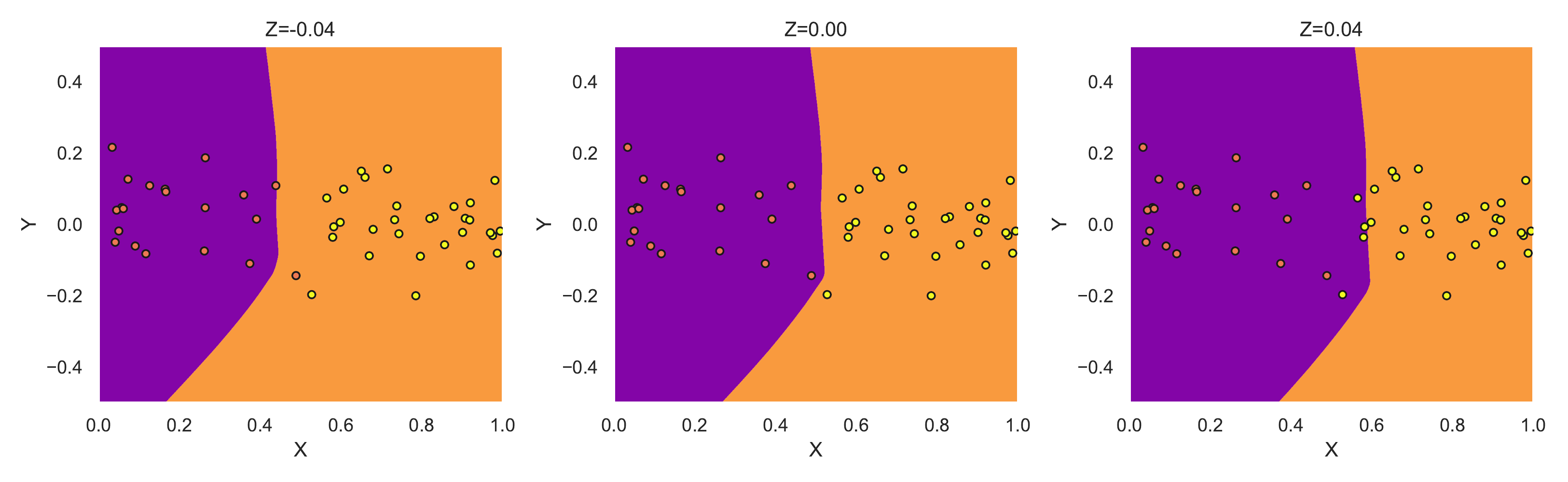}%
                \end{subfigure}\vspace{-1.5ex}
            \end{subfigure}
        \end{framed}
    \label{fig:ind-bias-exp}
    \caption{Each box is a different independent sample of the dataset. 
      \rebuttal{The first row in each box is with label noise, and the second row is without label noise.}
    The three plots in each row
    (\(Z\in\{-0.04,0.00,0.04\}\)) show the decision boundary of
    the interpolating model on the $XY$ plane for different values of $Z$. 
    The~\(Z=0.04\) can be interpreted as the head of the 'T' shaped
    decision region and~\(Z=-0.04\) is similarly an inverted 'T' for
    the other class.  The plots clearly show that when the model
    interpolates label noise, the width of the head of the 'T's are
    significantly more responsible for adversarial vulnerability than
    the decision region in the \(Z=0\) plane.}
\end{figure}

\end{document}